\def\eqref#1{equation~\ref{#1}}
\def\1{\bm{1}}
\DeclareMathAlphabet{\mathsfit}{\encodingdefault}{\sfdefault}{m}{sl}
\SetMathAlphabet{\mathsfit}{bold}{\encodingdefault}{\sfdefault}{bx}{n}
\newtheorem{theorem}{Theorem}
\newtheorem{lemma}{Lemma}
\newcommand\extrafootertext[1]{%
    \bgroup
    \renewcommand\thefootnote{\fnsymbol{footnote}}%
    \renewcommand\thempfootnote{\fnsymbol{mpfootnote}}%
    \footnotetext[0]{#1}%
    \egroup
}
\title{How Do Large Language Models Understand Graph Patterns? A Benchmark for Graph Pattern Comprehension}
\author{Xinnan Dai$^1$,\ \ \ \ \ Haohao Qu$^2$,\ \ \ \ \  Yifei Shen$^3$,\ \ \ \ \  Bohang Zhang$^4$,\ \ \ \ \  Qihao Wen$^1$, \\ \textbf{Wenqi Fan}$^2$,\ \ \ \ \  \textbf{Dongsheng Li}$^3$,\ \ \  \ \ \textbf{Jiliang Tang}$^1$, \ \ \ \ \ \textbf{Caihua Shan}$^3$\\
$^1$Michigan State University,\ \ \  $^2$The Hong Kong Polytechnic University,\ \ \ \\$^3$Microsoft Research,\ \ \ $^4$Independent Researcher\\
\texttt{\{daixinna, wenqihao, tangjili\}@msu.edu}, \texttt{ haohao.qu@connect.polyu.hk},\\
\texttt{\{zhangbohang.pku, wenqifan03\}@gmail.com},\\
\texttt{\{yifeishen, dongsheng.li, caihuashan\}@microsoft.com}
}
\begin{document}

\maketitle

\begin{abstract}
Benchmarking the capabilities and limitations of large language models (LLMs) in graph-related tasks is becoming an increasingly popular and crucial area of research. Recent studies have shown that LLMs exhibit a preliminary ability to understand graph structures and node features.
However, the potential of LLMs in graph pattern mining remains largely unexplored. This is a key component in fields such as computational chemistry, biology, and social network analysis. To bridge this gap, this work introduces a comprehensive benchmark to assess LLMs' capabilities in graph pattern tasks. We have developed a benchmark that evaluates whether LLMs can understand graph patterns based on either terminological or topological descriptions. Additionally, our benchmark tests the LLMs' capacity to autonomously discover graph patterns from data. The benchmark encompasses both synthetic and real datasets, and a variety of models, with a total of 11 tasks and 7 models. Our experimental framework is designed for easy expansion to accommodate new models and datasets. Our findings reveal that: (1) LLMs have preliminary abilities to understand graph patterns, with O1-mini outperforming in the majority of tasks; (2) Formatting input graph data to align with the knowledge acquired during pretraining can enhance performance; (3) LLMs employ diverse potential algorithms to solve one task, with performance varying based on their execution capabilities. Our dataset and implementations are available at \url{https://github.com/DDigimon/GraphPattern}.

\end{abstract}

\extrafootertext{Correspondence to: Caihua Shan (\href{mailto:caihuashan@microsoft.com}{caihuashan@microsoft.com}; 
Jiliang Tang (\href{mailto: tangjili@msu.edu}{tangjili@msu.edu}).}

\section{Introduction}

Originally trained on textual data, LLMs have demonstrated remarkable success in various tasks, such as reading comprehension and text reasoning~\citep{achiam2023gpt,touvron2023llama}. To evaluate whether LLMs can adapt the text understanding ability across graphs, several studies have investigated this at both the feature and structural levels~\citep{graphtext,Graph-LLM}. Specifically, LLMs have been shown to enhance node features on graphs~\citep{ye2023natural,huang2024can,chen2024exploring}. Additionally, graph structure understanding tasks, such as shortest path and connectivity, have also been evaluated~\citep{gpt4graph,NLGraph}.

Graph patterns, a key aspect of graphs, have yet to be thoroughly explored. Mining graph patterns play a critical role in numerous real-world applications. For instance, they aid in uncovering new insights within biological protein-protein interaction networks~\citep{hu2005mining, tran2013counting}, identifying key molecular structures in chemistry~\citep{murray2009rise}, and detecting fraudulent activities in transaction networks~\citep{cheng2020graph}, and so on. In addition, these patterns represent fundamental transferable structures among graphs, such as community groups for friendship recommendations in social networks~\citep{wu2022clare} and functional groups for molecular property prediction~\citep{agarwal2023evaluating}. Therefore, it is of great interest to investigate whether LLMs possess the capability to comprehend these patterns and effectively apply them to a variety of graph mining and learning tasks.

In this work, we begin by exploring how LLMs handle various types of graph patterns and how they can be applied to real-world applications. Specifically, we categorize the descriptions of graph patterns into three types: \textbf{terminology-based}, \textbf{topology-based}, and \textbf{data-driven}, as shown in Table~\ref{intro}. Based on these descriptions, we progressively challenge LLMs' abilities in graph pattern tasks:



\begin{table}[ht!]
    \centering

\caption{Three types of descriptions. `Square' is defined by terminology and topology to outline its structure. In a data-driven description, 'Square' frequently appears as a common pattern in the data. }
\vspace{-0.2cm}
\begin{tabular}{lll}
\toprule
Terminology-based & Topology-based & Data-driven \\
\midrule
\begin{tabular}[c]{@{}l@{}}
The pattern, named Square, \\
is a 4-node cycle with each \\
node connected to exactly \\
two others.
\end{tabular} & \begin{tabular}[c]{@{}l@{}}
The pattern, defined as G, is an \\ 
undirected graph with four Node  \\
0, 1, 2, 3. Node 0 is connected \\
to Nodes 1 and 2. Node 1 is \\
connected to Nodes 2 and 3.
\end{tabular} &  \raisebox{-0.8cm}{\includegraphics[width=0.13\textwidth]{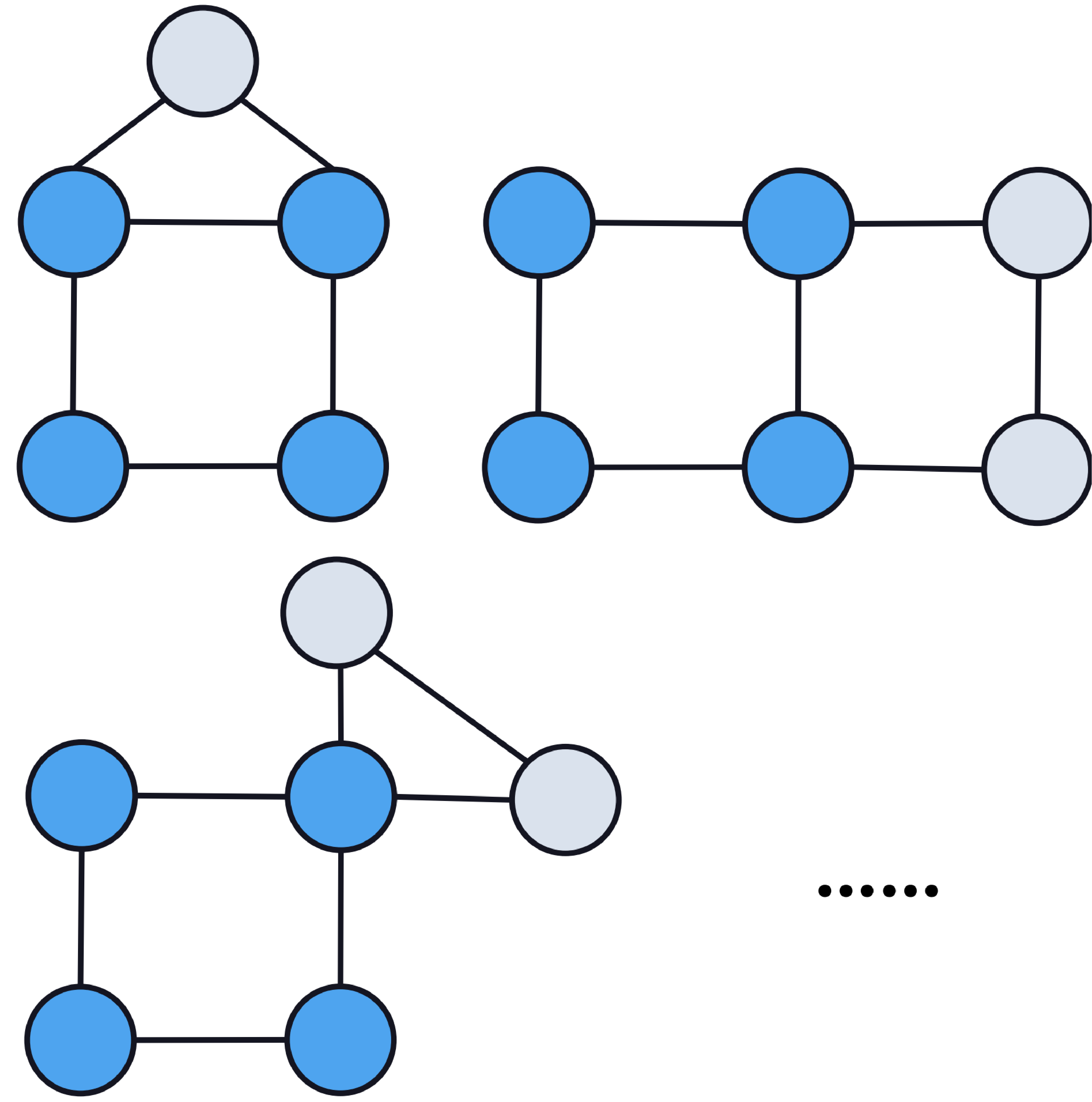}} \\  
\bottomrule
\end{tabular}
\vspace{-0.2cm}
\label{intro}
\end{table}

\textbf{Can LLMs understand graph patterns based on terminology-based descriptions? (Section \ref{sec:text})} Given that LLMs are pre-trained on extensive internet datasets, which include passages relevant to graph terminologies, it is plausible to hypothesize that they possess a rudimentary understanding of simple graph patterns as described in terminology, such as 'triangle' or 'diamond'. To evaluate this, we have devised three subtasks aimed at progressively testing LLMs' comprehension of terminologically defined patterns. In detail, we first examine the alignment between LLMs' and human understanding of the same pattern, and then assess whether LLMs can follow human instructions to modify and detect these patterns.

\textbf{Can LLMs understand graph patterns based on topology-based descriptions? (Section \ref{sec:topology})} While certain graph patterns can be succinctly described using natural language, more complex patterns often necessitate representation through adjacency lists or edge lists. As LLMs are not trained for permutation invariance, the same pattern, when described with different node ID sequences, may result in varying understanding by LLMs.
Thus, to examine LLMs' consistency in recognizing identical patterns, we first require LLMs to perform pattern mapping through isomorphic identification. Following this, we assess their ability to edit and extract topology-based patterns, similar to their handling of terminology-based patterns.


\textbf{Can LLMs automatically discover graph patterns from data? (Section \ref{sec:extraction})} The benchmarks previously discussed assess LLMs' comprehension of rule-based predefined graph patterns. These patterns, initially identified by human experts, represent a foundational level of understanding. To further probe the depth of LLMs' capability, we check their ability to independently mine graph patterns from a provided dataset. We explicitly prompt certain key information to constrain and guide the data-driven patterns, such as detecting dense substructures like k-core, finding the frequent subgraphs, or identifying the distinctive patterns based on the labels of input graphs. 

\textbf{Can LLMs deal with graph patterns in real-world applications? (Section \ref{sec:real})}
Compared with synthetic datasets, real-world datasets contain node and edge attributes and inevitable noise within graphs, increasing the difficulty for LLMs to comprehend. We gather ten real-world datasets across diverse domains, including molecules, social networks, bioinformatics, and computer vision, and perform similar tasks such as pattern detection, to assess their ability.


Based on the extensive experimental results from various tasks, we provide theoretical intuitions and summarize three key empirical insights on the capacity of LLMs for graph patterns in Section~\ref{sec:discussions}.

In conclusion, this paper provides an initial yet comprehensive study of LLMs' understanding on graph patterns, aiming to further improve performance in graph learning and mining tasks, and enhance graph reasoning skills.

\vspace{-0.2cm}
\section{Benchmark Settings}

\textbf{Primitive graph patterns.} We select 9 primitive graph patterns with varying numbers of nodes, edges and connection types, commonly used in network analysis~\citep{cheng2008fast,ying2019gnnexplainer,li2021analyzing}. Specifically, we include 5 undirected graph patterns: triangle, tailed-triangle (T-triangle), square, diamond, and house. For directed graph patterns, we have V-structure (V-S), feedforward loop (FFL), feedback loop (FBL), and directed-diamond (D-diamond). The detailed definitions are illustrated in Appendix Table~\ref{pattern_illustration}.
These graph patterns serve as key examples in constructing datasets and testing LLMs' ability in terminology-based and topology-based tasks.

\textbf{Datasets.} For comprehensive benchmarking, we consider multiple factors to construct diverse datasets. We design the terminology-based and topology-based descriptions for patterns. The topology of patterns can be represented using either the adjacency list (A.L.) or edge list (E.L.) format. Furthermore, certain tasks, such as modification and detection, require input graphs beyond the patterns themselves. We randomly generate varying input graph sizes: small (S) with 5–15 nodes, medium (M) with 15–25 nodes, and large (L) with 25–35 nodes, to control the task difficulty. These input graphs are also described in either either adjacency list or edge list format.
In addition to the synthetic datasets, we also utilize real-world application attributed graphs from various domains, including molecules, information networks, social networks, and computer visions. We summarize the datasets used and the prompts designed for each task in Appendix Table~\ref{prompt_table}.



\textbf{Large Language Models.}
Since different LLMs may have varying task performance, we evaluate several widely used models, including GPT-4 (0125-Preview), GPT-4o (2024-05-13), Mixtral (open-mixtral-8x22b), Llama (llama3.1-401B-Instruct), Gemini (gemini-1.5-pro), Claude (claude-3-opus-20240229) and O1-mini (o1-mini-2024-09-12). We set temperatures to 0 for all the models.

\section{Terminology-based Patterns}\label{sec:text}

In this section, we first introduce a pattern translation task to assess the alignment between LLMs' comprehension and human understanding of terminology-based graph patterns. Then, we design graph modification and pattern detection tasks to evaluate LLMs' ability to either modify the input graph to include specific patterns, or identify all patterns present in the input graph.

\subsection{Pattern Translation}\label{sec:translation}

\label{graph-generation}
We study whether LLMs have an accurate understanding of terminology-based patterns.
Given the terminology of a primitive graph pattern, we prompt LLMs to create a graph including this pattern. Besides, we also require that the resulting graph is constrained by a specified number of nodes and pattern occurrences. One prompt example  is ``Generate a graph that includes only one triangle, with a total of 20 nodes. Each node should be connected." 
For evaluation, we use accuracy (ACC) to assess whether the output meets the requirement. In addition, to evaluate the creativity of LLMs, we measure diversity of generated graphs by comparing each pair to ascertain their uniqueness. The diversity score (DIV) is defined as $\mathrm{DIV}=\frac{ \mathrm{\# Different\ pairs}}{\mathrm{\# All\ pairs}}$.

\begin{table}[ht!]
\vspace{-0.3cm}
    \centering
    \caption{The accuracy and diversity score for pattern translation }
        \vspace{-0.2cm}
\resizebox{\textwidth}{!}{
    \setlength{\tabcolsep}{1.8pt}
\begin{tabular}{l|cccccccccc|cccccccc|cc}
\toprule
\multirow{3}{*}{ } & \multicolumn{10}{c}{Undirected pattern} & \multicolumn{8}{c}{Directed pattern} & \multicolumn{2}{c}{\multirow{2}{*}{ }} \\
\midrule
& \multicolumn{2}{c}{Triangle} & \multicolumn{2}{c}{T-Triangle} & \multicolumn{2}{c}{Square} & \multicolumn{2}{c}{Diamond} & \multicolumn{2}{c}{House} & \multicolumn{2}{c}{V-S} & \multicolumn{2}{c}{FBL} & \multicolumn{2}{c}{FFL} & \multicolumn{2}{c}{D-Diamond} & \multicolumn{2}{c}{} \\
& ACC & DIV & ACC & DIV & ACC & DIV & ACC & DIV & ACC & DIV  & ACC & DIV & ACC & DIV & ACC & DIV  & ACC & DIV  & Avg. A & Avg. D \\
\midrule
Mixtral &\textbf{1.00} & 0.00 & 0.00 & 0.00 & 0.00 & 0.00 & \textbf{1.00} & 0.25 & 0.00 & 0.00 & 0.00 & 0.00 & \textbf{1.00} & 0.11 & 0.02 & 0.00 & \textbf{1.00} & 0.04 & 0.45 & 0.04 \\
Gemini & 0.52 & \textbf{0.82} & 0.46 & 0.32 & 0.24 & 0.90 & 0.84 & \textbf{0.88} & 0.50 & 0.23 & 0.78 & 0.76 & 0.90 & 0.89 & 0.90 & 0.73 & 0.94 & \textbf{0.91} & 0.67 & 0.71 \\
Claude & \textbf{1.00} & 0.61 & 0.02 & 0.00 & 0.24 & 0.67 & 0.78 & 0.58 & \textbf{0.88} & 0.62 & 0.20 & 0.53 & \textbf{1.00} & 0.29 & 0.90 & 0.42 & 0.84 & 0.42 & 0.65 & 0.46 \\
Llama & \textbf{1.00} & 0.48 & 0.00 & 0.00 & 0.86 & 0.75 & \textbf{1.00} & 0.80 & 0.12 & 0.80 & 0.00 & 0.00 & \textbf{1.00} & \textbf{0.90} & \textbf{0.98} &0.77 & \textbf{1.00} & 0.00 & 0.66 & 0.50 \\
GPT-4 & \textbf{1.00} & 0.12 & 0.35 & \textbf{0.75} & 0.98 & 0.33 & 0.92 & 0.76 & 0.86 & 0.80 & 0.42 & 0.80 & \textbf{1.00} & 0.28 & \textbf{0.98} & 0.43 & \textbf{1.00} & 0.54 & 0.83 & 0.53 \\
GPT-4o & 0.98 & 0.29 & \textbf{0.92} & 0.46 & 0.74 & 0.80 & 0.82 & 0.77 & 0.68 & 0.41 & \textbf{0.88} & 0.05 & \textbf{1.00} & 0.64 & 0.82 & \textbf{0.81} & 0.68 & 0.54 & 0.84 & 0.53 \\
O1-mini & \textbf{1.00} & 0.79 & 0.62 & 0.35 & \textbf{1.00} & 0.80 & 0.74 & 0.87 & 0.66 & \textbf{0.95} & 0.82 & \textbf{0.85} & 0.98 & 0.75 & 0.96 & 0.80 & 0.98 &0.84 & \textbf{0.86} & \textbf{0.78}  \\\midrule
AVG.& 0.93 & 0.44 & 0.34 & 0.32 & 0.27 & 0.63 & 0.87 & 0.69 & 0.53 & 0.54 & 0.44 & 0.49 & 0.98 & 0.55 & 0.79 & 0.57 & 0.92 & 0.47 & 0.71 & 0.51 \\
\bottomrule
\end{tabular}
}
\label{Generation}
\end{table}

We repeat experiments 50 times for each pattern and LLM, and present the results in Table~\ref{Generation}. 
Most LLMs effectively understand primitive graph patterns, demonstrating their ability to translate terminology-based descriptions into graph structures. Among the models tested, O1-mini stands out with the highest ACC and DIV on average. 
LLMs perform badly in T-triangle and V-S patterns because LLMs prefer to add one extra edge to these patterns to make them into other unintended patterns. 
In addition, LLMs show biased outputs, resulting in low DIV scores for simple patterns. For instance, when processing triangle patterns, GPT-4o often outputs a simple graph containing a triangle with a chain.

\subsection{Graph Modification}
\label{text-based-graph-edit}
In graph modification, we construct a dataset of input graphs, each including a specific primitive pattern, and then prompt LLMs to transform one pattern into another. 
Specifically, we define the following modifications: Square to House (S $\rightarrow$ H), Square to Diamond (S $\rightarrow$ D), Diamond to Square (D $\rightarrow$ S), and FFL to FBL (F $\rightarrow$ B). They contain edit operations like adding several edges, removing one edge, or changing the direction of an edge. 
A prompt example is ``Modify the input graph to include a house pattern. The input graph G is $\dots$. The number of modified edges should be minimized.''
We evaluate if LLMs modify graphs to include the target pattern by the success rate.

The results are presented in Table~\ref{text_edit}. We observe that O1-mini stands out in its ability to edit diverse terminology-based patterns. Additionally, the scale of the input graphs generally doesn't have a major impact. This is because LLMs generally prioritize high-degree nodes and their neighbors to form the pattern. In larger graphs, LLMs tend to identify more regions for potential edits. These regions are local and invariant to the graph size.

\begin{table}[ht!]
    \centering
    \caption{The success rate for terminology-based graph modification
    }
   \vspace{-0.2cm}
\resizebox{\textwidth}{!}{
    \setlength{\tabcolsep}{2pt}
    \begin{tabular}{l|lccc|ccc|ccc|ccc|ccc|ccc|ccc}
    \toprule
\multicolumn{2}{c}{\multirow{2}{*}{ }} & \multicolumn{3}{c}{Gemini} & \multicolumn{3}{c}{Mixtral} & \multicolumn{3}{c}{Llama} & \multicolumn{3}{c}{Claude} & \multicolumn{3}{c}{GPT-4} & \multicolumn{3}{c}{GPT-4o} & \multicolumn{3}{c}{O1-mini} \\
\multicolumn{2}{c}{} & S. & M. & L. & S. & M. & L. & S. & M. & L. & S. & M. & L. & S. & M. & L. & S. & M. & L. & S. & M. & L. \\
\midrule
\multirow{2}{*}{ S $\rightarrow$ H } & A.L. & 0.06 & 0.28 & 0.20 & 0.32 & 0.58 & \textbf{0.64} & 0.34 & 0.48 & 0.40 & 0.26 & 0.50 & 0.48 & 0.14 & 0.12 & 0.14 & 0.28 & 0.36 & 0.44 & \textbf{0.38} & \textbf{0.64} & 0.58 \\
& E.L & 0.13 & 0.36 & 0.40 & 0.31 & 0.48 & \textbf{0.72} & 0.26 & 0.48 & 0.54 & 0.28 & 0.48 & 0.14 & 0.09 & 0.18 & 0.14 & 0.34 & 0.46 & 0.58 & \textbf{0.37} & \textbf{0.74} & 0.64 \\
\multirow{2}{*}{ S $\rightarrow$ D } & A.L. & 0.53 & 0.48 & 0.32 & 0.77 & 0.64 & 0.62 & 0.50 & 0.32 & 0.50 & 0.74 & 0.50 & 0.26 & 0.18 & 0.16 & 0.20 & 0.71 & 0.68 & 0.54 & \textbf{0.95} & \textbf{0.98} & \textbf{0.94} \\
& E.L & 0.53 & 0.28 & 0.22 & 0.82 & 0.76 & 0.86 & 0.42 & 0.36 & 0.44 & 0.78 & 0.64 & 0.34 & 0.42 & 0.38 & 0.32 & 0.79 & 0.72 & 0.64 & \textbf{0.97} & \textbf{0.92} & \textbf{0.96} \\
\multirow{2}{*}{ D $\rightarrow$ S } & A.L. & 0.27 & 0.42 & 0.28 & 0.29 & 0.28 & 0.40 & 0.51 & 0.70 & 0.64 & 0.54 & 0.84 & 0.68 & 0.15 & 0.12 & 0.24 & 0.41 & 0.44 & 0.28 & \textbf{0.77} & \textbf{0.72} & \textbf{0.88} \\
& E.L & 0.15 & 0.52 & 0.48 & 0.29 & 0.26 & 0.32 & 0.43 & 0.56 & 0.40 & 0.38 & 0.64 & 0.52 & 0.14 & 0.24 & 0.32 & 0.36 & 0.28 & 0.20 & \textbf{0.78} & \textbf{0.82} & \textbf{0.88} \\
\multirow{2}{*}{ F $\rightarrow$ B } & A.L. & 0.30 & 0.58 & 0.52 & 0.29 & 0.56 & 0.48 & 0.29 & 0.60 & 0.50 & 0.36 & 0.56 & 0.62 & 0.26 & 0.42 & 0.46 & 0.18 & 0.24 & 0.20 & \textbf{0.40} & \textbf{0.76} & \textbf{0.74} \\
& E.L & 0.22 & 0.42 & 0.44 & 0.28 & 0.52 & 0.36 & 0.28 & 0.58 & 0.50 & 0.24 & 0.56 & 0.34 & 0.15 & 0.40 & 0.16 & 0.18 & 0.18 & 0.22 & \textbf{0.34} & \textbf{0.76} & \textbf{0.64} \\
\bottomrule
\end{tabular}
}
\vspace{-0.3cm}
 \label{text_edit}
\end{table}

\subsection{Pattern Detection}\label{sec:text_detection}

In the pattern detection task, we prompt LLMs to detect specific primitive graph patterns in the input graph. We randomly generate thousands of non-isomorphic input graphs, with three graph scales: small (5–15 nodes), medium (15–25 nodes), and large (25–35 nodes). 
The example prompt is ``Identify the occurrence of the given pattern in the input graph. The pattern is a diamond, defined as a 4-node motif with five edges. The input graph is $\dots$." For each pattern, we calculate the F1 score to evaluate LLM performance in terms of precision and recall.

The results for small and medium scales are shown in Figure~\ref{text_graph_detection}, with the full results provided in Appendix Table~\ref{text_detection_whole}.
Overall, the performance of most LLMs declines as the pattern becomes more complex and the input graph grows larger. This is because LLMs must examine every pair of nodes to determine whether they align with the pattern.
Notably, O1-mini is less affected by the graph scale, as its step-by-step approach may help mitigate this issue.

\begin{figure}
\vspace{-1cm}
    \centering
    \includegraphics[width=1\linewidth]{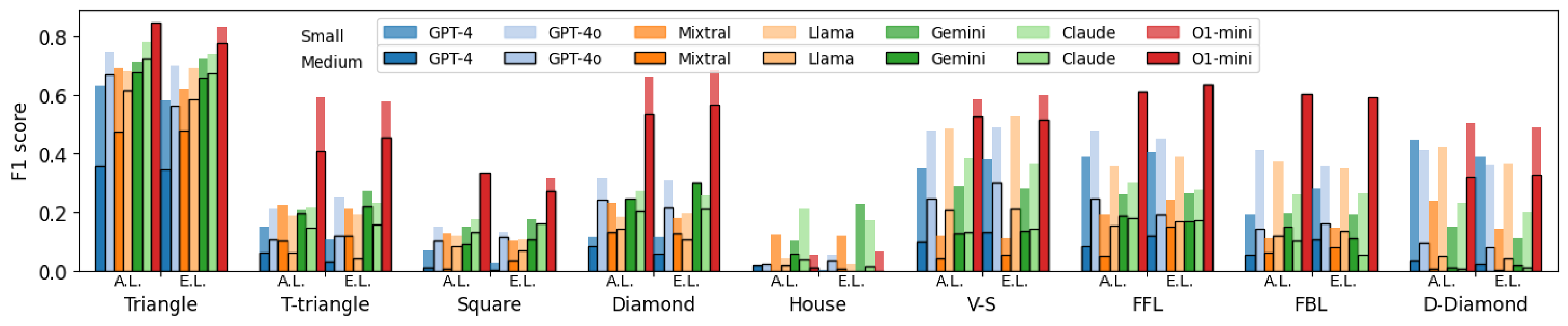}
    \caption{The F1 score for terminology-based pattern detection (small and medium scale)}
    \label{text_graph_detection}
    \vspace{-0.3cm}
\end{figure}



\section{Topology-based Pattern}\label{sec:topology}

In this section, we first perform pattern isomorphic mapping to examine LLMs' understanding of topology-based graph patterns. Subsequently, we conduct graph modification and pattern detection tasks to further evaluate LLMs' capabilities with various patterns and graph scales.

\begin{table}[ht!]
    \centering
    \setlength{\belowcaptionskip}{-0.5cm}
    \setlength{\abovecaptionskip}{0.2cm}
     \caption{The accuracy for isomorphic mapping}   
\resizebox{\textwidth}{!}{
    \begin{tabular}{l|cccccccccccccc}
\toprule
\multirow{2}{*}{ } & \multicolumn{2}{c}{GPT-4} & \multicolumn{2}{c}{GPT-4o} & \multicolumn{2}{c}{Mixtral} & \multicolumn{2}{c}{Llama} & \multicolumn{2}{c}{Gemini} & \multicolumn{2}{c}{Claude} & \multicolumn{2}{c}{O1-mini} \\

& A.L. & E.L & A.L. & E.L & A.L. & E.L & A.L. & E.L & A.L. & E.L & A.L. & E.L & A.L. & E.L \\
\midrule
Small & 0.30 & 0.83 & 0.30 & 0.94 & 0.20 & 0.93 & 0.42 & \textbf{0.96} & 0.22 & 0.39 & 0.41 & 0.94 & \textbf{0.77} & 0.66 \\
Medium & 0.08 & 0.34 & 0.00 & 0.82 & 0.00 & 0.64 & 0.24 & 0.86 & 0.00 & 0.14 & 0.18 & \textbf{0.92} & \textbf{0.32} & 0.28 \\
Large & 0.06 & 0.14 & 0.00 & 0.94 & 0.04 & 0.10 & 0.14 & 0.68 & 0.00 & 0.02 & \textbf{0.20} & \textbf{1.00} & 0.00 & 0.32 \\
\bottomrule
\end{tabular}
    \label{iso_table}
    \vspace{-1cm}
    }
\end{table}
\subsection{Graph Isomorphic Mapping}\label{sec:iso}


In a topology-based description, isomorphic graphs can appear different due to the distinct naming of nodes. Thus, LLMs may mistakenly recognize isomorphic graphs as distinct entities. In this task, we test whether LLMs find a one-to-one correspondence (bijection) between the node sets of two isomorphic graphs. We reuse the graph datasets in the terminology-based pattern detection task. For each graph, the original node IDs range from 0 to 35. We shuffle and relabel the node IDs to span from 100 to 135 to create the isomorphic graphs. Each sample is guaranteed to have at least one valid mapping solution. We employ accuracy as the evaluation metric, measuring how effectively LLMs map node IDs correctly between two graphs.

\begin{wrapfigure}{r}{0.35\textwidth} 
  \centering
  \includegraphics[width=0.3\textwidth]{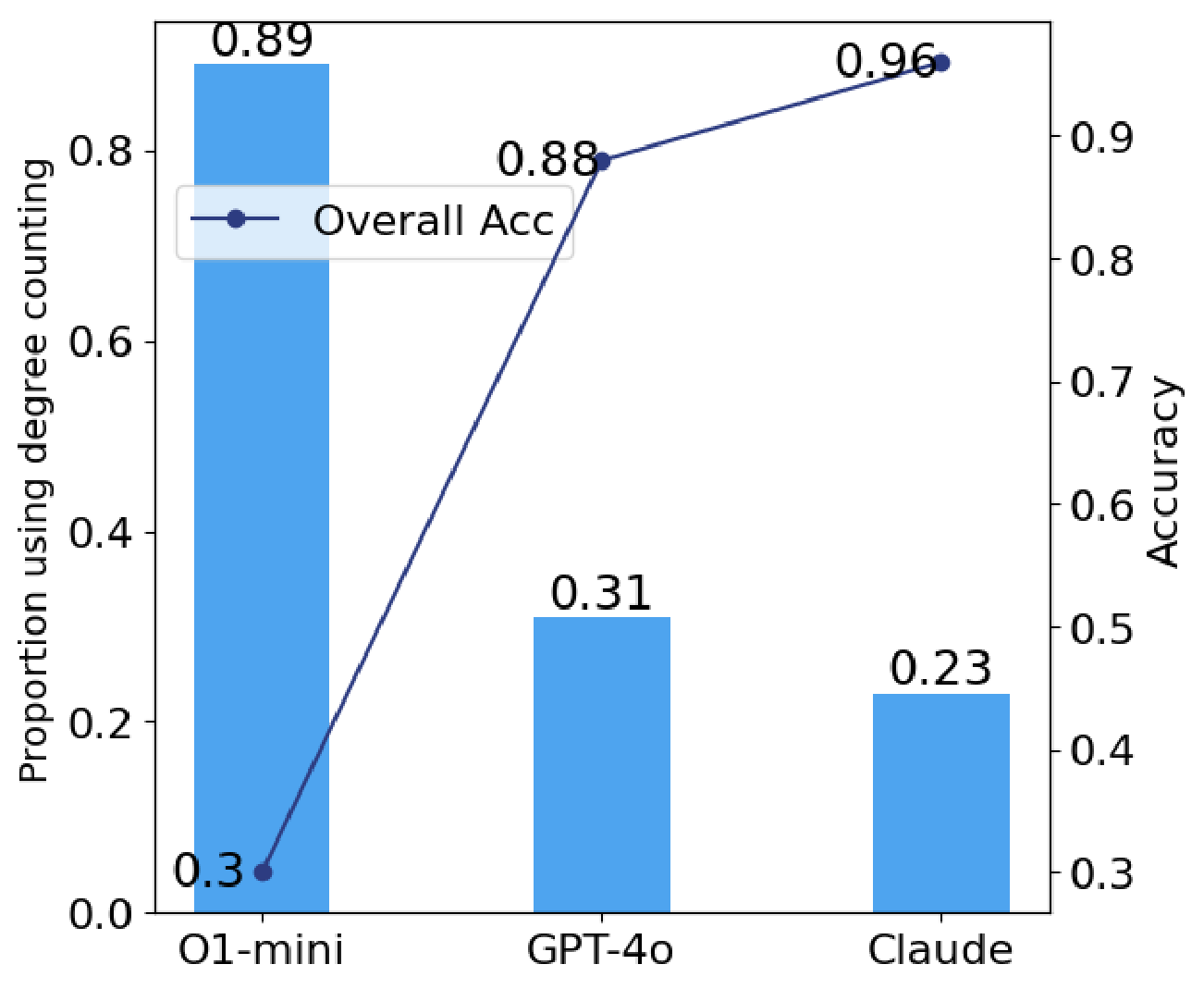}
  \caption{The influence of underlying algorithms used in pattern isomorphic mapping }
  \label{iso_ana}
\end{wrapfigure}

The results, presented in Table~\ref{iso_table}, show that most LLMs perform well on small datasets, but their performance degrades as the graph scale increases. However, GPT-4o and Claude still maintain high accuracy using the edge list format. Consequently, we select medium- and large-scale graphs for further analysis. Upon manually reviewing the output text from GPT-4o, Claude, and O1-mini, we found that there exist two algorithms to help LLMs reason whether the graphs are isomorphic. 
One approach involves first calculating node degrees and then mapping nodes with matching degrees, while the other compares the edge list directly. Due to the LLMs' inherent difficulty with counting, results based on degree counting tend to be less reliable. 
As shown in Figure~\ref{iso_ana}, O1-mini uses the degree counting approach with the proportion 89\%. Only about 30\% of the outputs from Claude and GPT-4o use degree counting, with the majority relying on edge comparison. As LLMs are not good at counting, the accuracy of Claude and GPT-4o is significantly higher than O1-mini.


\subsection{Graph Modification}
We use the same setting as that in the terminology-based pattern detection, except the descriptions are now replaced with topology-based descriptions. Note that since the graph nodes are represented by numerical IDs, we use uppercase letters for pattern nodes to prevent confusion for LLMs. The results are shown in Table~\ref{topo_graph_edit}.
Similar to the terminology-based case, O1-mini still delivers the best performance, and the graph scale has little impact on the outcome. However, the average accuracy for topology-based descriptions is lower than that for terminology-based ones, likely due to increased hallucinations.
For instance, when modifying diamond graphs to include square patterns using the edge list format, the presence of squares within diamond structures prevents LLMs from exhibiting any change.

\begin{table}[ht!]
    \centering
    \setlength{\abovecaptionskip}{0.1cm}
    \caption{The success rate for topology-based graph modification}
\resizebox{\textwidth}{!}{
    \setlength{\tabcolsep}{2pt}
\begin{tabular}{l|lccc|ccc|ccc|ccc|ccc|ccc|ccc}
\toprule
\multicolumn{2}{c}{\multirow{2}{*}{ }} & \multicolumn{3}{c}{Gemini} & \multicolumn{3}{c}{Mixtral} & \multicolumn{3}{c}{Llama} & \multicolumn{3}{c}{Claude} & \multicolumn{3}{c}{GPT-4} & \multicolumn{3}{c}{GPT-4o} & \multicolumn{3}{c}{O1-mini} \\
\multicolumn{2}{c}{} & S. & M. & L. & S. & M. & L. & S. & M. & L. & S. & M. & L. & S. & M. & L. & S. & M. & L. & S. & M. & L. \\
\midrule
\multirow{2}{*}{ S $\rightarrow$ H } & A.L. & 0.20 & 0.34 & 0.20 & \textbf{0.78} & 0.58 & 0.66 & 0.57 & 0.52 & 0.48 & 0.45 & 0.64 & 0.44 & 0.34 & 0.56 & 0.56 & 0.67 & 0.70 & 0.68 & 0.61 & \textbf{0.82} & \textbf{0.80} \\
& E.L & 0.22 & 0.42 & 0.26 & \textbf{0.66} & \textbf{0.68} & 0.54 & 0.45 & 0.50 & 0.48 & 0.38 & 0.54 & 0.42 & 0.24 & 0.60 & 0.52 & 0.36 & 0.52 & 0.60 & 0.46 & \textbf{0.68} & \textbf{0.74} \\
\multirow{2}{*}{ S $\rightarrow$ D } & A.L. & 0.45 & 0.34 & 0.32 & 0.78 & 0.66 & 0.60 & 0.65 & 0.50 & 0.74 & 0.69 & 0.62 & 0.60 & 0.48 & 0.56 & 0.40 & 0.75 & 0.68 & 0.66 & \textbf{0.82} & \textbf{0.76} & \textbf{0.86} \\
& E.L & 0.48 & 0.40 & 0.52 & 0.67 & 0.70 & 0.62 & 0.69 & 0.46 & 0.62 & 0.72 & 0.74 & 0.74 & 0.67 & 0.62 & 0.20 & 0.67 & 0.68 & 0.74 & \textbf{0.88} & \textbf{0.86} & \textbf{0.84} \\
\multirow{2}{*}{ D $\rightarrow$ S } & A.L. & 0.12 & 0.24 & 0.12 & \textbf{0.58} & 0.42 & 0.60 & 0.29 & 0.26 & 0.40 & 0.12 & 0.26 & 0.36 & 0.16 & 0.48 & 0.40 & 0.41 & 0.40 & 0.48 & 0.55 & \textbf{0.56} & \textbf{0.64} \\
& E.L & 0.11 & 0.32 & 0.32 & \textbf{0.40} & 0.32 & 0.44 & 0.29 & 0.30 & 0.20 & 0.11 & 0.22 & 0.48 & 0.06 & 0.30 & 0.20 & 0.20 & 0.18 & \textbf{0.36} & 0.24 & \textbf{0.36} & 0.28 \\
\multirow{2}{*}{ F $\rightarrow$ B } & A.L. & 0.28 & 0.38 & 0.50 & 0.52 & 0.40 & 0.64 & 0.55 & 0.68 & 0.70 & 0.37 & 0.54 & 0.40 & 0.48 & 0.68 & 0.76 & \textbf{0.57} & \textbf{0.80} & \textbf{0.80} & 0.41 & 0.76 & 0.68 \\
& E.L & 0.20 & 0.24 & 0.32 & 0.34 & 0.52 & 0.40 & 0.41 & 0.50 & 0.44 & 0.29 & 0.44 & 0.34 & 0.27 & 0.46 & 0.48 & \textbf{0.44} & 0.62 & 0.52 & 0.32 & \textbf{0.66} & \textbf{0.66} \\
\bottomrule
\end{tabular}
}

    \label{topo_graph_edit}
\end{table}

\setlength{\abovecaptionskip}{0pt}

\vspace{-0.1cm}
\subsection{Pattern detection}\label{sec:topo_detection}

We use the same setting as that in the terminology-based pattern detection, except the pattern descriptions are now topology-based descriptions. The results are shown in Figure~\ref{topo_pattern_ditect}, with full results provided in Appendix Table~\ref{topology_detection_whole}.
The performance of LLMs is also influenced by their capabilities, graph scales, and the complexity of patterns. For instance, while O1-mini achieves a high F1 score in detecting tailed-triangle, square and diamond patterns, other models only reach an F1 score of 20\%. Moreover, although O1-mini performs well on small-scale graphs, it fails on medium-scale datasets. Among the patterns, the house pattern is particularly difficult to detect, as it is the most complex pattern in our setting.



\begin{table}[ht!]
    \centering    
    \caption{The precision of nodes in k-core detection}
\resizebox{\textwidth}{!}{

    \begin{tabular}{l|cccccccccccccc}
    \toprule
\multirow{2}{*}{ } & \multicolumn{2}{c}{GPT-4} & \multicolumn{2}{c}{GPT-4o} & \multicolumn{2}{c}{Mixtral} & \multicolumn{2}{c}{Llama} & \multicolumn{2}{c}{Gemini} & \multicolumn{2}{c}{Claude} & \multicolumn{2}{c}{O1-mini} \\
& A.L. & E.L & A.L. & E.L & A.L. & E.L & A.L. & E.L & A.L. & E.L & A.L. & E.L & A.L. & E.L \\
\midrule
Small & \textbf{0.63} & 0.62 & 0.61 & 0.62 & 0.62 & 0.61 &\textbf{0.63} & \textbf{0.63} & 0.61 & 0.60 & \textbf{0.63} & \textbf{0.63} &  \textbf{0.63} & \textbf{0.63} \\
Medium & 0.73 & 0.62 & \textbf{1.00} & 0.85 & 0.84 & 0.80 & 0.78 & 0.84 & 0.80 & 0.80 & 0.88 & \textbf{0.88} & 0.88 & \textbf{0.88} \\
Large & 0.66 & 0.52 & \textbf{1.00} & \textbf{1.00} & 0.90 & 0.80 & 0.88 & \textbf{1.00} & 0.84 & 0.76 & \textbf{1.00} & \textbf{1.00} & \textbf{1.00}& \textbf{1.00} \\
\bottomrule
\end{tabular}
}

    \label{k-core_results}
\end{table}

\begin{figure}
    \centering
    \includegraphics[width=1\textwidth]{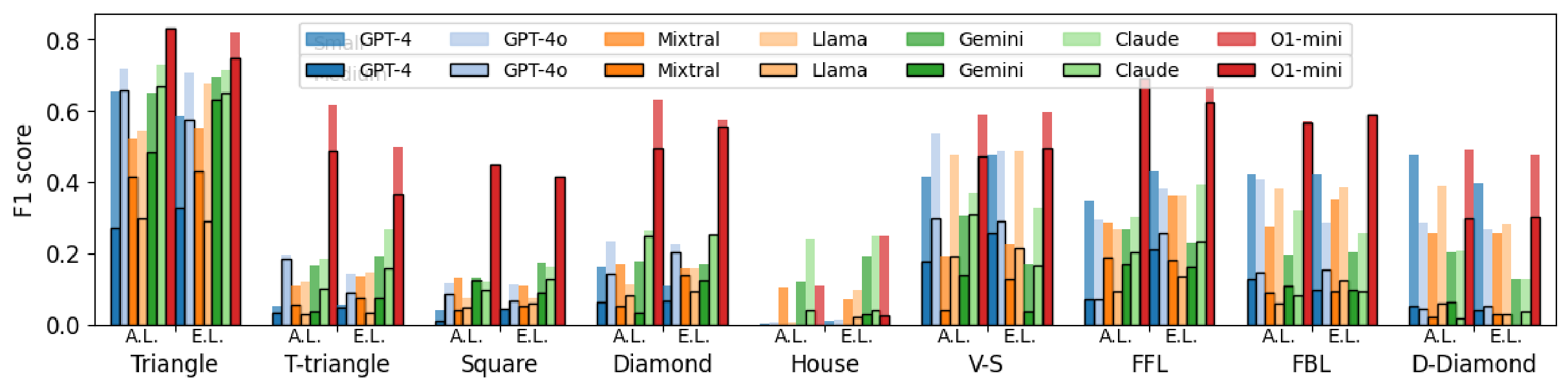}
    \caption{The F1 score of topology-based pattern detection (small and medium scale)}
    \label{topo_pattern_ditect}
    \vspace{-0.4cm}
\end{figure}

\section{Data-Driven Patterns}
\label{sec:extraction}



In this section, we focus on three classic data-driven patterns that are useful in many areas: densely connected subgraphs, frequent subgraphs within graphs, and discriminative patterns based on labels. 
By examining densely connected subgraphs, such as k-cores, we can identify communities and make friendship recommendations in social networks. Frequent subgraphs help uncover common structural motifs, which is valuable in chemistry and biology. Discriminative patterns enable us to distinguish between different classes within the data, aiding in classification tasks.


\subsection{Dense subgraph mining (k-core)}\label{sec:k-core}

K-core is a commonly used densely connected subgraph, defined as a maximal subgraph containing nodes of degree $k$ or more. 
In this task, we prompt LLMs to leverage the k-core algorithm~\citep{k-core} to identify 3-core in the input graph.
We compute Precision as the metric by judging whether the identified nodes belong to a 3-core and present the results in Table~\ref{k-core_results}. 

On average, LLMs achieve over 60\% precision across various graph scales, demonstrating their ability to execute the algorithm effectively.
Notably, LLMs obtain better performance on large-scale graphs than on smaller ones, likely because most nodes in large graphs have degrees greater than 3. LLMs tend to make errors when node degrees are close to 3 but become increasingly accurate as node degrees increase. 
In Figure~\ref{k-core-alg}, we select GPT-4o and O1-mini to analyze the relationship between node degrees and precision scores. Nodes with a degree of 3 have 50\% precision, while those with degrees above 5 reach nearly 100\% precision.

\begin{table}[ht!]
\vspace{-0.3cm}
    \centering
     \caption{The accuracy of extracted frequent subgraphs}
     \tiny
    \setlength{\tabcolsep}{3pt}
    \resizebox{\textwidth}{!}{\begin{tabular}{l|l|ccc|ccc|ccc|ccc|ccc|ccc|ccc}
    \toprule
\multicolumn{2}{c}{\multirow{2}{*}{ }} & \multicolumn{3}{c}{GPT-4} & \multicolumn{3}{c}{GPT-4o} & \multicolumn{3}{c}{Mixtral} & \multicolumn{3}{c}{Llama} & \multicolumn{3}{c}{Gemini} & \multicolumn{3}{c}{Claude} & \multicolumn{3}{c}{O1-mini} \\
\multicolumn{2}{c}{} & S. & M. & L. & S. & M. & L. & S. & M. & L. & S. & M. & L. & S. & M. & L. & S. & M. & L. & S. & M. & L. \\
\midrule
\multirow{2}{*}{ Triangle } & A.L. & 1.00 & 1.00 & 1.00 & 0.84 & 0.88 & 0.88 & 0.86 & 0.88 & 1.00 & 0.62 & 0.68 & 0.87 & 1.00 & 1.00 & 0.59 & 0.43 & 0.26 & 0.09 & 1.00 & 1.00 & 1.00 \\
& E.L & 1.00 & 1.00 & 1.00 & 0.94 & 1.00 & 0.81 & 1.00 & 1.00 & 1.00 & 0.94 & 0.95 & 1.00 & 1.00 & 0.57 & 0.29 & 0.84 & 0.58 & 0.32 & 1.00 & 1.00 & 1.00 \\
\multirow{2}{*}{ Square } & A.L. & 1.00 & 1.00 & 1.00 & 1.00 & 0.86 & 1.00 & 0.69 & 0.95 & 1.00 & 0.61 & 0.94 & 1.00 & 0.75 & 0.47 & 0.40 & 0.30 & 0.37 & 0.07 & 1.00 & 1.00 & 1.00 \\
& E.L & 1.00 & 1.00 & 1.00 & 0.97 & 0.71 & 0.98 & 1.00 & 1.00 & 1.00 & 0.94 & 0.93 & 0.91 & 0.80 & 1.00 & 0.17 & 0.93 & 0.67 & 0.21 & 1.00 & 1.00 & 1.00 \\
\multirow{2}{*}{ Diamond } & A.L. & 1.00 & 1.00 & 1.00 & 0.89 & 1.00 & 0.50 & 0.92 & 1.00 & 1.00 & 0.43 & 0.52 & 0.96 & 1.00 & 1.00 & 0.83 & 0.34 & 0.33 & 0.14 & 1.00 & 1.00 & 1.00 \\
& E.L & 1.00 & 1.00 & 1.00 & 1.00 & 0.87 & 0.93 & 1.00 & 1.00 & 1.00 & 0.99 & 0.93 & 0.91 & 1.00 & 0.38 & 0.52 & 0.99 & 0.68 & 0.32 & 1.00 & 1.00 & 1.00 \\
\multirow{2}{*}{ House } & A.L. & 1.00 & 1.00 & 1.00 & 1.00 & 1.00 & 0.98 & 0.73 & 1.00 & 1.00 & 0.53 & 0.66 & 0.93 & 1.00 & 0.78 & 0.10 & 0.31 & 0.32 & 0.22 & 1.00 & 1.00 & 1.00 \\
& E.L & 1.00 & 1.00 & 1.00 & 1.00 & 1.00 & 0.76 & 1.00 & 1.00 & 1.00 & 1.00 & 0.74 & 0.95 & 1.00 & 0.33 & 0.18 & 1.00 & 0.55 & 0.44 & 1.00 & 1.00 & 1.00 \\
\midrule
\multirow{2}{*}{ FFL } & A.L. & 1.00 & 1.00 & 1.00 & 0.91 & 0.64 & 1.00 & 0.89 & 1.00 & 0.86 & 0.67 & 0.68 & 0.93 & 0.89 & 0.00 & 1.00 & 0.33 & 0.32 & 0.26 & 1.00 & 1.00 & 1.00 \\
& E.L & 1.00 & 1.00 & 1.00 & 0.96 & 0.67 & 0.92 & 1.00 & 1.00 & 1.00 & 0.92 & 1.00 & 1.00 & 1.00 & 1.00 & 1.00 & 0.68 & 0.52 & 0.34 & 1.00 & 1.00 & 1.00 \\
\multirow{2}{*}{ FBL } & A.L. & 1.00 & 1.00 & 1.00 & 0.78 & 0.89 & 0.90 & 1.00 & 1.00 & 0.59 & 0.77 & 0.67 & 0.87 & 1.00 & 1.00 & 1.00 & 0.46 & 0.22 & 0.40 & 1.00 & 1.00 & 0.85 \\
& E.L & 1.00 & 1.00 & 1.00 & 0.98 & 0.75 & 0.83 & 1.00 & 1.00 & 1.00 & 0.96 & 1.00 & 1.00 & 1.00 & 1.00 & 1.00 & 0.84 & 0.45 & 0.31 & 1.00 & 1.00 & 1.00 \\
\multirow{2}{*}{ D-Diamond } & A.L. & 1.00 & 1.00 & 1.00 & 0.81 & 0.95 & 1.00 & 1.00 & 1.00 & 0.67 & 0.54 & 0.73 & 0.91 & 0.67 & 0.00 & 1.00 & 0.41 & 0.42 & 0.35 & 1.00 & 1.00 & 1.00 \\
& E.L & 1.00 & 1.00 & 1.00 & 0.85 & 0.88 & 0.66 & 1.00 & 0.60 & 1.00 & 0.78 & 1.00 & 1.00 & 1.00 & 0.75 & 1.00 & 0.61 & 0.46 & 0.28 & 1.00 & 1.00 & 1.00 \\
\bottomrule
\end{tabular}}

    \label{frequency_table}
\end{table}

\begin{table}[ht!]
    \centering

    \caption{Results for discriminative pattern learning by labels}
\resizebox{\textwidth}{!}{
    \begin{tabular}{c|cccccccccccccc}
    \toprule
\multirow{2}{*}{ } & \multicolumn{2}{c}{GPT-4} & \multicolumn{2}{c}{GPT-4o} & \multicolumn{2}{c}{Mixtral} & \multicolumn{2}{c}{LLAMA} & \multicolumn{2}{c}{Gemini} & \multicolumn{2}{c}{Claude} & \multicolumn{2}{c}{O1-mini} \\
& ACC & D.P. & ACC & D.P. & ACC & D.P. & ACC & D.P. & ACC & D.P. & ACC & D.P. & ACC & D.P. \\
\midrule
A.L. & 0.82 & 0.66 & 0.63 & 0.25 & 0.52 & 0.33 & 0.80 & 0.43 & \textbf{0.91} & \textbf{0.70} & 0.79 & 0.68 & 0.77 & 0.31 \\
E.L. & 0.97 & 0.47 & \textbf{1.00} & 0.58 & 0.75 & 0.12 & 0.89 & 0.71 & 0.99 & \textbf{0.70} & \textbf{1.00} & 0.64 & 0.72 & 0.08 \\
\bottomrule
\end{tabular}
}
\vspace{-0.2cm}
    \label{classification}
\end{table}

\subsection{Frequent Subgraph Extraction}\label{sec:frequent}
Mining frequent subgraphs is an important task on graphs, defined as finding subgraphs that appear frequently in a graph dataset given a frequency threshold. 
For each pattern, we first generate a graph dataset, ensuring that each graph contains the target pattern. The statistics of datasets are provided in Table~\ref{synthetic_dataste}.
In each turn, we randomly select 10 graphs from the dataset, task LLMs to extract frequent patterns based on these selected graphs, and output patterns in the topology-based description. We repeat this process 100 times and calculate the accuracy as the percentage of cases where the output pattern appears in more than 60\% of the selected graphs. It is worth noting that the extracted pattern does not need to exactly match the tested pattern. For example, if the LLMs identify a triangle pattern during testing with a house pattern, we still consider this an accurate outcome. The accuracy and frequency of extracted patterns are summarized in Table~\ref{frequency_table} and Figure~\ref{fre_subgraph}, respectively.




In Table~\ref{frequency_table}, LLMs exhibit a strong capability in identifying frequent subgraphs, with GPT-4 and O1-mini showing impressive performance. However, LLMs are prone to detect simpler patterns rather than more complex ones in the dataset. 
In Figure~\ref{fre_subgraph}, we illustrate the accuracy when different models extract the defined patterns in the datasets. Although various patterns appear with similar probability across different datasets, LLMs predominantly identify triangles. The house pattern, a combination of a triangle and square, is rarely recognized.



\begin{figure*}[h!]
    \begin{minipage}[b]{0.25\textwidth}
        \centering
        \includegraphics[width=\textwidth]{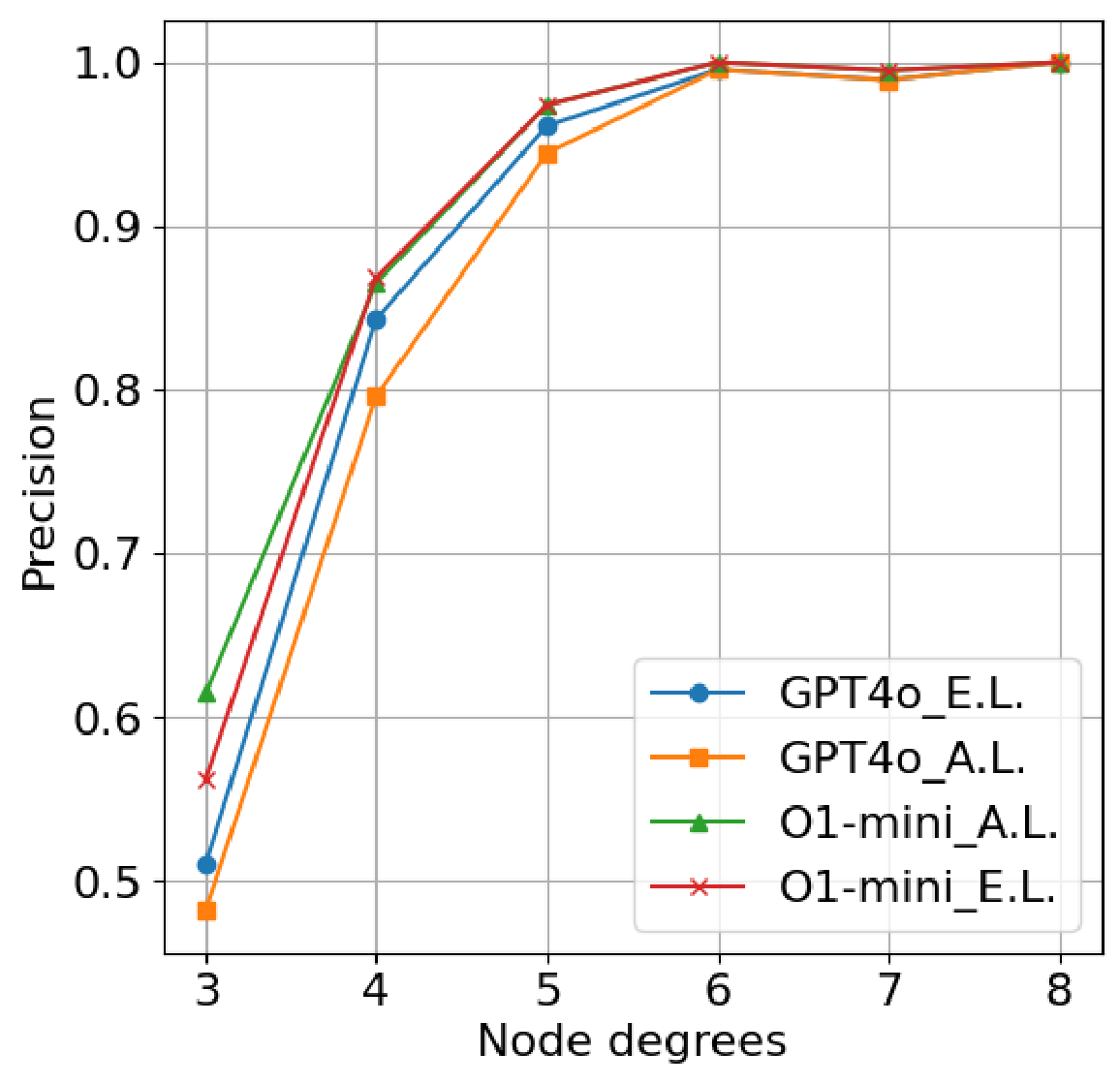} 
        \captionof{figure}{The precision in various node degrees}
        \label{k-core-alg}
    \end{minipage}
    \hspace{0.1cm} 
    \begin{minipage}[b]{0.25\textwidth}
        \centering
        \includegraphics[width=\textwidth]{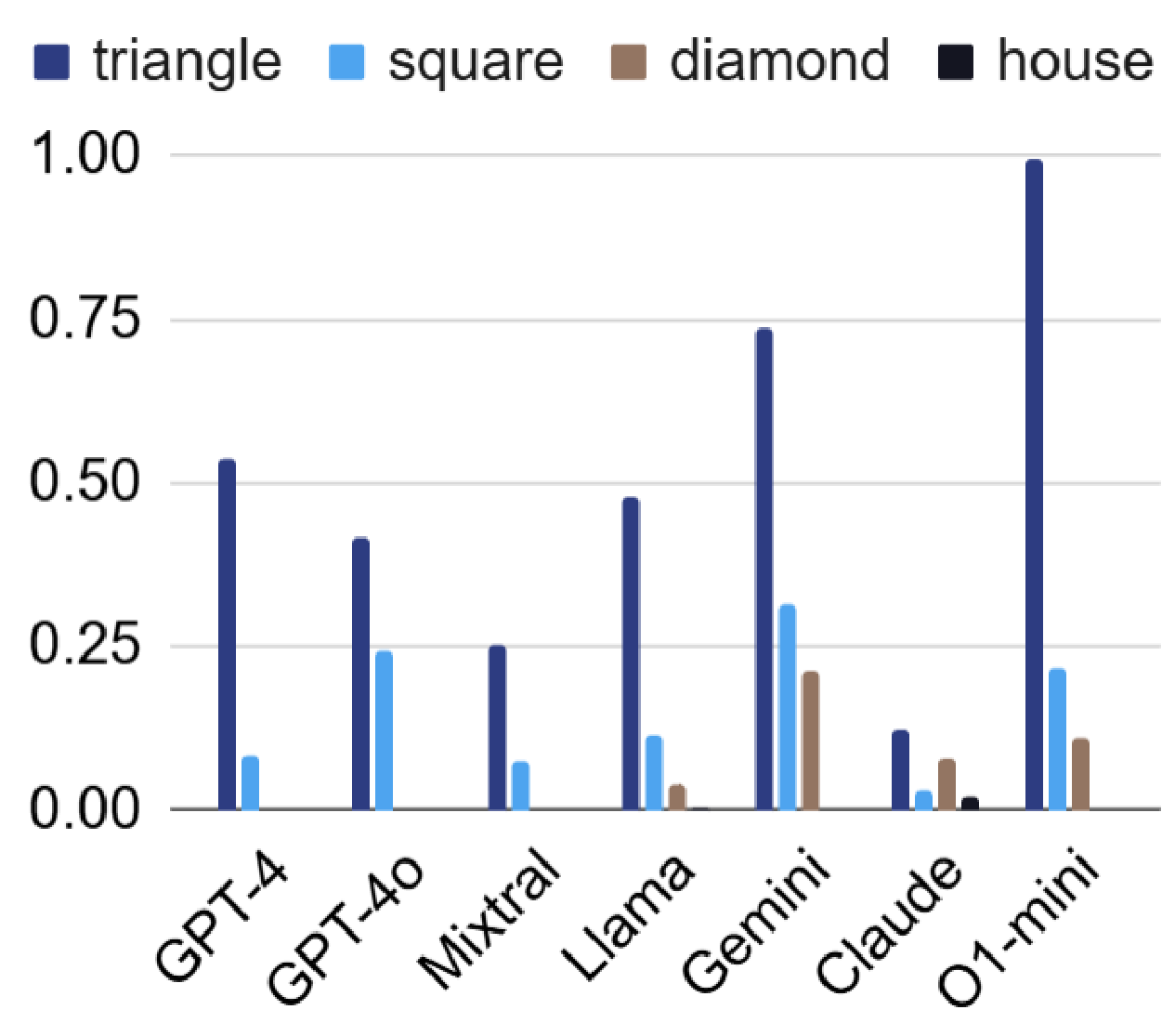} 
        \captionof{figure}{The frequency of extracted patterns}
        \label{fre_subgraph}
    \end{minipage}
    \hspace{0.1cm} 
    \begin{minipage}[b]{0.4\textwidth}
        \centering
        \includegraphics[width=1.2\textwidth]{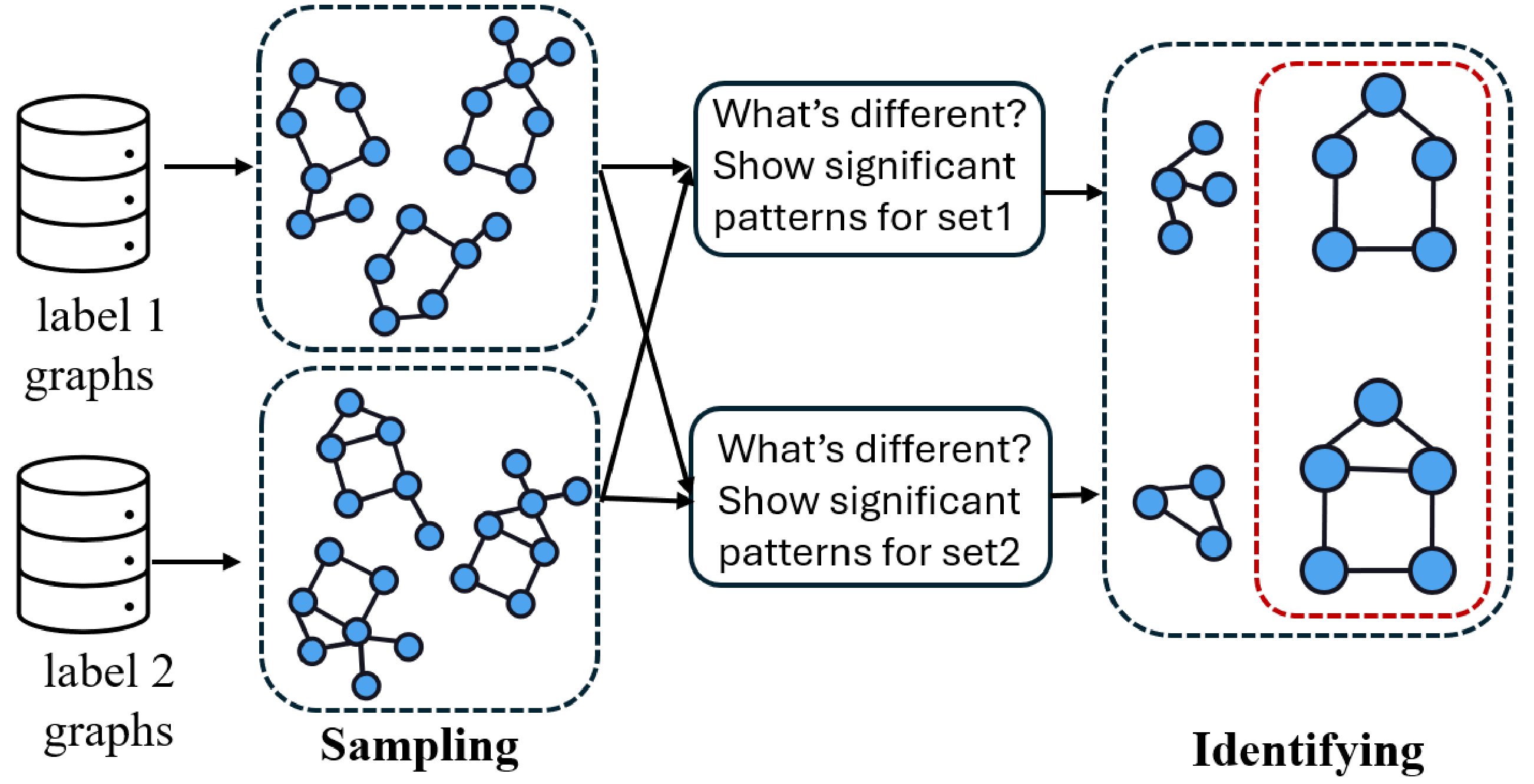} 
        \captionof{figure}{The pipeline of discriminative pattern learning by labels}
        \label{graph_classification_pip}
    \end{minipage}

    \label{fig:multiple_graphs}
\end{figure*}

\subsection{Discriminative Pattern Learning}\label{sec:graph_learning}


This task is to extract and learn important patterns from input graphs to discriminate labels. The labels, which typically represent categories or outcomes, act as a guide for discovering the most significant patterns for the task. 
We implement a two-step process to instruct LLMs, as illustrated in Figure~\ref{graph_classification_pip}. 
We first sample an equal number of graphs from each label to form a dataset. Then, we prompt LLMs to identify discriminative patterns that differentiate one label from another. This process is repeated multiple times, and all extracted patterns are retained.
We further filter these patterns: for each pattern, we ensure that over 90\% of graphs with the corresponding label contain the identified pattern, while 90\% of graphs with other labels do not. The remaining are regarded as discriminative patterns. We use two metrics to evaluate the discriminative patterns. We introduce an extra classification task to measure the effectiveness, where we ask LLMs to predict the labels of new graphs based on the discriminative patterns and compute the prediction accuracy. In addition, we calculate the discriminative pattern ratio as $\mathrm{D.P. = \frac{\mathrm{\# Discriminative\ patterns}}{\mathrm{\# Extracted\ patterns}}}$, which assesses the efficiency of the extraction process.

The results in Table~\ref{classification} demonstrate that most LLMs successfully perform graph classification tasks, particularly when using the edge list format. Interestingly, a high D.P. score does not always linearly correspond to high accuracy. For instance, despite having a D.P. score of 0.08, O1-mini still achieves a classification accuracy of 0.72.



\label{classification_sec}
\section{Evaluation on Real-world Graphs}\label{sec:real}

In this section, 
we gather a range of real-world datasets across diverse domains of graphs, including molecules, social networks, bioinformatics, and computer vision. Different from synthetic dataset, the real-world graphs have node attributes and potential noise in the graph structures, which makes the graph pattern tasks more challenging. More details of datasets and prompts are provided in the appendix~\ref{APP_Datasets}. In this evaluation, we focus on pattern detection and discriminative pattern learning.


\textbf{Molecule Pattern Detection.} Followed by Section~\ref{sec:text_detection} and Section~\ref{sec:topo_detection}, we perform pattern detection in molecular graphs, where the goal is to ask LLMs to detect the target pattern in the given molecule. We evaluate three datasets: Benzene, Alkane-Carbonyl (R-CO), and Fluoride-Carbonyl (F-CO), where the target patterns are hexagons, alkane groups, and fluoride groups, respectively. We compare the terminology-based with topology-based patterns, and also assess a combined approach, referred to as Both.
\begin{table}[ht!]
    \centering
    \caption{The F1 score for pattern detection in molecules}
    \resizebox{\textwidth}{!}{\begin{tabular}{l|l|ccc|ccc|ccc}
    \toprule
\multicolumn{2}{c}{\multirow{2}{*}{ }} & \multicolumn{3}{c}{Terminology-based} & \multicolumn{3}{c}{Topology-based} & \multicolumn{3}{c}{Both} \\

\multicolumn{2}{c}{} & Benzene & R-CO. & F-CO & Benzene & R-CO & F-CO & Benzene & R-CO & F-CO \\
\midrule
GPT-4 & A.L. & 0.88 & 0.73 & 0.68 & 0.71 & 0.64 & 0.70 & 0.89 & 0.71 & 0.69 \\
& E.L. & 0.85 & 0.71 & 0.67 & 0.72 & 0.64 & 0.69 & 0.83 & 0.69 & 0.70 \\
\midrule
GPT-4o & A.L. & 0.88 & 0.66 & 0.67 & 0.76 & 0.66 & 0.69 & 0.87 & 0.68 & 0.67 \\
& E.L. & 0.90 & 0.66 & 0.67 & 0.78 & 0.67 & 0.69 & 0.86 & 0.65 & 0.67 \\
\bottomrule
\end{tabular}}
\vspace{-0.3cm}
    \label{real-world-detection}
\end{table}

Table~\ref{real-world-detection} illustrates that terminology-based descriptions outperform topology-based ones for both the Benzene and R-CO datasets. However, combining both descriptions does not enhance performance, as the final F1 scores are slightly lower than using terminology alone. This indicates that in real-world applications, LLMs may rely on internal knowledge, such as retrieving the name of patterns, to understand molecular structures.


\begin{table}[ht!]
    \centering
    \tiny
     \caption{Discriminative patterns and classification result in real-world datasets}
    \resizebox{\textwidth}{!}{\begin{tabular}{p{1.2cm}|cccccccc|cccccc}
    \toprule
    & \multicolumn{8}{c}{Binary classification} & \multicolumn{6}{c}{Multi-label classification} \\
    \midrule
\multirow{2}{*}{ } & \multicolumn{2}{c}{MUTAG} & \multicolumn{2}{c}{OGBG-MOL} & \multicolumn{2}{c}{OGBG-BBBP} & \multicolumn{2}{c}{IMDB-BINARY} & \multicolumn{2}{c}{ENZYYMES} & \multicolumn{2}{c}{FINGERPRINT} & \multicolumn{2}{c}{IMDB-MULTI} \\
& A.L. & E.L. & A.L. & E.L. & A.L. & E.L. & A.L. & E.L. & A.L. & E.L. & A.L. & E.L. & A.L. & E.L. \\
GPT-4 & 0.66 & 0.70 & 0.56 & 0.58 & 0.64 & 0.66 & 0.72 & 0.78 & 0.40 & 0.38 & 0.36 & 0.36 & 0.33 & 0.35 \\
GPT-4o & 0.61 & 0.63 & 0.52 & 0.60 & 0.68 & 0.70 & 0.68 & 0.72 & 0.40 & 0.35 & 0.35 & 0.34 & 0.34 & 0.35 \\ 
\midrule
\raisebox{-0.25cm}{\makecell[l]{Discriminative \\Patterns}}
& \multicolumn{2}{c}{\raisebox{-0.7cm}{\includegraphics[width=0.08\textwidth]{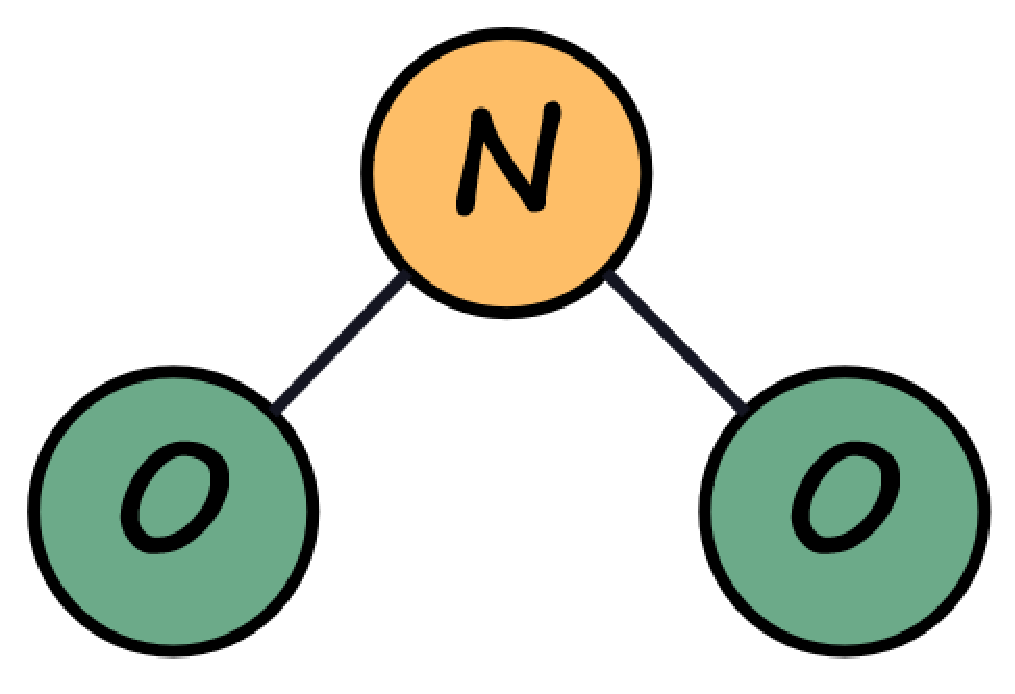}}} & \multicolumn{2}{c}{\raisebox{-0.7cm}{\includegraphics[width=0.08\textwidth]{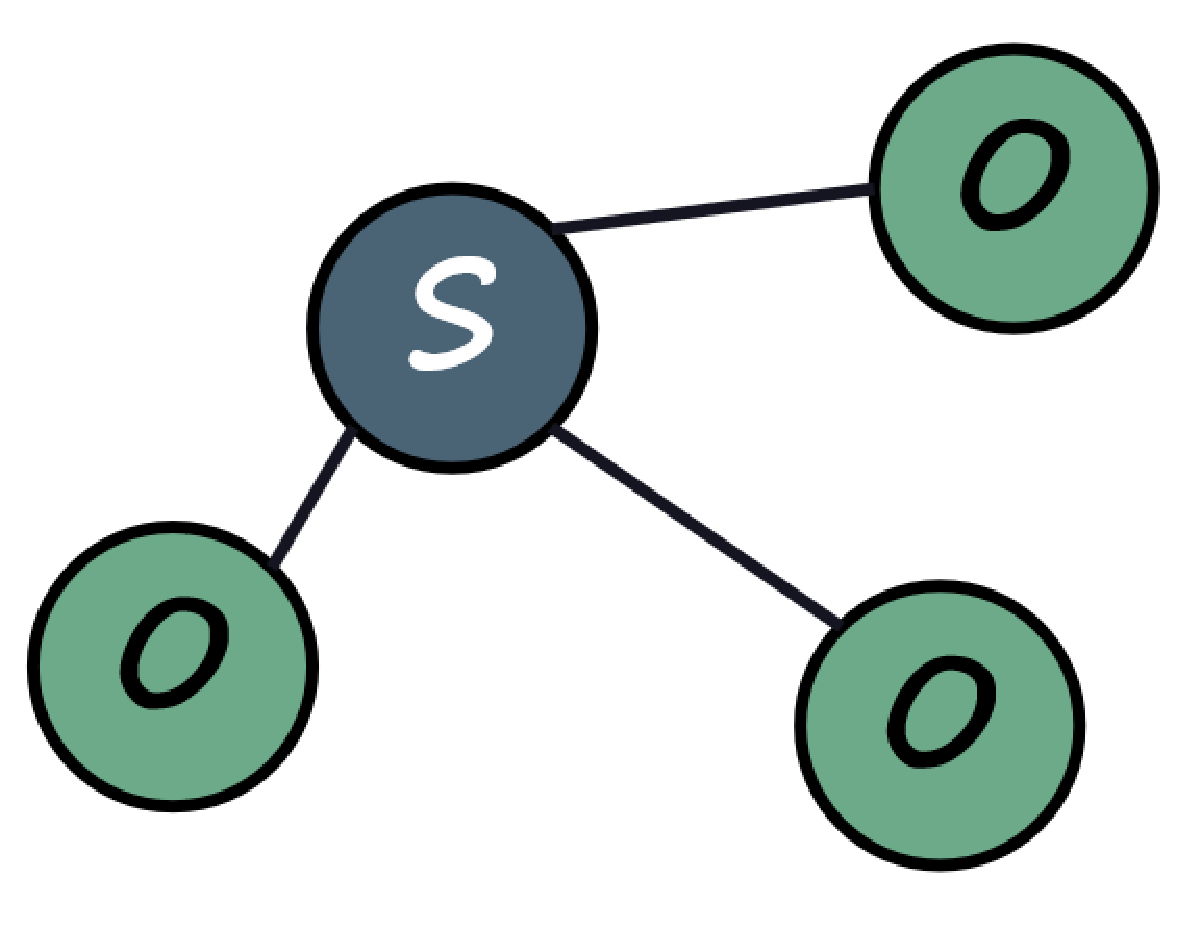}}} & \multicolumn{2}{c}{\raisebox{-0.7cm}{\includegraphics[width=0.1\textwidth]{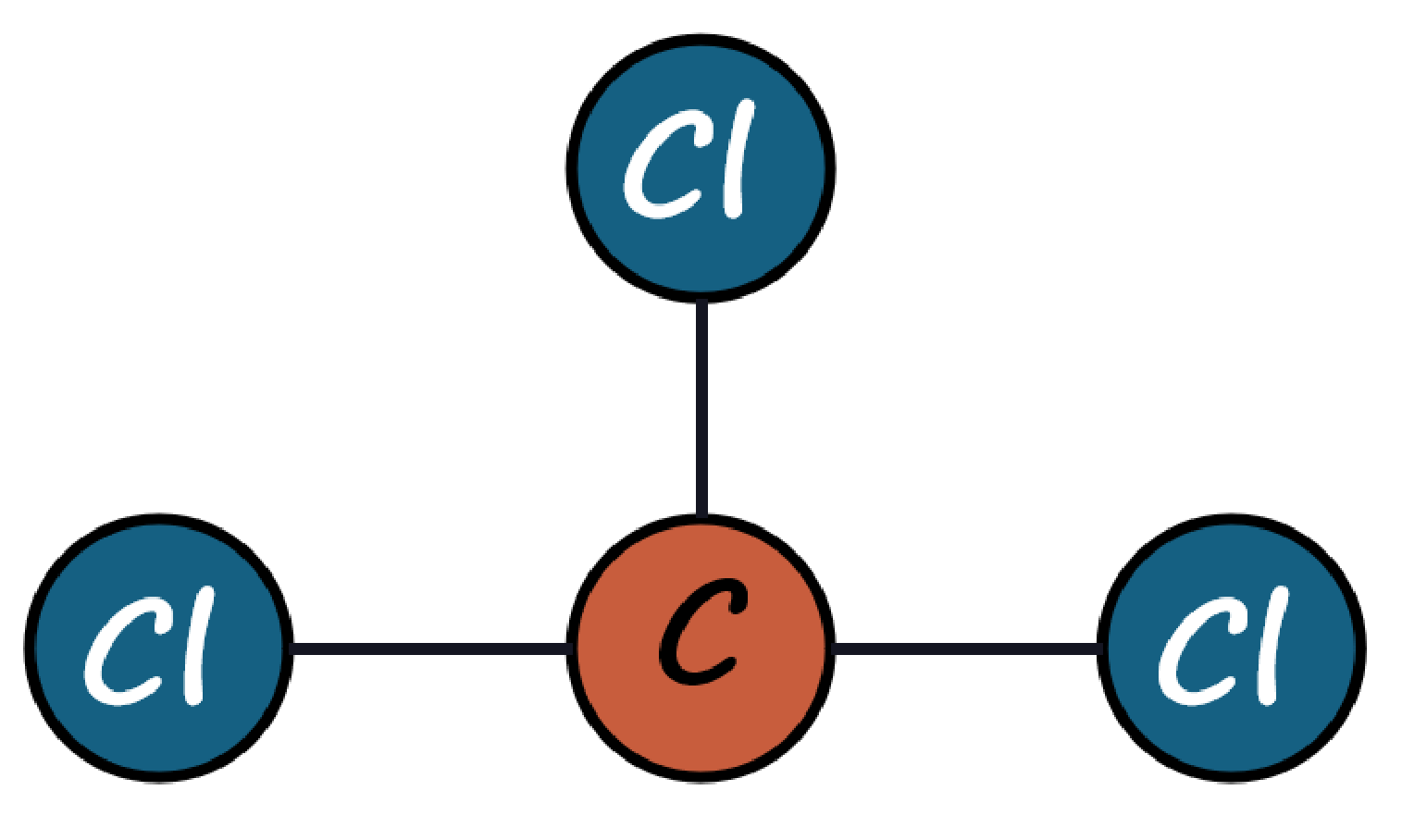}}} & \multicolumn{2}{c|}{\raisebox{-0.7cm}{\includegraphics[width=0.08\textwidth]{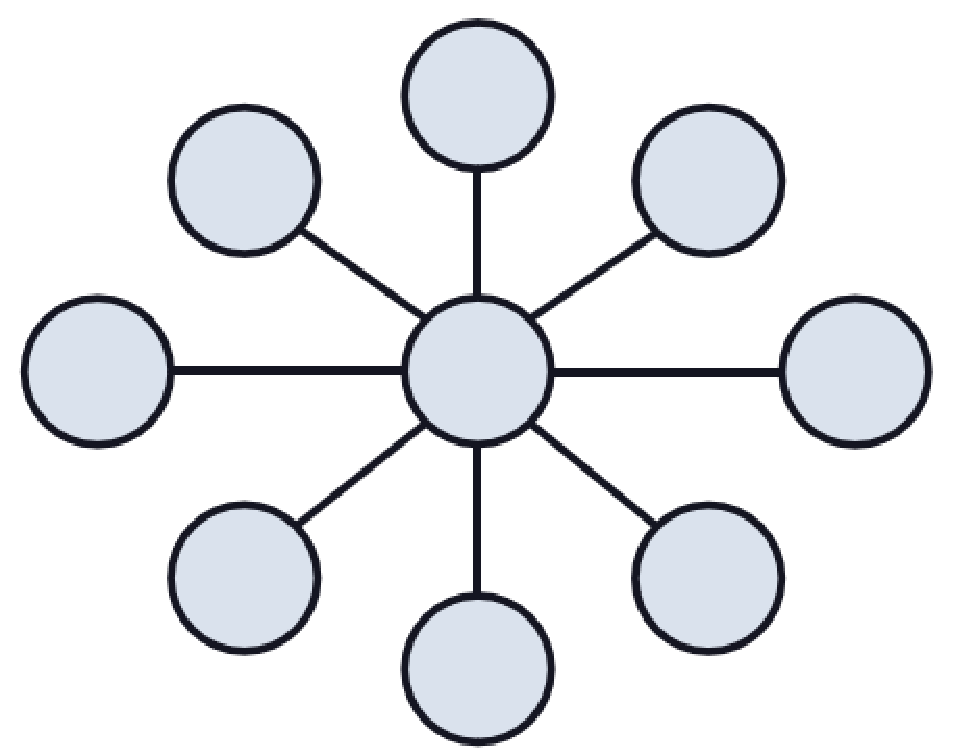}}} & \multicolumn{2}{c}{\raisebox{-0.7cm}{\includegraphics[width=0.08\textwidth]{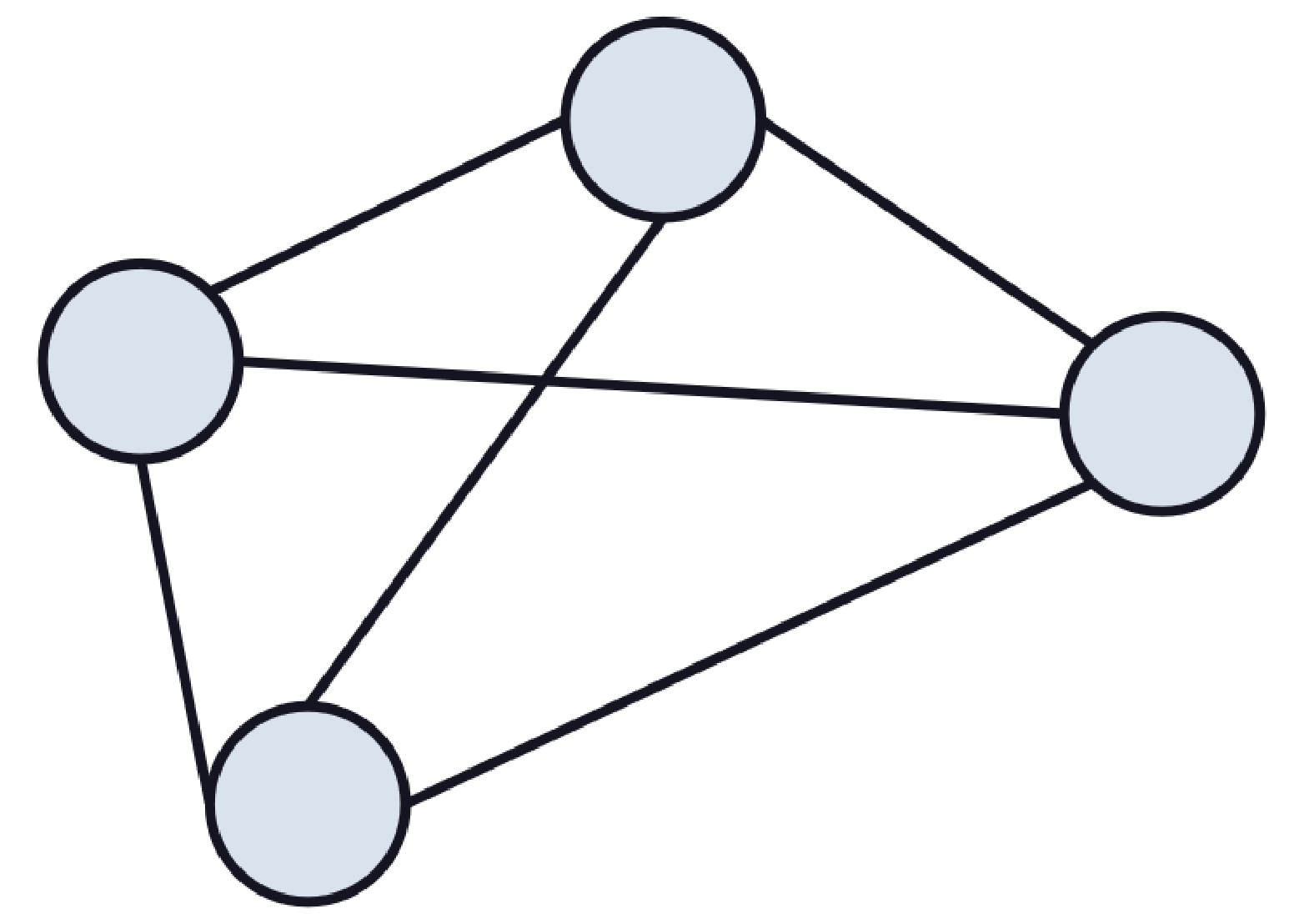}}} & \multicolumn{2}{c}{\raisebox{-0.7cm}{\includegraphics[width=0.08\textwidth]{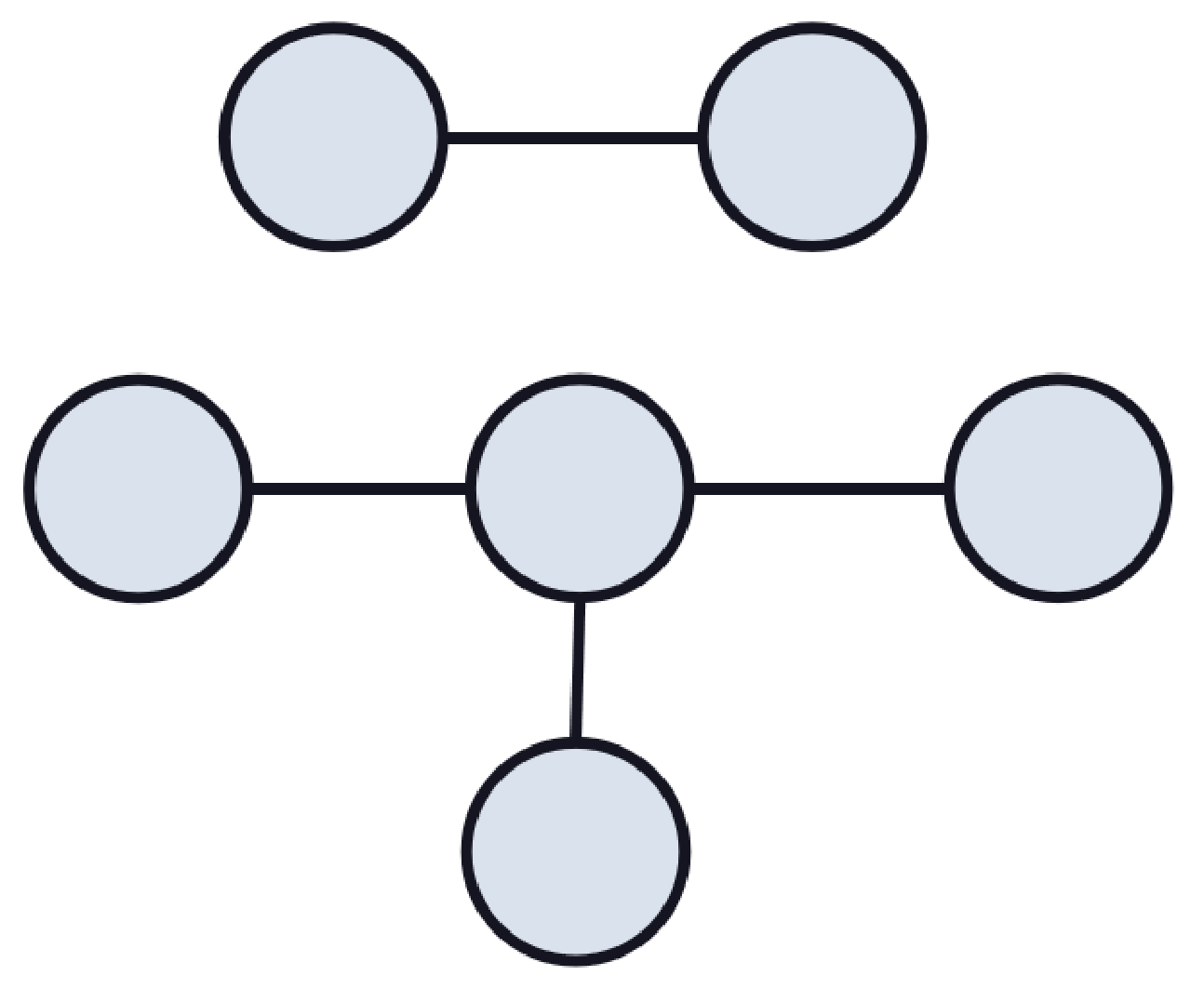}}} & \multicolumn{2}{c}{\raisebox{-0.7cm}{\includegraphics[width=0.8cm]{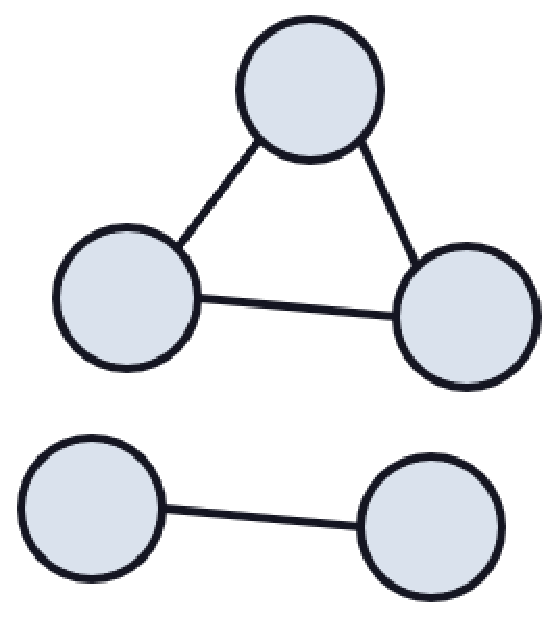}}} \\


\bottomrule
\end{tabular}}
    \label{real_classification}
\end{table} 

\textbf{Discriminative Pattern Learning and Classification.} Based on Section~\ref{classification_sec}, we conduct the 2-step process of discriminative pattern learning and classification on real-world datasets. We use hundreds of samples for discrimination and test about 50 samples in classification.
Further details can be found in Appendix Table~\ref{tab:app_dataset}.

Table~\ref{real_classification} presents the results for both binary and multi-label classification accuracy on the molecular (MUTAG, OGB-MOL, OGB-BBBP), social network (IMDB-BINARY, IMDB-MULTI), bioinformation (ENZYYMES), and computer vision (FINGERPRINT) datasets. Notably, we highlight the most discriminative patterns extracted by GPT-4, which provide meaningful insights into the given dataset. For instance, in the MUTAG dataset, NO2 groups are marked as positively influencing mutagenicity~\citep{agarwal2023evaluating}. Due to the current high cost of LLMs, we performed limited sampling during the discrimination process. We believe that increasing the sampling size in this stage would improve the overall classification score.




\vspace{-0.1cm}
\section{Discussions}
\label{sec:discussions}
In this section, 
we first provide our theoretical intuitions from the perspective of expressiveness and then summarize key insights from our observations. 

\subsection{Theoretical Intuitions}
Here we discuss the theoretical intuitions of LLMs' capabilities in graph pattern recognition from an expressiveness standpoint, often used in analyzing Transformers~\citep{feng2024towards,li2024chain,de2024simulation}. We demonstrate that Transformers, given graph inputs, can be configured to perform graph pattern tasks. The LOCAL framework~\cite{angluin1980local} is a distributed computing paradigm, implemented by the message passing among the nodes. Many graph pattern algorithms such as pattern detection can be efficiently implemented by LOCAL ~\citep{drucker2014power,korhonen2017deterministic}. Intuitively if Transformers can simulate any LOCAL algorithms, it has the capability for graph pattern tasks. We build the following theorem for this intuition.

\begin{theorem}\label{thm:representation} (Informal) For any LOCAL algorithm A, there exists a Transformer with edge list as input that can simulate A. 
\end{theorem}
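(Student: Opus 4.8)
The plan is to construct the simulating Transformer layer-by-layer so that $T$ rounds of a LOCAL algorithm map to $O(T)$ Transformer layers, using the edge-list tokenization to route messages along edges. First I would fix the input encoding: each token corresponds to one directed edge $(u,v)$ (or an ordered pair appearing in the edge list), and I would augment the token embedding with a unique positional/identifier encoding of the endpoint IDs $u$ and $v$, together with a slot reserved to hold the ``state'' $s_u^{(r)}$ that node $u$ holds after round $r$. The key observation is that one round of LOCAL at node $u$ is a function of $s_u^{(r)}$ and the multiset $\{s_w^{(r)} : w \sim u\}$; since every neighbor $w$ of $u$ appears as the second coordinate of some edge token $(u,w)$, a single attention head can, for the token $(u,v)$, attend to all tokens whose first coordinate equals $u$ and aggregate their stored neighbor-states. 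I would implement the ``first coordinate equals $u$'' matching by making the query depend on the $u$-identifier and the key depend on the first-coordinate identifier, so the softmax (at low temperature / large scale) concentrates on exactly the matching tokens and the value slot carries $s_w^{(r)}$; averaging these values gives a (normalized) encoding of the neighbor multiset.

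Next I would argue that the subsequent MLP block can apply the LOCAL transition function: after attention, the token $(u,v)$ carries $(s_u^{(r)}, \text{aggregate of neighbor states})$, and a sufficiently wide two-layer MLP with ReLU activations can approximate (or, on the finite/discrete state space, exactly compute) the update $s_u^{(r)} \mapsto s_u^{(r+1)}$. Because the state space of a LOCAL algorithm running for $T$ rounds on bounded-degree graphs is finite (its size bounded in terms of $T$, $\Delta$, and ID range), the transition is a function on a finite domain and hence exactly realizable by an MLP of bounded size; I would invoke standard universal-approximation / interpolation facts here rather than building the network explicitly. Iterating attention $+$ MLP gives one Transformer ``block'' per LOCAL round, and after $T$ blocks every edge token $(u,v)$ stores $s_u^{(T)}$, from which the algorithm's output (e.g.\ a per-node or per-edge indicator for pattern detection) is read off by a final linear/MLP readout. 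I would also handle bookkeeping details — symmetrizing so that $s_v^{(r)}$ is also accessible, carrying the original IDs through residual connections, and ensuring the attention normalization (dividing by the number of matched tokens, i.e.\ $\deg(u)$) is itself computable or absorbed into the MLP.

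The main obstacle I expect is making the attention-based ``gather neighbors of $u$'' step rigorous: softmax attention produces a convex combination, not a clean set/multiset, so I need the identifier encodings to be separated enough (and the logit scaling large enough) that non-matching tokens contribute negligibly, and I need the resulting weighted average to still determine the neighbor multiset up to the information LOCAL actually uses. One clean route is to encode identifiers with (near-)orthogonal vectors and argue that the averaged value vector, together with a count of matched tokens (recoverable from another head that attends uniformly or via a dedicated degree-counting construction), injectively encodes the multiset of neighbor states when the state alphabet is finite; then the MLP inverts this encoding. A secondary subtlety is uniformity — the construction's width/precision grows with $T$, $\Delta$, and the ID range, so the theorem should be read as: for each LOCAL algorithm and each size bound there is a Transformer simulating it, which is exactly the ``informal'' scope claimed. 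I would close by noting that since pattern detection and related tasks admit LOCAL algorithms (as cited), this yields the desired expressiveness corollary, and I would defer the precise constants, the exact MLP width, and the formal statement of the positional encoding to the appendix.
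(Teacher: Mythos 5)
Your construction is essentially the paper's own argument: the paper factors the claim as LOCAL $\equiv$ message-passing GNNs (Lemma~\ref{lem:equivalence}) plus a Transformer-simulates-MPGNN lemma (Lemma~\ref{lem:transformer}), and that second lemma is built exactly as you describe --- edge tokens carrying node identifiers, attention heads implementing hard COPY/SUM aggregation over tokens with a matching node id (via the same near-orthogonal-key, concentrated-softmax mechanism you sketch, imported as Lemma~\ref{lem:copy} from prior work), and MLPs computing the message and update functions, with $O(1)$ Transformer blocks per round. Your handling of the softmax-normalization issue (recovering the neighbor multiset from an attention average plus a degree count over a finite state alphabet) is a sound substitute for the paper's direct use of the SUM primitive, so there is no gap.
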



The detailed proof is given in Appendix \ref{sec:proof}. The proof intuition is that the attention mechanism is a message-passing between the tokens and each token is executed in a parallel manner. Consequently, this mechanism enables Transformers to simulate the message passing between nodes. The above theorem suggests the existence of such weights. 



\subsection{Empirical Insights}

\textbf{LLMs use diverse algorithms for one task, and the performance varies due to their execution ability.} We provide two observations: (1) We manually reviewed most of the outputs generated by LLMs in graph mining tasks, and summarized the algorithms used by LLMs in Appendix H. Our analysis reveals that different LLMs utilize diverse algorithms to solve the same problem. For instance, more than eight algorithms are used for pattern detection tasks (Section~\ref{sec:text_detection}). (2) Due to the internal flaws of LLMs, these algorithms, although logically correct, will have different performance. In the graph isomorphic mapping task (Section~\ref{sec:iso}), a common algorithm starts by counting node degrees and then mapping nodes. O1-mini uses this approach for 89\% of the data but achieves only 30\% accuracy due to errors in degree counting. In contrast, Claude applies degree counting to only 23\% of the data, relying primarily on a direct edge-matching algorithm for the rest. This alternative strategy enables Claude to achieve an impressive 96\% accuracy.

\textbf{Input format that aligns with the pretrained knowledge improves the performance.} Our experiments demonstrate that LLMs have basic knowledge about graph patterns and they help LLMs in graph pattern tasks. 
 First, LLMs are pre-trained on extensive internet datasets where graph patterns are often described using specific terminologies. This exposure helps LLMs understand these terms. Comparing the results in Section \ref{sec:text_detection} with those in Section \ref{sec:topo_detection}, we observe that terminology-based graph pattern detection generally outperforms topology-based detection. This suggests that LLMs leverage their internal knowledge to enhance performance when provided with terminology as input. Second, the pretrained knowledge will influence the strategies employed by LLMs, and the graph input format that aligns with the strategies will improve the performance. For example, in the case of discriminative pattern learning (Section \ref{sec:graph_learning}), the algorithms used by LLMs often rely on comparing corresponding edges in two graphs. In this scenario, the edge list format typically leads to better performance than the adjacency list format. Conversely, in k-core detection (Section \ref{sec:k-core}), the algorithms require counting node degrees, so the edge list is inferior to the adjacency list.




\paragraph{O1-mini often gives the best results, but not always.} In most tasks, O1-mini has the best performance, especially when the graph size is large. This is not surprising because it has a much longer chain-of-thought. For example, in the pattern detection task, which requires LLMs to search node combinations one by one, O1-mini holds a clear advantage.
However, it falls back in the discriminative pattern learning by labels in Section \ref{sec:graph_learning}. 
The reason is that O1-mini strictly searches for patterns that exactly meet the data requirements, without allowing for fuzzy matching. This results in smaller discriminative patterns being found. Consequently, when O1-mini attempts graph classification, the patterns struggle to generalize, leading to lower classification accuracy.




\section{Related work}
\textbf{Large Language Models for Graphs.} Here, we introduce previous work using LLMs based on the types of graph tasks they focused on. The first category is to solve graph algorithm tasks, such as connectivity, degree counting, cycle check, shortest path, and Hamilton path. 
NLGraph~\citep{NLGraph} and Talk like a graph~\citep{talklikeagraph} translated graph structures into natural language, and leveraged techniques like few-shot prompting or chain-of-thought (CoT) to feed these descriptions directly into LLMs for inference. 
Since exact computational algorithms can be employed to obtain an accurate reasoning process and results, 
GraphInstruct~\citep{graphinstruct} built the instruction-tuning dataset and fine-tuned LLMs by Lora. Moreover, Graphwiz~\citep{graphwiz} extended the scale and diversity of the dataset and applied Direct Preference Optimization (DPO) loss to enhance the performance. Recently, GraphToken~\citep{graphtoken} utilized a GNN-like encoder to process the input graph and train this encoder with a frozen LLM.

The second category encompasses graph learning tasks, which typically include node classification, link prediction, and graph classification~\citep{chen2024exploring}. Unlike graph algorithm tasks relying on deterministic rules for inference, graph learning tasks involve learning the relationship between graphs and labels using provided training examples.
Graph-LLM~\citep{Graph-LLM} and GraphText~\citep{graphtext} designed prompts that convert graph structure, node/edge attributes, and label information into natural language, enabling LLMs to make predictions without additional training. In contrast, other approaches incorporate projectors or encoders before LLMs and construct datasets for tuning.   
LLaGA~\citep{llaga} and MuseGraph~\citep{musegraph} reorganized the center node and its neighborhood information as a structure-aware textual description, which is then tokenized by a projector and fed into LLMs to predict node labels in citation graphs.
GraphGPT~\citep{graphgpt} utilized a GNN encoder to tokenize the input graph before processing it with the LLM to address node classification in text-attributed graphs.
MolCA~\citep{molca} and InstructMol~\citep{instructmol} used similar GNN encoders for molecular structures to predict their properties.

\textbf{Graph Pattern Discovery.} Graph patterns (also known as substructures, graphlets, motifs and subgraphs) have been extensively studied before the era of LLMs due to their crucial role in real-world applications, particularly in chemistry~\citep{murray2009rise}, biology~\citep{hu2005mining}, and social science~\citep{wu2022clare}. Numerous algorithms have been proposed for pattern discovery. For example, the algorithms designed to identify and count dense pre-defined patterns, such as k-core, k-cycle, and k-clique, were introduced in ~\cite{ milo2002network, zhang2012extracting}. 
Frequent subgraph mining algorithms were developed to efficiently search for possible patterns~\citep{yan2008mining, elseidy2014grami}, aiming to reduce the search space as much as possible. 

In recent years, deep learning, particularly graph neural networks (GNNs), has become prominent in this area. Many studies~\citep{chen2020can, zhang2023complete, luo2022ada} investigated the theoretical guarantees of specific GNNs capable of identifying certain substructures, and developed subgraph-aware GNNs with strong expressiveness to improve the accuracy of graph learning tasks. 
More recently, some studies~\citep{zheng2023large} have begun exploring the use of LLMs to identify SMILES patterns in molecular data, opening new avenues for research.

As previously mentioned, few studies have addressed graph pattern discovery using LLMs, despite their importance for many downstream tasks, which still require further exploration.

 
\section{Conclusion}

We investigate how LLMs understand graph patterns. While recent studies demonstrate that LLMs have shown success in various graph-related tasks, the comprehension of graph patterns remains unexplored. To bridge this gap, we propose a benchmark featuring both synthetic and real-world data, spanning 11 subtasks evaluated across 7 LLMs. Using known patterns defined through terminology-based and topology-based descriptions, we conduct tasks including pattern translation, isomorphic mapping, graph modification, and pattern detection. For unknown patterns, we cover classic tasks such as densely connected subgraph detection, frequent pattern mining and discriminative pattern learning using labeled graph data. The real-world datasets are gathered to further assess LLMs' ability to handle the attributes and noise in the pattern. Our results show that LLMs possess a preliminary capability to understand graph patterns and perform tasks related to them effectively.


\subsubsection*{Acknowledgments}
The work is supported by the Accelerating Foundation Model Research Program (AFMR), Microsoft Research. Xinnan Dai and Jiliang Tang are supported by the National Science Foundation (NSF) under grant numbers CNS2321416, IIS2212032, IIS2212144, IOS2107215, DUE2234015, CNS2246050, DRL2405483 and IOS2035472, the Army Research Office (ARO) under grant number W911NF-21-1-0198, Amazon Faculty Award, Meta, Microsoft and SNAP.


\bibliography{iclr2025_conference}
\bibliographystyle{iclr2025_conference}

\appendix

\section{Proof for Theorem \ref{thm:representation}}\label{sec:proof}
We first define the edge list format. 
\begin{align}\label{eq:edge_list}
\underbrace{u_1 \ v_1 \ a_{u_1 \leftarrow v_1} \ u_1 \ v_2 \ a_{u_1 \leftarrow v_1} \ \ldots  \ u_n \ v_n \ a_{u_n \leftarrow v_n}}_{\text{edge list}} \ \underbrace{u_1 \ a_1 \ \ldots \ u_{n} \ a_n}_{\text{initial states}} 
\end{align}
where $u_1,v_1$ are the node id, $a_i$ is the node feature and $a_{u_1 \leftarrow v_1}$ is the edge weights. The definition of message-passing graph neural networks.

\begin{theorem}[Formal version of Theorem \ref{thm:representation}] Assume the input format is given in \eqref{eq:edge_list}. Let $\textsf{T}_{\ell}(G_a)$ 
be the state $(x_1^{(\ell)}, \ldots, x_n^{(\ell)})$ of a Transformer network $\textsf{T}$ with edge list input in \ref{eq:edge_list}. For any LOCAL algorithm $\textsf{N}$ and small number $\epsilon$, there exists $\textsf{T}$ such that 
$$  
\|\textsf{T}_{O(\ell)}(G_a) - \textsf{N}_{\ell}(G_a)\|_{\infty} \leq \epsilon \quad \text{for every layer } \ell\ \ \text{and} \ \ G_a \in \mathcal{G}_a,
$$
where $\mathcal{G}_a$ is the set of all attributed graphs.
\end{theorem}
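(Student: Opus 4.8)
The plan is to build the simulating Transformer $\textsf{T}$ layer-by-layer, mirroring the round structure of the LOCAL algorithm $\textsf{N}$, and to argue that one round of $\textsf{N}$ can be carried out by a constant number of Transformer layers up to error $\epsilon$. First I would fix notation and make precise what a LOCAL algorithm produces at each round: every node $v$ holds a state $x_v^{(\ell)}$, and $x_v^{(\ell+1)} = f_\ell\big(x_v^{(\ell)}, \{\!\{(x_u^{(\ell)}, a_{v\leftarrow u}) : u \sim v\}\!\}\big)$ for some (arbitrary, but fixed and finite-support) update function $f_\ell$. The key structural observation — already flagged in the excerpt — is that one self-attention head is itself a form of aggregation over tokens, so if I lay the tokens out as in \eqref{eq:edge_list}, I can use attention to route each edge token's contribution to the appropriate node token.

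The core of the construction has three ingredients. (i) \emph{Indexing/positional encoding:} I would equip each token with a positional encoding that encodes its node id(s) in a form on which attention can test equality — e.g. via the standard trick of mapping id $j$ to a pair $(\cos, \sin)$ features (or a unary/one-hot-like embedding on the bounded id range) so that an inner product is maximized exactly when two tokens refer to the same node. This lets a node-state token $u_j\, a_j$ attend to precisely the edge tokens $u_j\, v\, a_{u_j\leftarrow v}$ incident to $u_j$. (ii) \emph{Aggregation of the multiset:} a single attention head, with queries from the node tokens and keys from the edge tokens, produces (after softmax with a large temperature, or after a hardmax approximation) a normalized average of the neighbor-state/edge-weight features; since LOCAL updates may depend on the full multiset rather than just its average, I would either restrict to $f_\ell$ expressible through symmetric aggregates (sum/mean after a suitable per-neighbor embedding — the standard MPNN-to-LOCAL equivalence) or use several heads / a widened embedding to recover enough moments of the multiset. (iii) \emph{Applying $f_\ell$:} the per-node update $f_\ell$ acting on the concatenation of (old state, aggregated message) is a fixed function on a compact domain, so by a universal-approximation argument a two-layer MLP in the feed-forward block approximates it to within any $\delta$; choosing $\delta$ small enough and tracking how errors propagate over $\ell$ rounds gives the final $\epsilon$ bound (here I would use Lipschitzness of the $f_\ell$'s, or discreteness of states if the state space is finite, to prevent error blow-up). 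Stacking $O(1)$ Transformer layers per LOCAL round yields the claimed $\textsf{T}_{O(\ell)}$.

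I would organize the write-up as: (1) definitions of LOCAL and of the Transformer layer used; (2) the token layout and positional encoding, with the equality-testing lemma for attention; (3) the single-round simulation lemma (attention does the gather, FFN does $f_\ell$, with explicit error $\delta$); (4) induction over rounds with error accumulation, concluding $\|\textsf{T}_{O(\ell)}(G_a) - \textsf{N}_\ell(G_a)\|_\infty \le \epsilon$ uniformly over $\mathcal{G}_a$.

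The main obstacle I anticipate is making the "attention implements multiset aggregation" step honest and \emph{uniform over all of $\mathcal{G}_a$}: softmax attention gives a weighted average whose weights depend on degrees, so realizing an exact sum (as opposed to a mean) over a variable and unbounded number of neighbors, and doing so with a \emph{single fixed} set of weights that works for every attributed graph in $\mathcal{G}_a$, requires care — either bounding degrees, normalizing by a separately-computed degree count, or accepting a mean-aggregator version of LOCAL. Relatedly, the equality test via inner products must stay robustly separated as the id range grows, so the positional-encoding dimension (or the attention temperature) may need to scale with the maximum id; I would state the precise regularity assumptions on $\mathcal{G}_a$ (bounded size, bounded ids, finite feature alphabet) under which the uniform $\epsilon$ bound holds, since the informal theorem's "every $G_a \in \mathcal{G}_a$" is doing a lot of work.
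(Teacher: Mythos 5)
Your proposal is correct and follows essentially the same route as the paper: the paper also factors the argument through message-passing networks (its Lemma~\ref{lem:equivalence} is exactly your ``MPNN-to-LOCAL equivalence'') and then simulates each MPGNN round by a constant number of Transformer blocks that use inner-product equality tests on token/positional embeddings to gather incident edge tokens and MLPs to implement $\textsc{Msg}$ and $\textsc{Up}$ (its Lemma~\ref{lem:transformer}). The sum-versus-mean and uniformity concerns you raise are discharged in the paper by invoking the \textbf{COPY}/\textbf{SUM} attention primitives of Lemma~\ref{lem:copy}, which is precisely the formalization your sketch calls for.
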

\begin{proof}
    This theorem is a combination between Lemma \ref{lem:equivalence} and Lemma \ref{lem:transformer}.
\end{proof}

\begin{lemma}[Equivalence between MPGNNs and LOCAL]\label{lem:equivalence}
Let $\textsf{N}_{\ell}(G_a)$ 
be the state $(x_1^{(\ell)}, \ldots, x_n^{(\ell)})$ of a MPGNN network $\textsf{N}$ 
and $\textsf{A}_{\ell}(G_a) = ( s_1^{(\ell)}, \ldots, s_n^{(\ell)} )$ that of a LOCAL algorithm $\textsf{A}$. 
For any algorithm $\textsf{A}$ there exists $\textsf{N}$ (resp. for any $ \textsf{N}$ there exists $\textsf{A}$) such that 
$$  
\textsf{A}_{\ell}(G_a) = \textsf{N}_{\ell}(G_a) \quad \text{for every layer } \ell\ \ \text{and} \ \ G_a \in \mathcal{G}_a.
$$
\end{lemma}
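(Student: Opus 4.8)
The plan is to prove the two directions of the equivalence separately, in each case giving an explicit construction. The key observation is that a LOCAL algorithm is, by definition, nothing more than a synchronous message-passing procedure: at round $\ell$ each node $v$ computes a new state $s_v^{(\ell)}$ as a function of its previous state $s_v^{(\ell-1)}$ and the multiset of messages sent by its neighbours, where each message is itself a function of the sender's previous state (and possibly the incident edge attribute). An MPGNN is the same object with the update and message maps instantiated as (learnable) parametric functions. So the content of the lemma is really a statement about the expressivity of the function classes used, not about the combinatorial structure, which matches on the nose.

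First I would fix notation: write a LOCAL algorithm $\textsf{A}$ as a family of maps $(\mathrm{msg}_\ell, \mathrm{upd}_\ell)$ with $s_v^{(\ell)} = \mathrm{upd}_\ell\bigl(s_v^{(\ell-1)}, \{\!\!\{ \mathrm{msg}_\ell(s_u^{(\ell-1)}, a_{v\leftarrow u}) : u \in N(v) \}\!\!\}\bigr)$, and an MPGNN $\textsf{N}$ in exactly the same shape with $\mathrm{msg}_\ell, \mathrm{upd}_\ell$ realized by the network's message and update modules together with a permutation-invariant aggregator over the neighbour multiset. For the direction ``for any $\textsf{A}$ there exists $\textsf{N}$'': since the graphs in $\mathcal{G}_a$ range over a (fixed, at most countable or compact) state/attribute space, I would argue that the MPGNN's message and update modules can be chosen to realize $\mathrm{msg}_\ell$ and $\mathrm{upd}_\ell$ exactly — either by appealing to the generic combinatorial case where states live in a finite or discretized alphabet (so the maps are finite tables and can be hard-coded into the weights), or by noting that the aggregator can be taken to be the identity-on-multisets (e.g. sum of one-hot encodings) so no information is lost in aggregation, and then the remaining finite map is implemented exactly. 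This yields $\textsf{A}_\ell(G_a) = \textsf{N}_\ell(G_a)$ for all $\ell$ by induction on $\ell$: the base case is the shared initial state, and the inductive step is immediate from the definitions once the modules agree. For the converse direction, ``for any $\textsf{N}$ there exists $\textsf{A}$'', one simply reads the MPGNN's own message and update functions as the specification of a LOCAL algorithm — a LOCAL algorithm is allowed arbitrary local computation, so there is nothing to check beyond observing that MPGNN aggregation is permutation-invariant over the neighbourhood and hence is a legitimate LOCAL operation. The induction is the same.

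The step I expect to be the main obstacle — or at least the one requiring the most care — is the ``$\textsf{A} \Rightarrow \textsf{N}$'' direction when one wants \emph{exact} equality (not $\epsilon$-approximation) uniformly over \emph{all} attributed graphs: if the node/edge attribute space is continuous and $\mathrm{msg}_\ell, \mathrm{upd}_\ell$ are arbitrary (possibly non-continuous) maps, a standard neural module cannot realize them exactly. The clean way around this is to restrict attention to the setting actually used in the paper, where inputs are drawn from a discrete alphabet (node ids, integer-ish features, edge weights from a finite set), so that each round's map has finite domain and can be encoded verbatim in the weights; the subsequent composition with the Transformer lemma then only needs $\epsilon$-approximation, which absorbs any residual numerical slack. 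I would therefore state Lemma~\ref{lem:equivalence} for this discrete regime (or make the finiteness/boundedness assumption explicit), prove exact equality there by the two inductions above, and defer all approximation to Lemma~\ref{lem:transformer}. The remaining bookkeeping — matching the number of MPGNN layers to LOCAL rounds one-to-one, and confirming the initial states coincide under the input encoding of \eqref{eq:edge_list} — is routine.
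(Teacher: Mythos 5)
Your argument is correct, but note that the paper itself supplies no proof of Lemma~\ref{lem:equivalence}: it states the lemma, displays the MPGNN template (Algorithm~\ref{model:gnn}, credited to \cite{loukas2019graph}), and implicitly defers the equivalence to that reference, reserving all of its written effort for Lemma~\ref{lem:transformer}. Your self-contained proof is the standard one and it correctly isolates the only two points that need care. First, the ``$\textsf{N}\Rightarrow\textsf{A}$'' direction is definitional, since LOCAL permits arbitrary local computation and sum-aggregation is a legitimate local operation. Second, for ``$\textsf{A}\Rightarrow\textsf{N}$'' the sum in Algorithm~\ref{model:gnn} must not destroy the neighbour multiset, and your fix (encode messages injectively, e.g.\ as sums of one-hot or base-$B$ tagged vectors, so the aggregate determines the multiset) is exactly what is needed; the messages in Algorithm~\ref{model:gnn} already receive the sender id $v_j$, which makes this encoding available. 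One remark: your concern about exact realizability over continuous attribute spaces is moot at the level of this lemma, because Algorithm~\ref{model:gnn} places no restriction on $\textsc{Msg}_\ell$ and $\textsc{Up}_\ell$ --- they are arbitrary functions, and only Lemma~\ref{lem:transformer} restricts them to MLPs and introduces the $\epsilon$. So your proposed division of labour (exact equality here, approximation deferred to the Transformer lemma) coincides with how the paper's two lemmas are actually stated, and your proof fills a gap the paper leaves to the literature.
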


\begin{algorithm}[h]
\caption{\label{model:gnn}Message passing graph neural network \cite{loukas2019graph}}
\begin{algorithmic}
\State \textbf{Initialization:} Set $x_{i}^{(0)} = a_i$ for all $v_i \in \mathcal{V}$. 

\For {layer $\ell =1, \ldots, d$}
    \For {every edge $e_{i \leftarrow j} \in \mathcal{E}^*$ (in parallel)}
        \vspace{-2mm}\State 
        $$
        m_{i \leftarrow j}^{(\ell)} = \textsc{Msg}_\ell\left(x_i^{(\ell-1)}, x_j^{(\ell-1)}, v_i, \, v_j, a_{i \leftarrow j} \right) 
        $$  
    \EndFor
    \vspace{-1mm}\For {every node $v_i \in \mathcal{V}$ (in parallel)}
        \vspace{-2mm}\State %
        $$
            x_i^{(\ell)} = \textsc{Up}_\ell\Big( \sum_{v_j \in \mathcal{N}_i^*} m_{i \leftarrow j}^{(\ell)}  \Big) 
        $$
    \EndFor
\EndFor
\vspace{-4mm}\State Set $x_i = x_i^{(d)}$.

\State \Return $x_i$ for every $v_i \in \mathcal{V}$.
\end{algorithmic}
\end{algorithm}

\begin{lemma}[Representing MPGNNs by Transformers]\label{lem:transformer} Assume the input format is given in \eqref{eq:edge_list} and $\textsc{Msg}$ and $\textsc{Up}$ in Algorithm \ref{model:gnn} are implemented by MLPs. Let $\textsf{T}_{\ell}(G_a)$ 
be the state $(x_1^{(\ell)}, \ldots, x_n^{(\ell)})$ of a Transformer network $\textsf{T}$ with edge list input in \ref{eq:edge_list}. For any MPGNN $\textsf{N}$ and small number $\epsilon$, there exists $\textsf{T}$ such that 
$$  
\|\textsf{T}_{O(\ell)}(G_a) - \textsf{N}_{\ell}(G_a)\|_{\infty} \leq \epsilon \quad \text{for every layer } \ell\ \ \text{and} \ \ G_a \in \mathcal{G}_a.
$$
\end{lemma}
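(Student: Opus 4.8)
The plan is to have the Transformer simulate Algorithm~\ref{model:gnn} one MPGNN layer at a time, spending a constant number of Transformer blocks per MPGNN layer (this constant is exactly what the $O(\ell)$ in the statement absorbs), and then to absorb the unavoidable softmax and MLP approximation errors into $\epsilon$ by a Lipschitz argument. Throughout I would work under the implicit assumption --- satisfied in the benchmark, where ids lie in $\{0,\dots,35\}$ --- that $\mathcal{G}_a$ consists of attributed graphs on at most $N$ nodes with ids and attributes in a fixed compact set, so that every intermediate quantity (the states $x_i^{(\ell)}$, the messages $m_{i\leftarrow j}^{(\ell)}$, the degrees $d_i\le N$, the sequence length $\le 4N$) ranges over a compact set on which $\textsc{Msg}_\ell$, $\textsc{Up}_\ell$, scalar multiplication, and normalized attention outputs are all $L$-Lipschitz. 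I would fix the token embedding so that each token of \eqref{eq:edge_list} carries a recoverable slot tag (one of ``source-id'', ``target-id'', ``edge-weight'', ``node-id'', ``node-feature'', identifiable from the periodic structure of the input plus a positional encoding), the relevant id(s) encoded near-orthogonally (one-hot in $\mathbb{R}^N$ suffices, as ids form a finite set), and scratch coordinates. The invariant I would maintain is: after the blocks simulating MPGNN layer $\ell$, the node-feature token of node $i$ holds a vector within $\delta$ in $\ell_\infty$ of $x_i^{(\ell)}$, with scratch reset; the base case $x_i^{(0)}=a_i$ holds since $a_i$ already sits at node $i$'s feature token.

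To simulate one layer I would use two Transformer blocks. In the \emph{message block}, each edge triple $(u,v,a_{u\leftarrow v})$ uses two attention heads performing approximate hard lookups keyed on id equality: one head makes the triple attend essentially only to the node-feature token whose id equals the receiver $u$, copying $x_u^{(\ell-1)}$; a second head fetches $x_v^{(\ell-1)}$ via the sender id $v$. The triple already carries $u,v,a_{u\leftarrow v}$, so the block's position-wise MLP evaluates $\textsc{Msg}_\ell(x_u^{(\ell-1)},x_v^{(\ell-1)},u,v,a_{u\leftarrow v})$ --- a universal MLP approximates $\textsc{Msg}_\ell$ on the compact domain --- and writes $m_{u\leftarrow v}^{(\ell)}$ into a scratch slot of the triple. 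In the \emph{aggregate-and-update block}, the feature token of node $i$ uses one head that attends uniformly to exactly the edge tokens with source slot $i$ (id matching again), returning the average $\tfrac{1}{d_i}\sum_{j}m_{i\leftarrow j}^{(\ell)}$; since attention only forms convex combinations, a second head counts that same set of tokens, recovering $d_i$ up to the known normalization by sequence length. The FFN then multiplies the average by $d_i$ to form the genuine sum $\sum_j m_{i\leftarrow j}^{(\ell)}$, applies $\textsc{Up}_\ell$, and overwrites the feature slot with $x_i^{(\ell)}$, clearing scratch. Self-loops in $\mathcal{N}_i^*$ are handled by also having the aggregation head attend to node $i$'s own feature token as an additional incident ``edge.''

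For the error bound, each approximate-hard-attention lookup can be made $\delta$-accurate by scaling the query/key weights by a large factor $\beta$: because ids come from a finite alphabet, matched logits exceed mismatched ones by a margin bounded away from zero, so the off-target attention mass is $O(Ne^{-c\beta})$; the FFN widths are chosen for $\delta$-accuracy by universal approximation on the compact domain. An input error of $\delta$ into either block therefore leaves it with error at most $cL\delta+\delta$, and iterating over the $O(d)$ blocks gives a final $\ell_\infty$ error at most $\delta\cdot(\text{const})^{O(d)}$; choosing $\delta$ small enough makes this at most $\epsilon$, uniformly over $\mathcal{G}_a$.

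The main obstacle is the aggregation step: attention intrinsically outputs a convex combination, whereas an MPGNN needs an unnormalized \emph{sum} over a variable-size neighborhood, so the construction is forced to also materialize the degree $d_i$ and then perform a bounded-domain multiplication inside the FFN --- precisely the kind of counting that Transformers are delicate about under finite precision. The accompanying nuisance is forcing attention to act as an exact id-match lookup rather than a diffuse average. Both are exactly what the two standing restrictions buy: a fixed finite id alphabet gives the logit-separation margin needed for sharp lookups, and a fixed size bound $N$ makes the average-to-sum normalization a known constant and keeps every quantity in a compact set on which the MLP approximation and the Lipschitz error accounting hold. Relaxing either (unbounded ids or unbounded graph size) would break the uniform $\epsilon$ guarantee, so the statement should be read as holding over such a bounded family $\mathcal{G}_a$.
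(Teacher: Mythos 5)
Your proposal is correct and follows the same overall skeleton as the paper's proof: tag each token of the edge-list input with its slot via token/positional embeddings, use attention keyed on node-id matching to route node states to edge tokens, compute $\textsc{Msg}_\ell$ with the position-wise MLP, aggregate back at the node tokens, apply $\textsc{Up}_\ell$ with another MLP, and repeat a constant number of blocks per MPGNN layer so that the total depth is $O(\ell)$. The one place you genuinely diverge is the aggregation step. The paper imports a \textbf{SUM} attention primitive from \cite{feng2024towards} (Lemma~\ref{lem:copy}), which asserts that a single attention head can directly output the unnormalized sum $\sum_{v_j\in\mathcal{N}_i^*} m_{i\leftarrow j}^{(\ell)}$ over the matched set; with that primitive in hand, the paper's Block 4 is immediate and no degree information is ever needed. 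You instead take attention at face value as producing only convex combinations, so you materialize the mean, recover the degree $d_i$ with a counting head (relying on the known sequence-length normalization), and multiply inside the FFN. Your route is more self-contained and makes explicit exactly where the finite id alphabet and the bound on graph size are used (logit-separation margins for sharp lookups, compactness for the multiplication and the Lipschitz error propagation) --- considerations the paper leaves implicit in the citation. The cost is an extra approximation step (bounded-domain multiplication by an MLP) and a slightly more delicate error budget; the paper's route is shorter but rests entirely on the correctness of the borrowed \textbf{SUM} lemma. Your block count also differs (two blocks per layer versus the paper's four, since the paper first broadcasts node and edge features through intermediate placeholder slots before computing the message), but both are $O(1)$ per layer and this does not affect the statement.
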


\begin{proof} The proof is heavily built on the proof of Theorem 1 in \cite{wu2024can}. 

\textbf{Token Embedding and Positional Embedding:} The token embedding includes the token type $e^{\text{type1}}$ ($0$ for the node feature; $1$ for node id; $2$ for edge feature), refined token type $e^{\text{type2}}$ ($0$ for node feature, $1$ for the node id tokens in initial state sentences, $2$ for the target node tokens in the edge list, $3$ for the source node tokens in the edge list, $4$ for edge feature tokens), and the token id $e^{\text{token}}$ (from $0$ to $|V| - 1$). The two-dimensional positional embedding includes the embedding for initial state tokens $e^{\text{pos1}}$ ($0$ for edge list tokens, $1$ for the first two elements of initial state sentences, $2$ for the second two elements of initial state sentences, etc.), embedding for edge list tokens $e^{\text{pos2}}$ ($0$ for initial state sentence tokens, $1$ for the first three elements of the edge list, $2$ for the second three elements of the edge list, etc.). There are also placeholders to put the intermediate states of MPGNNs. 

\textbf{Block 1 - Node feature Preparation:} The goal of the first block is to broadcast the node feature $a_i$ from the initial state sentence tokens to node tokens in edge list. (1) Use MLPs to recover the digits of the node features $a_i$ and put them in the first placeholder if $e^{type}_k == 0$; (2) Copy the first placeholder from initial state sentence token to its previous node token by using \textbf{COPY} in Lemma \ref{lem:copy} and setting $\mathcal{S}_k = \{j|(e^{\text{pos1}}_k - e^{\text{pos1}}_j)^2 < \delta \}$; (3) Broadcast the first placeholder with \textbf{MEAN} in Lemma \ref{lem:copy} and setting $\mathcal{S}_k = \{j|(e^{\text{type1}}_k - e^{\text{type1}}_j)^2 + (e^{\text{token}}_k - e^{\text{token}}_j)^2 < \delta \}$. Now the state for every node token $u_i$ is $[e^{\text{type1}}, e^{\text{type2}}, e^{\text{token}}, e^{\text{pos1}}, e^{\text{pos2}}, a_{u_i} ]$; (4) Use MLP to put the token id into the second placeholder if $e^{\text{type1}} == 2$ or $e^{\text{type1}} == 3$. Now the state for every node token $u_i$ is $[e^{\text{type1}}, e^{\text{type2}}, e^{\text{token}}, e^{\text{pos1}}, e^{\text{pos2}}, a_{u_i}, u_i ]$

\textbf{Block 2 - Edge feature Preparation:} The goal of the second block is to copy the prepare the edge features for the target node token, which will be used for message passing. (1) Use MLPs to recover the digits of the edge feature tokens and put them in the third placeholder if $e^{\text{type1}} == 2$; (2) Copy the third placeholder from the edge feature token to the node tokens by using \textbf{SUM} in Lemma \ref{lem:copy} and setting $\mathcal{S}_k = \{j|(e^{\text{pos2}}_k - e^{\text{pos2}}_j)^2 < \delta \}$. Now the state for every target node token $u_i$ of the $i$-th edge is $[e^{\text{type1}}, e^{\text{type2}}, e^{\text{token}}, e^{\text{pos1}}, e^{\text{pos2}}, a_{u_i}, u_i, a_{u_i \leftarrow v_i} ]$; (3) Use MLPs to put the node feature in the fourth placeholder if $e^{\text{type1}} == 3$; (4) Copy the fourth placeholder from the source node token to the edge feature token and target node token by using \textbf{SUM} in Lemma \ref{lem:copy} and setting $\mathcal{S}_k = \{j|(e^{\text{pos2}}_k - e^{\text{pos2}}_j)^2 < \delta \}$. Now the state for every target node token $u_i$ of the $i$-th edge is $[e^{\text{type1}}, e^{\text{type2}}, e^{\text{token}}, e^{\text{pos1}}, e^{\text{pos2}}, a_{u_i}, u_i, a_{u_i \leftarrow v_i}, a_{v_i} ]$; (5) Put node id $u_i$ into the fifth placeholder if $e^{\text{type1}} == 2$; (6) Copy the fifth placeholder from the source node token to the edge feature token and target node token by using \textbf{SUM} in Lemma \ref{lem:copy} and setting $\mathcal{S}_k = \{j|(e^{\text{pos2}}_k - e^{\text{pos2}}_j)^2 < \delta \}$. Now the state for every target node token $u_i$ of the $i$-th edge is $[e^{\text{type1}}, e^{\text{type2}}, e^{\text{token}}, e^{\text{pos1}}, e^{\text{pos2}}, a_{u_i}, u_i, a_{u_i \leftarrow v_i}, a_{v_i}, v_i ]$

\textbf{Block 3 - Message Preparation:} The goal of the third block is to compute $m_{u_i \leftarrow v_i}$. (1) Use MLPs to compute $\textsc{MSG}_{\ell}$ and place the results in the sixth placeholder. Now the state for every target node token $u_i$ of the $i$-th edge is $[e^{\text{type1}}, e^{\text{type2}}, e^{\text{token}}, e^{\text{pos1}}, e^{\text{pos2}}, a_{u_i}, a_{u_i \leftarrow v_i}, a_{v_i}, \textsc{MSG}_{\ell}(a_{u_i}, u_i, a_{u_i \leftarrow v_i}, a_{v_i}, v_i)]$; (2) Use MLPs to clean up the remaining placeholders. Now the state for every node token $u_i$ is $[e^{\text{type1}}, e^{\text{type2}}, e^{\text{token}}, e^{\text{pos1}}, e^{\text{pos1}}, \textsc{MSG}_{\ell}(a_{u_i}, u_i, a_{u_i \leftarrow v_i}, a_{v_i}, v_i) ]$.

\textbf{Block 4 - Message Passing:} The goal of the fourth block is to compute $x_i^{(1)}$. (1) Use two attention heads to perform the sum operation in aggregation. This is achieved by using \textbf{SUM} in Lemma \ref{lem:copy} for the first placeholder and setting $\mathcal{S}_k = \{j|(e^{\text{token}}_k - e^{\text{token}}_j)^2 + (e^{\text{type2}}_k - e^{\text{type2}}_j)^2 < \delta\}$. Now the state for every node token $u_i$ is $[e^{\text{type1}}, e^{\text{type2}}, e^{\text{token}}, e^{\text{pos1}}, e^{\text{pos2}}, \sum_{v_j \in \mathcal{N}_i^*} \textsc{MSG}_{\ell}(a_{u_i}, u_i, a_{u_i \leftarrow v_i}, a_{v_i}, v_i) ]$; (2) Use MLPs to compute $\textsc{UP}_{\ell}$ and obtain $x_i^{(1)}$

After four blocks, the final state for every node token $u_i$ is given by $[e^{\text{type1}}, e^{\text{type2}}, e^{\text{token}}, e^{\text{pos1}}, e^{\text{pos2}}, x_i^{(1)} ]$. $x_i^{(l)}$ can be obtained by repeating the above four blocks $k$ times. 
\end{proof}

\begin{lemma}\label{lem:copy}\cite{feng2024towards} Let $n \in \mathbb{N}$ be an integer and $\bm{x}_1, \cdots, \bm{x}_n$ be a sequence of vectors where $\bm{x}_i = (\tilde{\bm{x}}_i, r_i, 1) \in [-M, M]^{d+2}$ where $M$ is a large constant. Let $\bm{K}, \bm{Q}, \bm{V} \in \mathbb{R}^{d' \times (d+2)}$ be any matrices with $\|\bm{V}\|_{\infty} \leq 1$ and let $0 < \rho, \delta < M$ be any real numbers. Denote $\bm{q}_i = \bm{Q}\bm{x}_i, \bm{k}_j = \bm{K}\bm{x}_i, \bm{v}_j = \bm{V}\bm{x}_j$. Define a matching set $\mathcal{S} = \{j||\bm{q}_i^T\bm{k}_j| \leq \rho \}$. Define two following operations
\begin{itemize}
    \item \textbf{COPY}: The output is a sequence of vectors $\bm{u}_1, \cdots, \bm{u}_n$ with $\bm{u}_i = \bm{v}_{\text{pos}(i)}$, where $\text{pos}(i) = \arg\max_{j \in \mathcal{S}_i} r_j$.
    \item \textbf{MEAN, MAX, SUM}: The output is a sequence of vectors $\bm{u}_1, \cdots, \bm{u}_n$, where $\bm{u}_i = \square_{j \in \mathcal{S}_i} \bm{v}_j$ and $\square$ is min or max or sum or mean.
\end{itemize}
Specifically, for any sequence of vectors $\bm{x}_1, \bm{x}_2, \cdots, \bm{x}_n$, denote the corresponding output of the attention layer as $\bm{o}_1, \bm{o}_2, \cdots, \bm{o}_n$. Then, we have $\|\bm{u}_i - \bm{o}_i\|_{\infty} \leq \epsilon$ for all $i \in [n]$ and $\mathcal{S} \neq \emptyset$.

\end{lemma}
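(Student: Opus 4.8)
The statement to prove is Lemma~\ref{lem:copy}, the attention-as-copy/aggregation primitive attributed to \cite{feng2024towards}. The plan is to construct, for each of the two regimes (\textbf{COPY} versus \textbf{MEAN/MAX/SUM}), an explicit attention head whose softmax weights concentrate on the matching set $\mathcal{S}_i$, and then control the approximation error by choosing the attention temperature (equivalently, by scaling $\bm{Q}$ and $\bm{K}$) large enough.

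First I would set up the scoring. Given the query $\bm{q}_i = \bm{Q}\bm{x}_i$ and keys $\bm{k}_j = \bm{K}\bm{x}_j$, I would work with the scaled logits $\beta\,\bm{q}_i^{T}\bm{k}_j$ for a large temperature $\beta>0$ to be fixed at the end. By the definition of the matching set, $j\in\mathcal{S}_i$ means $|\bm{q}_i^{T}\bm{k}_j|\le\rho$, while $j\notin\mathcal{S}_i$ means $|\bm{q}_i^{T}\bm{k}_j|>\rho$. To make the softmax usable I would instead use the surrogate score $-(\bm{q}_i^{T}\bm{k}_j)^2$ (realizable by augmenting the last coordinates of $\bm{x}_i$, which carry the constant $1$ and a scratch slot $r_i$, so that the inner product can encode a squared term — this is standard and is exactly why the vectors are padded with $(r_i,1)$). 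Then in-set tokens have surrogate score $\ge -\rho^2$ and out-of-set tokens have score $< -\rho'^2$ for some $\rho' > \rho$ bounded away from $\rho$ by the discreteness/separation assumption implicit in the construction. After scaling by $\beta$, the softmax weight on any out-of-set token is at most $e^{-\beta(\rho'^2-\rho^2)}$ times the total in-set mass, so the aggregate out-of-set mass is at most $n\,e^{-\beta(\rho'^2-\rho^2)}$, which I can drive below any target $\epsilon'$ by taking $\beta$ large; this uses $\mathcal{S}_i\neq\emptyset$ so the normalizer is bounded below.

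Next I would treat the two operations. For \textbf{SUM/MEAN}: with the value matrix $\bm{V}$ (so $\|\bm{V}\|_\infty\le 1$ gives $\|\bm{v}_j\|_\infty\le (d+2)M$, a fixed bound), the attention output is $\sum_j \alpha_{ij}\bm{v}_j$ where $\alpha_{ij}$ is the softmax weight; since the in-set weights are within $\epsilon'$ of uniform-on-$\mathcal{S}_i$ (after renormalization) and the out-of-set mass is $\le\epsilon'$, the output is within $O(\epsilon' \cdot |\mathcal{S}_i| \cdot \text{(value bound)})$ of $\mathrm{mean}_{j\in\mathcal{S}_i}\bm{v}_j$; SUM is obtained by multiplying through by a known count $|\mathcal{S}_i|$ recovered by a parallel head that aggregates the constant $1$, and MAX is obtained the analogous way using a monotone reweighting (or, in the version actually needed downstream, $|\mathcal{S}_i|$ is small/bounded so the count is benign). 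For \textbf{COPY}: here the trick is to break ties toward the unique maximizer of $r_j$ over $\mathcal{S}_i$; I would add $\gamma r_j$ to the score with $\gamma$ chosen small relative to the in/out separation $\rho'^2-\rho^2$ but large relative to $1/\beta$, so that among in-set tokens the softmax concentrates on $\arg\max_{j\in\mathcal{S}_i} r_j$ while out-of-set tokens remain exponentially suppressed; then the output approximates $\bm{v}_{\mathrm{pos}(i)}$.

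The main obstacle — and the place where I would be most careful — is the uniform control of the normalizer and the quantitative choice of $\beta,\gamma$: I need a gap $\rho'^2-\rho^2 \ge c > 0$ that holds \emph{uniformly} over all inputs $\bm{x}_1,\dots,\bm{x}_n$ in the bounded domain and all $i$, and uniformly over the (finitely many, by boundedness and the discreteness of node ids) configurations that arise, so that a single choice of parameters yields the $\epsilon$ bound for every sequence. This is where the hypotheses $\bm{x}_i\in[-M,M]^{d+2}$, $\|\bm{V}\|_\infty\le 1$, $0<\rho,\delta<M$, and $\mathcal{S}_i\neq\emptyset$ all get used, and where I would cite the precise estimates of \cite{feng2024towards} rather than re-derive the elementary but tedious softmax inequalities; the remaining steps (padding the tokens to realize squared scores, adding the $\gamma r_j$ tie-breaker, the count head for SUM) are routine once the concentration estimate is in hand.
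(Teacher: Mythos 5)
You should first note that the paper itself gives no proof of this lemma: it is imported verbatim from \cite{feng2024towards} and used as a black box inside the proof of Lemma~\ref{lem:transformer}. So the only meaningful comparison is with the proof in that reference, and your sketch does follow its standard strategy for the \textbf{COPY} and \textbf{MEAN} cases: scale the attention logits by a large temperature so that softmax mass concentrates on the matching set $\mathcal{S}_i$ (using $\mathcal{S}_i \neq \emptyset$ to lower-bound the normalizer), and for \textbf{COPY} add a tie-breaking term proportional to $r_j$ so that the mass further concentrates on $\arg\max_{j\in\mathcal{S}_i} r_j$. That part of your argument is sound and is essentially the argument in the cited work.

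There are, however, two concrete gaps. First, the \textbf{SUM} and \textbf{MAX} cases are not closed by what you wrote. Softmax attention inherently normalizes, so concentration on $\mathcal{S}_i$ yields a (near-)\emph{mean}, and your proposed fix --- ``a parallel head that aggregates the constant $1$'' to recover $|\mathcal{S}_i|$ --- does not work as stated: a head whose values are identically $1$ outputs $1$ regardless of how its attention is distributed, so it cannot produce the count. Recovering $|\mathcal{S}_i|$ requires a different device (e.g.\ attending uniformly over all $n$ tokens with a value equal to an indicator of membership, which in turn requires that indicator to be computable at the key token, plus access to $n$), and \textbf{MAX} is not ``a monotone reweighting'' of an average --- it needs its own concentration argument with the value fed into the score. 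Second, your error bound hinges on a uniform gap $\rho'^2 - \rho^2 \ge c > 0$ between in-set and out-of-set scores, but no such gap appears in the lemma's hypotheses (the parameter $\delta$, which plays that role in the source, is introduced but never used in the definition of $\mathcal{S}$ as stated here). You correctly flag this as the delicate point, but the proof cannot be completed without explicitly importing that separation assumption; deferring it to ``the precise estimates of \cite{feng2024towards}'' leaves the quantitative heart of the lemma unproved.
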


\section{Dataset}
\label{APP_Datasets}
\subsection{Primitive graph patterns}

We select primitive graph patterns with varying node counts and edge numbers. First, 3-node patterns are the simplest structures in graphs, so we begin with patterns like the triangle, V-structure (V-S), feedforward loop (FFL), and feedback loop (FBL). In directed graphs, the V-S has two edges, while both the FFL and FBL have three. However, the FFL and FBL differ in the direction of one edge. For 4-node patterns, we select the tailed-triangle (T-triangle), square, and diamond for undirected graphs, and the directed-diamond for directed graphs. Both the T-triangle and square have four edges but differ in their connectivity, while the diamond includes five edges. Finally, we introduce the house pattern, a 5-node structure combining a triangle and a square. The summarization of patterns shown in Table~\ref{pattern_illustration}.
\begin{table}[ht!]    
\centering
\small
\caption{The selected primitive graph patterns (5 undirected and 4 directed patterns)}
\resizebox{\textwidth}{!}{\begin{tabular}{p{1.5cm}|p{4cm}|l|p{1.5cm}|p{4cm}|lll}
\toprule
\multicolumn{3}{c}{Undirected Pattern} & \multicolumn{3}{c}{Directed Pattern} \\
\midrule
Name & Terminology-based Description & Structure & Name & Terminology-based Description & Structure \\
\midrule
Triangle & A motif consisting of three nodes where each node is connected to the other two, forming a triangle & \raisebox{-0.7cm}{\includegraphics[width=1cm]{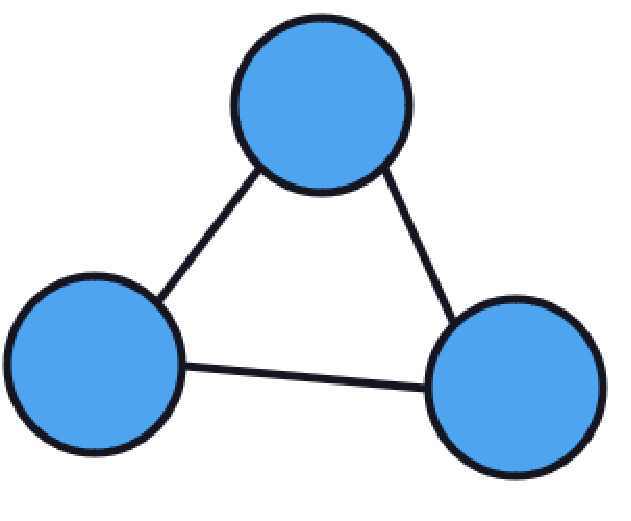}} & V-structure (V-S) & two nodes have directed edges pointing toward a common target node & 
\raisebox{-0.7cm}{\includegraphics[width=1cm]{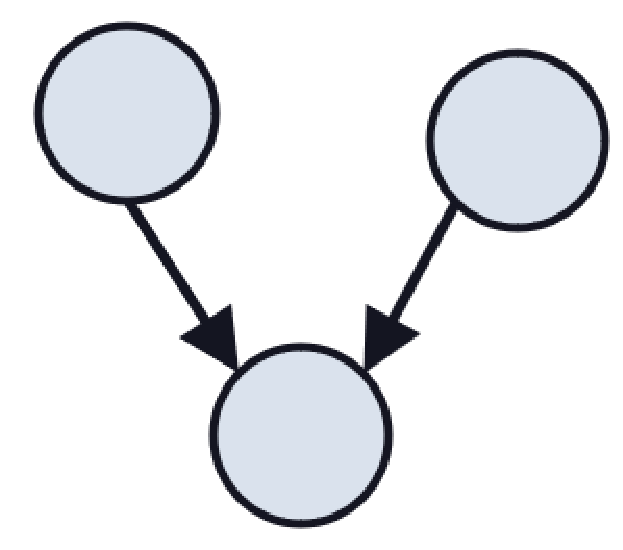}} 
\\
\midrule
Tailed-triangle (T-triangle) & A triangle with an additional node connected to one of the vertices of the triangle & \raisebox{-0.7cm}{\includegraphics[width=1cm]{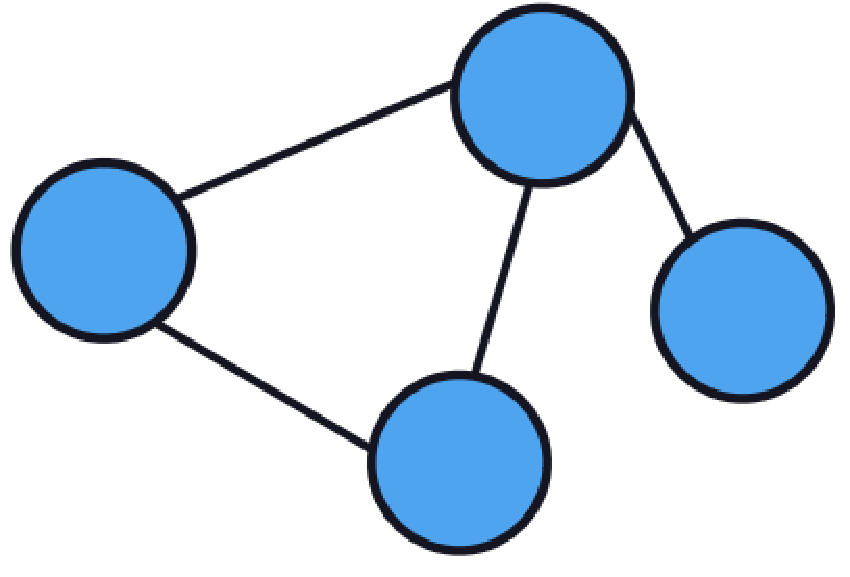}} & Feedforward loop (FFL) & A 3-node directed motif in which one source node influences a target node through two distinct pathways &\raisebox{-0.7cm}{\includegraphics[width=1cm]{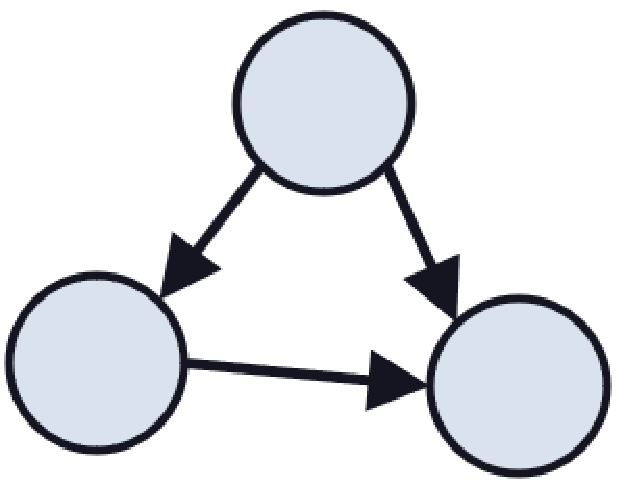}}  \\
\midrule
Square & A 4-node cycle where each node is connected to exactly two other nodes &\raisebox{-0.7cm}{\includegraphics[width=1cm]{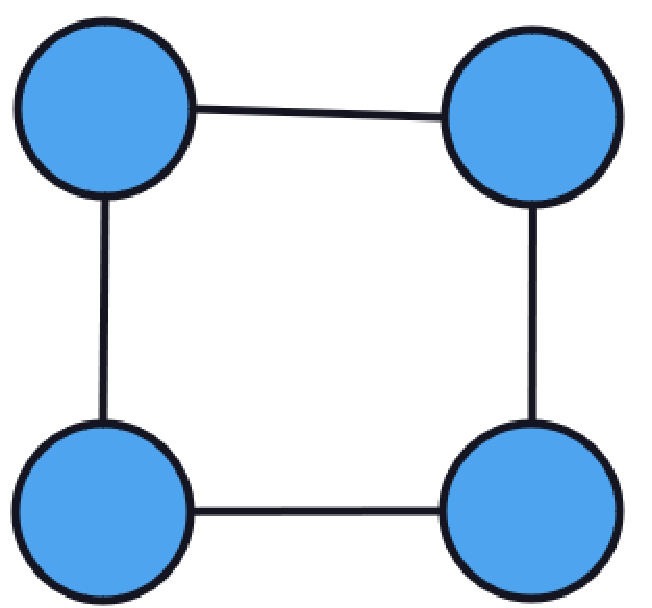}}  & Feedback loop (FBL) & A 3-node directed cycle where the nodes form a loop & \raisebox{-0.7cm}{\includegraphics[width=1cm]{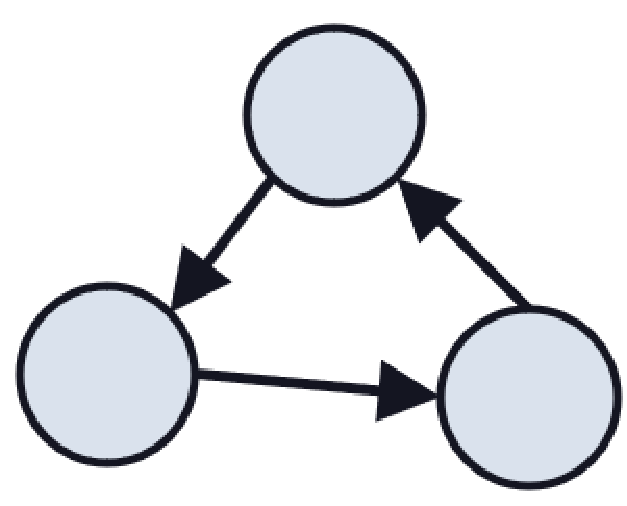}} \\
\midrule
\raisebox{-0.7cm}{Diamond}
 & \raisebox{-0.7cm}{A 4-node motif with five edges} &\raisebox{-0.8cm}{\includegraphics[width=1cm]{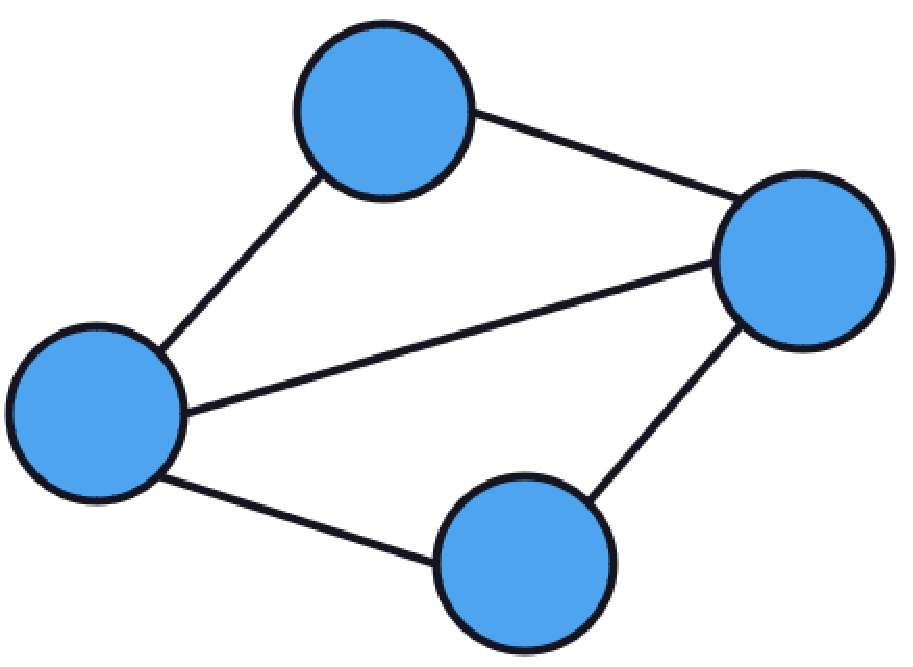}}  & Directed-diamond (D-diamond) & A 4-node motif in a directed graph where one node has directed edges to two intermediate nodes, and both of those intermediate nodes have directed edges to a common target node. & \raisebox{-1cm}{\includegraphics[width=1.2cm]{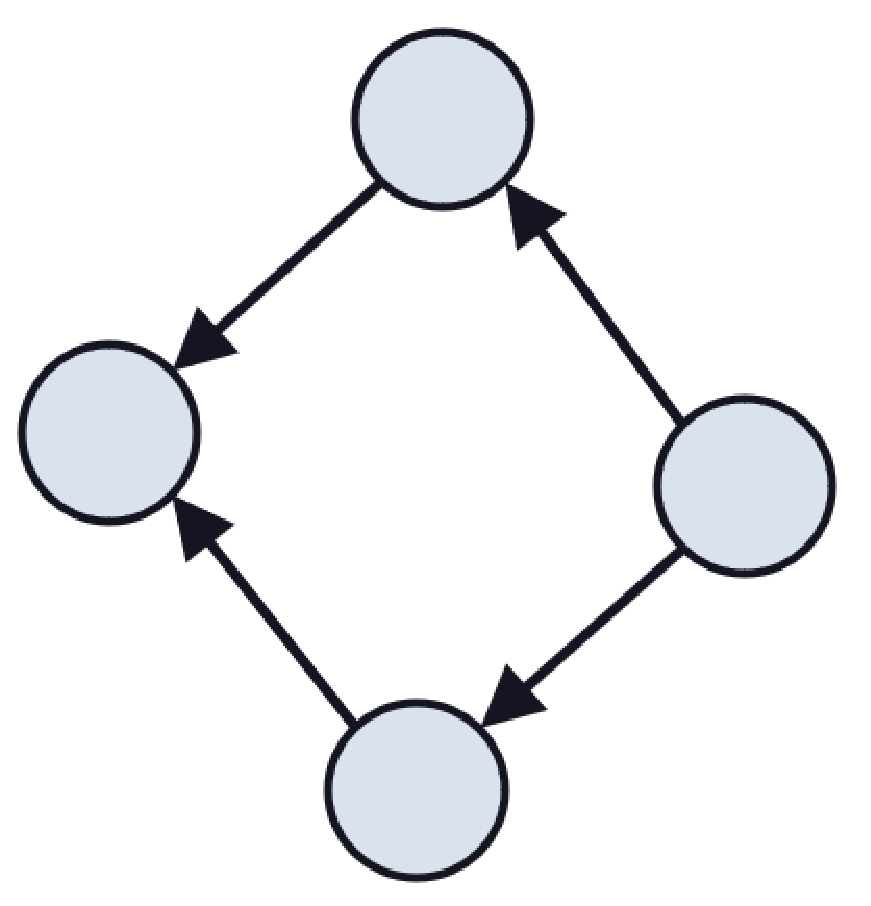}} \\
 \midrule
House & A motif resembling the shape of a house with 5 nodes and 6 edges. The vertices and edges are arranged such that there is a triangular "roof" on top of a square or rectangular "base." &\raisebox{-1.2cm}{\includegraphics[width=1cm]{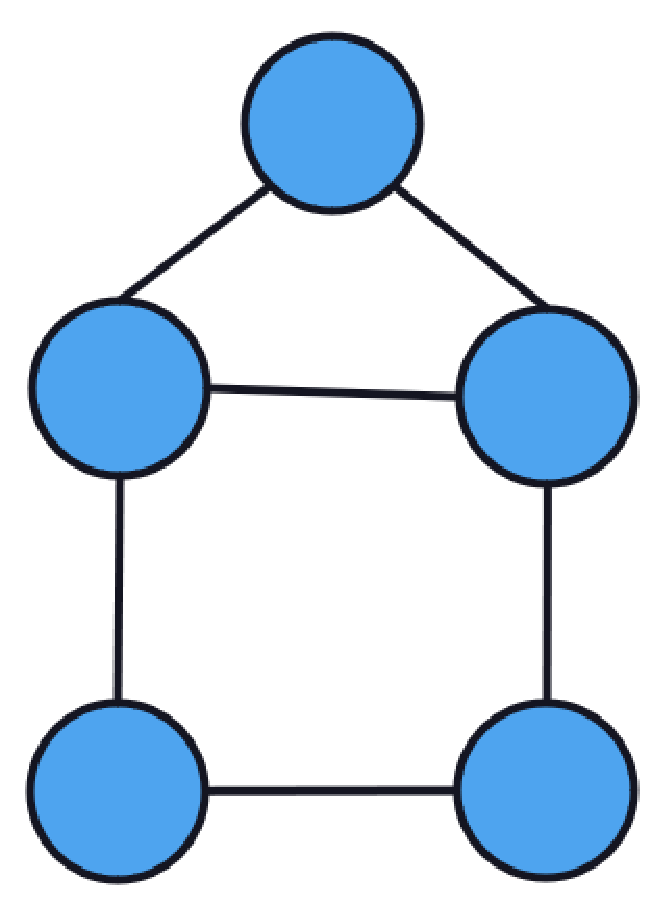}}  & & & \\
\bottomrule
\end{tabular}}
    \label{pattern_illustration}
\end{table}

\subsection{Synthetic dataset details}
We generate different datasets for various tasks. 
In the pattern detection task, we randomly generate graphs on small (5-15 nodes), medium (15-25 nodes), and large (25-35 nodes) scales. Besides, to keep the numbers of different patterns reasonable, we ensure the average density of graphs is 0.5 for the undirected graph and 0.25 for the directed graph. We have 1893 undirected graphs and 1313 directed graphs in total. 
In the modification task, we randomly generate the graph with constraints. For example, we assume that LLMs need to add at most two nodes to achieve the goal, as in the Square $\rightarrow$ House sets. This means the original graphs contain at least one square pattern but lack a house pattern, requiring LLMs to modify the graph to create the house pattern. Similarly, Square $\rightarrow$ Diamond needs to add one edge while Diamond $\rightarrow$ Square needs to minus one edge. FFL $\rightarrow$ FBL needs to change one edge direction. 
In the frequent subgraph task, we restrict the generated graphs to include at least one given pattern, specifically a triangle. 
In the discriminative pattern learning task, we employ the BA-2motif dataset~\citep{luo2020parameterized}. We split it into 900 graphs to evaluate the discriminative pattern learning of LLMs and 100 for classification testing. 
For the evaluation, we will randomly select 250 instances from the small-scale dataset, 50 instances from the medium-scale dataset, and 50 instances from the large-scale dataset to evaluate the metrics due to the cost of APIs.

\begin{table}[ht!]
    \centering
    \small
        \caption{Details of synthetic datasets}
    \resizebox{\textwidth}{!}{\begin{tabular}{p{2cm}|p{2cm}|l|ccccc}
    \toprule
Task & \multicolumn{2}{c}{Dataset type} & difficulty & Num & AVG. node & AVG. edge & AVG. density \\
\midrule
\multirow{6}{*}{ \makecell[l]{Pattern \\detection} } & \multirow{3}{*}{\makecell[l]{Undirected\\ graph } } & \multirow{3}{*}{ Evaluation } & Small & 250 & 9.50 & 22.80 & 0.52 \\
& & & Medium & 250 & 19.50 & 96.20 & 0.52 \\
& & & Large & 250 & 29.50 & 247.96 & 0.58 \\
\cmidrule(lr){2-8}
& \multirow{3}{*}{\makecell[l]{Directed \\graph}  } & \multirow{3}{*}{ Evaluation } & Small & 250 & 9.50 & 23.44 & 0.26 \\
& & & Medium & 250 & 19.50 & 96.98 & 0.26 \\
& & & Large & 250 & 29.50 & 223.58 & 0.26 \\
\midrule
\multirow{12}{*}{ Modification } & \multirow{9}{*}{ \makecell[l]{Undirected\\ graph } } & \multirow{3}{*}{\makecell[l]{Square $\rightarrow$\\ House}  } & Small & 166 & 9.71 & 25.07 & 0.56 \\
& & & Medium & 347 & 14.67 & 22.09 & 0.33 \\
& & & Large & 476 & 18.54 & 24.89 & 0.26 \\
& & \multirow{3}{*}{\makecell[l]{Square $\rightarrow$\\ Diamond}  } & Small & 144 & 9.91 & 10.96 & 0.27 \\
& & & Medium & 332 & 15.32 & 17.95 & 0.19 \\
& & & Large & 484 & 19.54 & 23.45 & 0.16 \\
& & \multirow{3}{*}{\makecell[l]{Diamond $\rightarrow$\\ Square} } & Small & 111 & 8.95 & 10.98 & 0.34 \\
& & & Medium & 180 & 12.59 & 13.92 & 0.26 \\
& & & Large & 205 & 14.52 & 16.03 & 0.24 \\
\cmidrule(lr){2-8}
& \multirow{3}{*}{ \makecell[l]{Directed \\graph} } & \multirow{3}{*}{\makecell[l]{FFL $\rightarrow$\\ FBL}  } & Small & 227 & 9.63 & 14.64 & 0.18 \\
& & & Medium & 396 & 13.69 & 19.08 & 0.13 \\
& & & Large & 493 & 16.60 & 23.48 & 0.12 \\
\midrule
\multirow{21}{*}{\makecell[l]{Frequent \\subgraph}  } & \multirow{12}{*}{ \makecell[l]{Undirected\\ graph }} & \multirow{3}{*}{ Triangle } & Small & 231 & 9.87 & 17.61 & 0.39 \\
& & & Medium & 248 & 19.46 & 56.04 & 0.31 \\
& & & Large & 247 & 29.46 & 149.27 & 0.35 \\
& & \multirow{3}{*}{ Square } & Small & 217 & 10.14 & 19.35 & 0.40 \\
& & & Medium & 249 & 19.49 & 56.32 & 0.31 \\
& & & Large & 249 & 29.49 & 152.85 & 0.35 \\
& & \multirow{3}{*}{ Diamond } & Small & 214 & 10.19 & 20.12 & 0.42 \\
& & & Medium & 244 & 19.44 & 63.30 & 0.35 \\
& & & Large & 246 & 29.45 & 168.59 & 0.39 \\
& & \multirow{3}{*}{ House } & Small & 205 & 10.37 & 20.50 & 0.41 \\
& & & Medium & 250 & 19.50 & 60.47 & 0.33 \\
& & & Large & 247 & 29.46 & 156.12 & 0.37 \\
\cmidrule(lr){2-8}
& \multirow{9}{*}{ \makecell[l]{Directed \\graph} } & \multirow{3}{*}{ FFL } & Small & 238 & 9.71 & 17.93 & 0.20 \\
& & & Medium & 248 & 19.48 & 59.90 & 0.17 \\
& & & Large & 250 & 29.50 & 154.67 & 0.18 \\
& & \multirow{3}{*}{ FBL } & Small & 208 & 10.20 & 20.79 & 0.21 \\
& & & Medium & 244 & 19.41 & 64.15 & 0.18 \\
& & & Large & 248 & 29.48 & 156.56 & 0.18 \\
& & \multirow{3}{*}{ D-Diamond } & Small & 187 & 10.60 & 22.33 & 0.21 \\
& & & Medium & 248 & 19.48 & 62.01 & 0.17 \\
& & & Large & 247 & 29.47 & 150.57 & 0.18 \\
\midrule
\multirow{2}{*}{ \makecell[l]{Discriminative \\ pattern learning} } & Discrimination & - & - & 900 & 25 & 25.5 & 0.09 \\
& Classification & - & - & 100 & 25 & 25.5 & 0.09 \\
\bottomrule
\end{tabular}}
    \label{synthetic_dataste}
\end{table}

\subsection{Real-world dataset details}
To assess the effectiveness of our approach in practical scenarios, we utilize various classical real-world datasets with known ground-truth labels. These datasets encompass six molecule datasets: MUTAG, ogbg-molhiv, BBBP, Benzenes, Alkane-Carbonyl, and Fluoride-Carbonyl; one bioinformatics datasets: ENZYMES; one computer vision dataset: Fingerprint; and two social network datasets: IMDB-BINARY and IMDB-MULTI.
Detailed information regarding datasets is delineated in Table~\ref{tab:app_dataset}.

\textbf{MUTAG}~\citep{debnath1991structure}. The MUTAG dataset is a collection of nitroaromatic compounds and its goal is to predict their mutagenicity on Salmonella typhimurium.
In our evaluation, we use the PyGeometric~\footnote{\url{https://www.pyg.org/}} version of the dataset, which comprises 188 molecular graphs, for conducting the binary classification task.

\textbf{OGBG-HIV}~\citep{hu2020open,wu2018moleculenet}. This dataset encompasses 41,127 graphs, with each graph representing a molecule where nodes denote atoms and edges represent chemical bonds.
The primary objective is to predict whether molecules inhibit HIV, constituting a binary classification task.
For pattern discrimination, we sample 200 molecular graphs fairly, comprising 100 positive instances (i.e., inhibit HIV) and 100 negative instances.
Subsequently, we select another 40 test graphs for pattern-based classification purposes.

\textbf{OGBG-BBBP}~\citep{hu2020open}. The Blood–brain barrier penetration (BBBP) dataset originates from a study~\citep{martins2012bayesian} focusing on modeling and predicting barrier permeability.
As a membrane separating circulating blood and brain extracellular fluid, the blood–brain barrier blocks most drugs, hormones and neurotransmitters.
This dataset includes binary labels for 2,050 compounds on their permeability properties.
In alignment with our experimental conditions, we randomly select 500 compounds for pattern discrimination and 50 compounds for pattern-based binary classification, ensuring an equitable distribution of positive and negative samples.

\textbf{IMDB-BINARY}~\citep{yanardag2015deep}. It is a movie collaboration dataset that consists of the ego networks of 1,000 actors/actresses who have shared roles in movies listed on IMDB, originating from the Action and Romance genres.
In each graph, the nodes symbolize actors/actresses, with edges connecting them if they have appeared together in a movie.
For our experiments, we partition 80\% of the 1,000 ego networks for pattern discrimination purposes, reserving the remaining networks for a binary classification task, i.e., aiming to predict whether a movie graph is an action or romance network.

\textbf{IMDB-MULTI}~\citep{yanardag2015deep}. IMDB-MULTI is a multi-class extension of IMDB-BINARY, comprising a balanced collection of ego-networks (1,500 graphs) sourced from Comedy, Romance, and Sci-Fi genres.
In our study, a subset of 100 graphs from each genre is designated for extracting notable patterns, while a total of 60 graphs are reserved as test samples for a multi-classification objective.
This task involves predicting whether a movie graph corresponds to a Comedy, Romance, or Sci-Fi network.

\textbf{Fingerprint}~\citep{morris2020tudataset}. The Fingerprint dataset is a multi-classification dataset obtained from fingerprint images, where 2,149 fingerprints are transformed into graphs through image filtering and region extraction processes to isolate relevant areas.
In our research, we engage in a 3-class classification endeavor utilizing this dataset.
We extract patterns by sampling 100 fingerprint graphs from each of the three distinct classes and reserve 20 graphs from each class for the classification task.

\textbf{ENZYMES}~\citep{borgwardt2005protein}. The ENZYMES dataset comprises 600 protein tertiary structures sourced from the BRENDA enzyme database, featuring six distinct enzymes.
In our study, we focus on three out of the six enzymes to perform a 3-class classification task using LLMs. Each class is represented by 80 graphs utilized for pattern discrimination, while the remaining graphs are earmarked for evaluation purposes.

\textbf{Benzene}~\citep{sanchez2020evaluating}. Benzene consists 12,000 molecular graphs from the ZINC15~\citep{sterling2015zinc} database, which can be classified into two classes.
The main goal is to determine if a Benzene ring is existed in each molecule.
In our settings, 200 graphs (1:1 for positive and negative) are sampled uniformly for LLM-based pattern detection, and a hexagon made up of carbon atoms is used as the target pattern in LLM prompts.

\textbf{Alkane-Carbonyl}~\citep{sanchez2020evaluating}. The Alkane-Carbonyl dataset comprises 4,326 molecule graphs categorized into two distinct classes based on the presence of specific functional groups.
Positive samples correspond to molecules containing both alkane and carbonyl (C=O) functional groups.
To analyze patterns, we select 100 molecules from each class, aiming to identify whether a molecule includes both alkane and carbonyl functional groups.

\textbf{Fluoride-Carbonyl}~\citep{sanchez2020evaluating}. The Fluoride-Carbonyl dataset has 8,671 molecular graphs and its ground-truth explanation is based on the particular combination of fluoride (F-) atoms and carbonyl (C=O) functional groups present in each molecule.
For the pattern detection task, we select 100 molecules from each class, aiming to identify whether a molecule includes both fluoride atoms and carbonyl functional groups.

\begin{table}[ht!]
  \centering
  \small
  \caption{Statistics of real-world datasets. Alkane* and Fluoride* represent the Alkane-Carbonyl and Fluoride-Carbonyl datasets, respectively.}
    \resizebox{\textwidth}{!}{\begin{tabular}{llllcccc}
    \toprule
    \textbf{Task} & \textbf{Domain} & \textbf{Name} & \textbf{Progress} & \textbf{Num} & \textbf{AVG. node} & \textbf{AVG. edge} & \textbf{AVG. density} \\
    \midrule
    \multirow{12}[2]{*}{Bi-Class.} & \multirow{9}[1]{*}{Molecule} & \multirow{3}[1]{*}{MUTAG} & discrimination & 150   & 15.67  & 16.79  & 0.0725  \\
          &       &       & classification & 38    & 15.68  & 16.76  & 0.0723  \\
          &       &       & overall & 188   & 15.67  & 16.78  & 0.0725  \\
          &       & \multirow{3}[0]{*}{\makecell[l]{OGBG-\\HIV}} & discrimination & 200   & 31.77  & 34.82  & 0.0445  \\
          &       &       & classification & 40    & 30.50  & 33.45  & 0.0467  \\
          &       &       & overall & 240   & 31.65  & 34.69  & 0.0447  \\
          &       & \multirow{3}[0]{*}{\makecell[l]{OGBG-\\BBBP}} & discrimination & 500   & 21.99  & 23.40  & 0.0595  \\
          &       &       & classification & 50    & 28.24  & 30.64  & 0.0429  \\
          &       &       & overall & 550   & 22.56  & 24.06  & 0.0580  \\
          & \multirow{3}[1]{*}{\makecell[l]{Social \\ Network}} & \multirow{3}[1]{*}{\makecell[l]{IMDB-\\BINARY}} & discrimination & 500   & 19.56  & 96.18  & 0.2457  \\
          &       &       & classification & 50    & 19.73  & 98.43  & 0.2466  \\
          &       &       & overall & 550   & 19.57  & 96.39  & 0.2458  \\
    \midrule
    \multirow{3}[2]{*}{\makecell[l]{Pattern \\ Detection}} & \multirow{3}[2]{*}{Chemicals} & Benzene & overall & 200   & 20.49  & 21.75  & 0.0547  \\
          &       & Alkane* & overall & 200   & 41.54  & 42.72  & 0.0259  \\
          &       & Fluoride* & overall & 200   & 21.46  & 22.65  & 0.0508  \\
    \midrule
    \multirow{9}[2]{*}{Multi-Class.} & \multirow{3}[1]{*}{Bioinformatics} & \multirow{3}[1]{*}{ENZYMES} & discrimination & 240   & 33.40  & 63.91  & 0.0731  \\
          &       &       & classification & 60    & 31.93  & 62.78  & 0.0774  \\
          &       &       & overall & 300   & 33.15  & 63.72  & 0.0738  \\
          & \multirow{3}[0]{*}{\makecell[l]{Computer\\Vision}} & \multirow{3}[0]{*}{Fingerprint} & discrimination & 300   & 2.92  & 2.13  & 0.2428  \\
          &       &       & classification & 60    & 2.93  & 2.20  & 0.2560  \\
          &       &       & overall & 360   & 2.92  & 2.14  & 0.2450  \\
          & \multirow{3}[1]{*}{\makecell[l]{Social \\ Network}} & \multirow{3}[1]{*}{\makecell[l]{IMDB-\\MULTI}} & discrimination & 300   & 12.95  & 67.21  & 0.3503  \\
          &       &       & classification & 60    & 12.62  & 52.90  & 0.3279  \\
          &       &       & overall & 360   & 12.89  & 64.83  & 0.3466  \\
    \bottomrule
    \end{tabular}}%
  \label{tab:app_dataset}%
\end{table}%

\subsection{Prompt}
\label{real-world-prompt}
We collect the prompt for different tasks in Table~\ref{prompt_table}. Further, we provide the molecule description prompts in Table~\ref{functional_group}. To enhance LLM understanding, we use "both" to combine these two descriptions. The detailed prompt is: "In the context of molecular biology, you have been provided with a pattern motif to compare against a test molecule graph. The pattern is a {Terminology-based description}, which also can be represented as {Topology-based description}. ... {Test-Molecular} ... Now, please determine whether the pattern motif exists in the molecule graph by selecting either "The pattern does exist" or "The pattern does not exist"."

\begin{table}[ht!]
\centering
\setlength{\tabcolsep}{2pt}
\caption{The descriptions of functional groups}
\small

\resizebox{\textwidth}{!}{\begin{tabular}{lp{10em}|p{20em}}
\toprule
Function group    & Terminology-based                                                                   & Topology-based                                                                                                                                                 \\
\midrule
Benzene (Cn)      & benzene ring                                                                        & (Node 0 Atom C, Node 1 Atom C), (Node 1 Atom C, Node 2 Atom C), (Node 2 Atom C, Node 3 Atom C), (Node 3 Atom C, Node 4 Atom C), (Node 4 Atom C, Node 5 Atom C) \\
\midrule
Alkane (C2nH2n+2) & Alkane Carbonyl which contains an unbranched alkane and a carbonyl functional group & (Node 0 Atom C, Node 1 Atom H), (Node 0 Atom C, Node 2 Atom H), (Node 0 Atom C, Node 3 Atom H), (Node 0 Atom C, Node 4 Atom H)                                 \\
\midrule
Fluoride (COF2)   & Fluoride Carbonyl which contains a fluoride and a carbonyl functional group         & (Node 0 Atom C, Node 1 Atom O), (Node 0 Atom C, Node 2 Atom F), (Node 0 Atom C, Node 3 Atom F)                                                       \\       \bottomrule  
\end{tabular}}
\label{functional_group}

\end{table}

Further, for the molecular graphs, we employ two different methods for graph description: adjacency list (A.L.) and edge list (E.L.). The conversion process involves three steps:
1. Using the function Chem.MolFromSmiles from the Chem library in Python, we extract the atoms and adjacency matrix of a given molecule from its SMILES representation;
2. The atom and adjacency matrix information is used to construct an undirected graph with the Python tool networkx.Graph;
3. The graph is then described using node and edge information in either adjacency list (A.L.) or edge list (E.L.) format.

Taking a molecular graph with the SMILES of "C(C(=O)[O-])NC(=[NH2+])N" as an example, the molecular graph can be converted to textual format as expressed in the following Table~\ref{al-el-description-mol}:

\begin{table}[ht!]
\centering
\small
\setlength{\tabcolsep}{2pt}
\caption{A.L. and E.L. on the molecular graph}
\label{al-el-description-mol}
\resizebox{\textwidth}{!}{\begin{tabular}{p{20em}|p{20em}}
\toprule
A.L                                                                                                                                                                                  & E.L.                                                                                                                                               \\
\midrule
G describes an undirected graph among 0, 1, 2, 3, 4, 5, 6, and 7. In this graph:\textbackslash{}nNode 0 (atom: C) is connected to nodes 1 (atom: C), 4 (atom: N).\textbackslash{}nNode 1 (atom: C) is connected to nodes 0 (atom: C), 2 (atom: O), 3 (atom: O).\textbackslash{}nNode 2 (atom: O) is connected to nodes 1 (atom: C).\textbackslash{}nNode 3 (atom: O) is connected to nodes 1 (atom: C).\textbackslash{}nNode 4 (atom: N) is connected to nodes 0 (atom: C), 5 (atom: C).\textbackslash{}nNode 5 (atom: C) is connected to nodes 4 (atom: N), 6 (atom: N), 7 (atom: N).\textbackslash{}nNode 6 (atom: N) is connected to nodes 5 (atom: C).\textbackslash{}nNode 7 (atom: N) is connected to nodes 5 (atom: C). & G describes an undirected graph among node 0, 1, 2, 3, 4, 5, 6, and 7.\textbackslash{}nNode 0 (atom: C) is connected to Node 1 (atom: C).\textbackslash{}nNode 0 (atom: C) is connected to Node 4 (atom: N).\textbackslash{}nNode 1 (atom: C) is connected to Node 2 (atom: O).\textbackslash{}nNode 1 (atom: C) is connected to Node 3 (atom: O).\textbackslash{}nNode 4 (atom: N) is connected to Node 5 (atom: C).\textbackslash{}nNode 5 (atom: C) is connected to Node 6 (atom: N).\textbackslash{}nNode 5 (atom: C) is connected to Node 7 (atom: N).
\\
\bottomrule
\end{tabular}}
\end{table}

\begin{table}[ht!]
    \centering
        \caption{Summarization of prompts}
    \resizebox{\textwidth}{!}{\begin{tabular}{b{2cm}|b{12cm}}
    \toprule
Task & Prompt \\
\midrule
\makecell[l]{Pattern\\ translation}
 & Generate a graph that includes only one \{terminology-based pattern description\}, the node number is 20. Each node at least has one edge. \\
\midrule
\makecell[l]{Pattern\\ detection}
 & Identify the occurrence patterns of the given motif in the graph. The given pattern is \{terminology-based (topology-based) pattern description \}. The graph is... \\
\midrule
\makecell[l]{Pattern\\ modification}
 & Modify the graph to include the given pattern \{terminology-based (topology-based) pattern description \}. The pattern is ... The graph is... \\
\midrule
\makecell[l]{Pattern\\ isomorphic\\ mapping}
 & Given a pair of isomorphic graphs, determine the node correspondence between the two graphs. The first graph is... The second graph is... \\
\midrule
\makecell[l]{K-core \\detection}
 & Determine the 3-core subgraphs in the graph. The graph is ... \\
\midrule
\makecell[l]{Frequent\\ subgraph\\ extraction}
 & Consider the following graphs and summarize the common patterns in them. No. 1. The graph is ... No. 2. The graph is... \\
\midrule
\makecell[l]{Discriminative\\ pattern \\learning}
 & You are provided two sets of graphs. The first set is: No. 1 The graph is... No. t. The graph is... The second set is: No. 1. The graph is... No. 2. The graph is...What are the differences between the two sets? Show the special pattern in Set1 (Set2) \\
\midrule
\makecell[l]{Classification \\} & You are an expert at classifying different types of graphs based on whether they contain specific patterns. The first type of graph includes patterns such as: No.1 the pattern is... No.2 The pattern is...The second type includes patterns like: No.1 the pattern is...No.2 The pattern is... Now, please identify which type the given graph is most likely to belong to. The graph is... \\
\bottomrule
\end{tabular}}

    \label{prompt_table}
\end{table}
\section{Real-world applications}

\subsection{Experimental settings}
The evaluation setup involves comparing GPT-4, GPT-4o, and O1mini models in real-world applications, following a pipeline similar to that proposed for synthetic datasets to prompt Large Language Models (LLMs) to comprehend graph patterns.
Details are depicted in Table~\ref{tab:app_hyper}.
In classification tasks, we sample five graphs from each class for comparison within an turns, requesting LLMs to discriminate significant graph patterns within a designated target set.
All classes within the dataset are utilized as the target set for the pattern discrimination process, and the discrimination process is performed twice for complementary insights.
After filtering out discriminate patterns, we prompt LLMs to classify test graphs individually.
An effective LLM-based graph reasoning technique is characterized by the ability to discriminate more patterns and achieve satisfactory performance in classifications, with accuracy serving as the key metric for quantitative assessment.
Notably, these settings here apply to both binary and multi-label classification tasks.

\begin{table}[ht!]
  \centering
  \caption{Hyper parameters for real-world tasks.}
    \resizebox{\textwidth}{!}{\begin{tabular}{c|cccc|ccc}
    \toprule
    \multirow{2}[2]{*}{Param} & \multicolumn{4}{c|}{Binary Classification}   & \multicolumn{3}{c}{Multi-label Classification} \\

          & MUTAG & \makecell{OGBG-\\HIV}   & \makecell{OGBG-\\BBBP}  & \makecell{IMDB-\\BINARY} & ENZYMES & Fingerprint & \makecell{IMDB-\\MULTI} \\
    \midrule
    \# Sampling & 10    & 5     & 5     & 5     & 5     & 5     & 5 \\
    \# Turns & 16    & 40    & 100   & 100   & 32    & 40    & 40 \\
    \bottomrule
    \end{tabular}}%
  \label{tab:app_hyper}%
\end{table}%






\section{The full results for pattern detection}

The whole results of terminology-based and topology-based pattern detection are shown in Table~\ref{text_detection_whole} and Table~\ref{topology_detection_whole}, respectively. 

\begin{table}[ht!]
    \centering
    \small
        \caption{The F1 score for terminology-based graph detection}
    \setlength{\tabcolsep}{2pt}
    \resizebox{\textwidth}{!}{\begin{tabular}{l|l|cccccccccc|cccccccc}
    \toprule
\multirow{3}{*}{ Scale } & \multirow{3}{*}{ Models } & \multicolumn{10}{c}{Undirected patterns} & \multicolumn{8}{c}{Dndirected patterns} \\
& & \multicolumn{2}{c}{Triangle} & \multicolumn{2}{c}{T-triangle} & \multicolumn{2}{c}{Square} & \multicolumn{2}{c}{Diamond} & \multicolumn{2}{c}{House} & V-S & & \multicolumn{2}{c}{FFL} & \multicolumn{2}{c}{FBL} & \multicolumn{2}{c}{D-Diamond} \\
& & A.L. & E.L & A.L. & E.L & A.L. & E.L & A.L. & E.L & A.L. & E.L & A.L. & E.L & A.L. & E.L & A.L. & E.L & A.L. & E.L \\
\midrule
\multirow{7}{*}{ Small } & GPT-4 & .632 & .581 & .151 & .107 & .069 & .026 & .113 & .113 & .006 & .003 & .352 & .380 & .389 & .406 & .191 & .279 & .448 & .396 \\
& GPT-4o & .748 & .702 & .210 & .250 & .149 & .132 & .317 & .309 & .008 & .053 & .477 & .490 & .478 & .450 & .410 & .357 & .411 & .360 \\
& Mixtral & .694 & .622 & .224 & .211 & .128 & .102 & .232 & .181 & .121 & .118 & .117 & .110 & .191 & .241 & .110 & .145 & .238 & .140 \\
& Llama & .681 & .693 & .189 & .193 & .118 & .106 & .185 & .195 & .042 & .021 & .486 & .527 & .358 & .388 & .373 & .349 & .425 & .367 \\
& Gemini & .712 & .725 & .207 & .274 & .150 & .176 & .230 & .262 & .104 & .225 & .287 & .281 & .263 & .267 & .194 & .193 & .149 & .111 \\
& Claude & .782 & .740 & .217 & .229 & .178 & .149 & .273 & .259 & .210 & .171 & .385 & .365 & .300 & .277 & .262 & .265 & .229 & .201 \\
& O1-mini & .828 & .832 & .593 & .578 & .335 & .316 & .663 & .684 & .054 & .066 & .584 & .600 & .605 & .602 & .567 & .584 & .504 & .488 \\
\midrule
\multirow{7}{*}{ Medium } & GPT-4 & .356 & .345 & .061 & .031 & .010 & .004 & .083 & .057 & .017 & .000 & .101 & .130 & .085 & .117 & .054 & .108 & .035 & .023 \\
& GPT-4o & .671 & .563 & .108 & .120 & .102 & .114 & .242 & .215 & .024 & .034 & .247 & .300 & .246 & .191 & .141 & .163 & .097 & .081 \\
& Mixtral & .474 & .478 & .104 & .118 & .006 & .032 & .132 & .125 & .000 & .007 & .040 & .053 & .049 & .149 & .060 & .081 & .007 & .003 \\
& Llama & .618 & .584 & .059 & .043 & .082 & .070 & .140 & .108 & .020 & .000 & .207 & .210 & .153 & .167 & .119 & .134 & .051 & .043 \\
& Gemini & .678 & .658 & .197 & .218 & .091 & .107 & .247 & .299 & .056 & .000 & .125 & .133 & .188 & .167 & .148 & .111 & .012 & .018 \\
& Claude & .725 & .673 & .144 & .157 & .130 & .161 & .205 & .210 & .038 & .016 & .130 & .141 & .181 & .173 & .104 & .052 & .006 & .009 \\
& O1-mini & .848 & .777 & .409 & .453 & .335 & .274 & .535 & .567 & .009 & .000 & .527 & .516 & .611 & .637 & .603 & .594 & .318 & .327 \\
\midrule
\multirow{7}{*}{ Large } & GPT-4 & .057 & .086 & .003 & .005 & .003 & .000 & .001 & .002 & .000 & .000 & .067 & .061 & .066 & .064 & .031 & .059 & .029 & .016 \\
& GPT-4o & .404 & .317 & .016 & .045 & .011 & .012 & .063 & .041 & .000 & .000 & .156 & .226 & .151 & .140 & .111 & .066 & .091 & .058 \\
& Mixtral & .319 & .233 & .016 & .014 & .005 & .006 & .064 & .056 & .000 & .022 & .016 & .030 & .046 & .155 & .034 & .055 & .008 & .012 \\
& Llama & .361 & .384 & .012 & .013 & .018 & .011 & .025 & .030 & .020 & .020 & .111 & .123 & .133 & .167 & .054 & .073 & .022 & .014 \\
& Gemini & .600 & .496 & .035 & .014 & .043 & .011 & .099 & .063 & .038 & .047 & .101 & .103 & .174 & .166 & .110 & .137 & .001 & .000 \\
& Claude & .320 & .278 & .013 & .020 & .020 & .023 & .051 & .044 & .020 & .020 & .074 & .054 & .147 & .138 & .066 & .078 & .004 & .007 \\
& O1-mini & .636 & .428 & .065 & .072 & .039 & .026 & .103 & .025 & .000 & .000 & .533 & .560 & .635 & .637 & .568 & .592 & .363 & .367 \\
\bottomrule
\end{tabular}}

    \label{text_detection_whole}
\end{table}

\begin{table}[ht!]
    \centering
    \small
    
    \setlength{\tabcolsep}{2pt}
        \caption{The F1 score for topology-based graph detection}
    \resizebox{\textwidth}{!}{\begin{tabular}{l|l|cccccccccc|cccccccc}
    \toprule
\multirow{3}{*}{ Scale } & \multirow{3}{*}{ Models } & \multicolumn{10}{c}{Undirected patterns} & \multicolumn{8}{c}{Dndirected patterns} \\
& & \multicolumn{2}{c}{Triangle} & \multicolumn{2}{c}{T-triangle} & \multicolumn{2}{c}{Square} & \multicolumn{2}{c}{Diamond} & \multicolumn{2}{c}{House} & V-S & & \multicolumn{2}{c}{FFL} & \multicolumn{2}{c}{FBL} & \multicolumn{2}{c}{D-Diamond} \\
& & A.L. & E.L & A.L. & E.L & A.L. & E.L & A.L. & E.L & A.L. & E.L & A.L. & E.L & A.L. & E.L & A.L. & E.L & A.L. & E.L \\
\midrule
\multirow{7}{*}{ Small } & GPT-4 & .653 & .584 & .051 & .056 & .041 & .039 & .162 & .110 & .002 & .010 & .415 & .475 & .346 & .429 & .423 & .422 & .477 & .397 \\
& GPT-4o & .717 & .706 & .194 & .143 & .118 & .111 & .233 & .227 & .005 & .015 & .537 & .489 & .296 & .383 & .407 & .287 & .285 & .269 \\
& Mixtral & .520 & .550 & .108 & .134 & .132 & .107 & .170 & .157 & .105 & .070 & .191 & .227 & .287 & .364 & .276 & .352 & .255 & .257 \\
& Llama & .545 & .678 & .122 & .145 & .076 & .074 & .112 & .157 & .005 & .097 & .476 & .488 & .266 & .361 & .382 & .386 & .388 & .283 \\
& Gemini & .651 & .696 & .166 & .192 & .133 & .175 & .177 & .171 & .122 & .192 & .304 & .171 & .266 & .230 & .194 & .205 & .203 & .126 \\
& Claude & .730 & .713 & .186 & .269 & .122 & .160 & .263 & .250 & .241 & .249 & .370 & .329 & .302 & .393 & .322 & .257 & .206 & .128 \\
& O1-mini & .832 & .821 & .617 & .499 & .365 & .364 & .633 & .574 & .107 & .249 & .588 & .599 & .678 & .670 & .572 & .566 & .492 & .477 \\
\midrule
\multirow{7}{*}{ Medium } & GPT-4 & .273 & .329 & .032 & .047 & .012 & .045 & .064 & .068 & .000 & .000 & .177 & .256 & .072 & .210 & .129 & .099 & .051 & .041 \\
& GPT-4o & .657 & .575 & .183 & .089 & .086 & .066 & .143 & .205 & .000 & .000 & .298 & .289 & .072 & .258 & .146 & .154 & .046 & .053 \\
& Mixtral & .414 & .430 & .057 & .074 & .039 & .051 & .051 & .139 & .000 & .000 & .040 & .127 & .188 & .180 & .089 & .095 & .021 & .029 \\
& Llama & .299 & .290 & .029 & .032 & .050 & .061 & .081 & .094 & .000 & .020 & .193 & .213 & .094 & .134 & .059 & .124 & .061 & .035 \\
& Gemini & .484 & .632 & .037 & .075 & .123 & .088 & .033 & .124 & .000 & .030 & .140 & .037 & .170 & .162 & .109 & .098 & .062 & .000 \\
& Claude & .671 & .651 & .100 & .159 & .097 & .127 & .247 & .253 & .040 & .040 & .310 & .166 & .204 & .233 & .084 & .092 & .019 & .035 \\
& O1-mini & .833 & .749 & .488 & .367 & .449 & .417 & .494 & .557 & .000 & .024 & .471 & .496 & .690 & .625 & .567 & .591 & .298 & .301 \\
\midrule
\multirow{7}{*}{ Large } & GPT-4 & .067 & .113 & .004 & .002 & .000 & .002 & .003 & .005 & .000 & .000 & .194 & .230 & .068 & .133 & .108 & .085 & .007 & .010 \\
& GPT-4o & .327 & .257 & .018 & .015 & .003 & .008 & .027 & .055 & .000 & .000 & .240 & .158 & .065 & .180 & .119 & .096 & .055 & .014 \\
& Mixtral & .177 & .262 & .006 & .010 & .006 & .009 & .046 & .025 & .000 & .000 & .043 & .086 & .144 & .184 & .021 & .037 & .006 & .013 \\
& Llama & .085 & .068 & .012 & .004 & .012 & .001 & .047 & .008 & .020 & .020 & .042 & .094 & .106 & .070 & .020 & .134 & .003 & .013 \\
& Gemini & .186 & .535 & .001 & .000 & .003 & .061 & .002 & .005 & .000 & .027 & .064 & .025 & .122 & .152 & .098 & .074 & .016 & .000 \\
& Claude & .454 & .286 & .011 & .015 & .010 & .015 & .048 & .072 & .000 & .040 & .188 & .162 & .151 & .170 & .107 & .080 & .011 & .040 \\
& O1-mini & .511 & .514 & .045 & .049 & .044 & .050 & .134 & .114 & .000 & .000 & .518 & .488 & .608 & .625 & .591 & .595 & .340 & .343 \\
\bottomrule
\end{tabular}}

    \label{topology_detection_whole}
\end{table}

The figures are shown as terminology-based pattern detection for all scales in Figure~\ref{text_pattern_ditect_all} and topology-based pattern detection for all scales in Figure~\ref{topo_pattern_ditect_all}

\begin{figure}[ht!]
    \centering
    \includegraphics[width=1\textwidth]{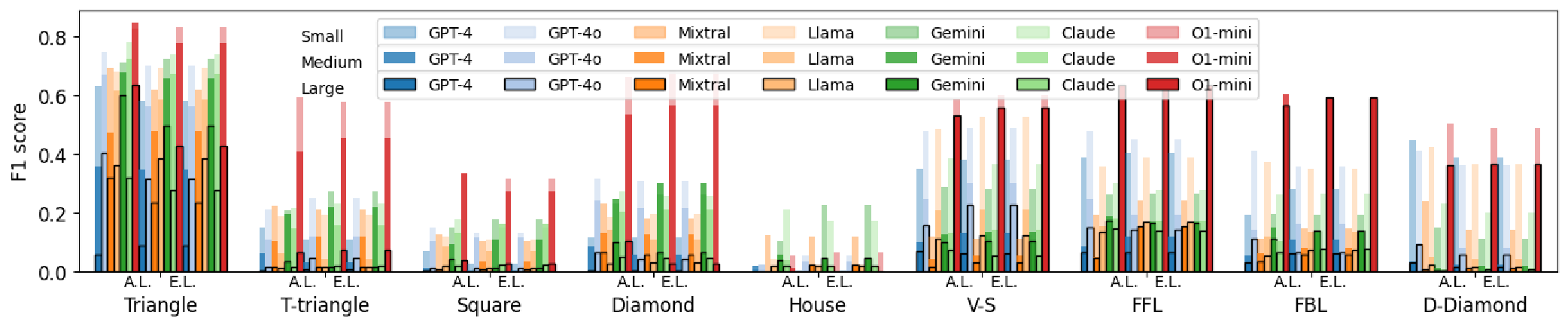}
    \caption{The F1 score of topology-based pattern detection (small and medium scale)}
    \label{text_pattern_ditect_all}
    \vspace{-0.4cm}
\end{figure}

\begin{figure}[ht!]
    \centering
    \includegraphics[width=1\textwidth]{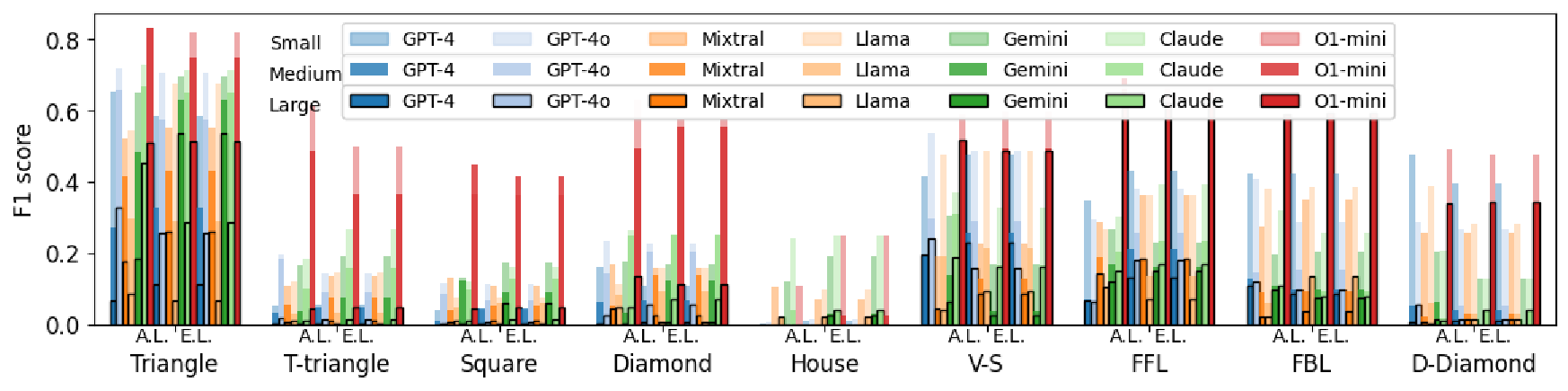}
    \caption{The F1 score of topology-based pattern detection (small and medium scale)}
    \label{topo_pattern_ditect_all}
    \vspace{-0.4cm}
\end{figure}

\section{Pseudo-codes}

\subsection{Frequent Subgraph Extraction}
The pseudo-code of the algorithm is shown in Algorithm~\ref{alg:frequent_subgraph}.

\begin{algorithm}[ht!]
\small
\caption{Frequent Subgraph Extraction}
\label{alg:frequent_subgraph}
\begin{algorithmic}[1]
\State \textbf{Input:} A graph dataset $G = \{g_1, g_2, \dots, g_n\}$, frequency threshold $f_{\mathrm{thres}}$
\State \textbf{Output:} Frequent patterns and accuracy

\For{iteration $i = 1$ to $100$}
    \State Randomly select 10 graphs from $G$ to form a subset $G_p$
    \State Prompt LLMs to extract the set of frequent patterns $P$ based on $G_{p}$
    \State  Initialize $\mathrm{Acc}_i = 0$
    \For{each pattern $p \in P$}
        \If{$p$ appears in more than $f_{\mathrm{thres}}$ of graphs in $G_p$}
            \State Increment $\mathrm{Acc}_i$
        \EndIf
    \EndFor
    \State Compute  $\mathrm{Acc}_i = \frac{\mathrm{Acc}_i}{\text{\# of patterns in } P}$
\EndFor
\State Compute overall accuracy $\mathrm{Acc} = \frac{\sum \mathrm{Acc}_{i}}{\text{\# of iterations}}$
\State \textbf{Return:} Extracted frequent patterns and accuracy
\end{algorithmic}
\end{algorithm}

\subsection{DISCRIMINATIVE PATTERN LEARNING}
The pseudo-code of the algorithm is shown in Algorithm~\ref{alg:discriminative_patterns}.

\begin{algorithm}[ht!]
\small
\caption{Discriminative Pattern Extraction and Evaluation}
\label{alg:discriminative_patterns}
\begin{algorithmic}[1]
\State \textbf{Input:} Two graph dataset $G^1 = \{g^1_1, g^1_2, \dots, g^1_{n_1}\}$ with label $L_1$ and 
 $G^2 = \{g^2_1, g^2_2, \dots, g^2_{n_2}\}$ with label $L_2$
\State \textbf{Output:} Discriminative patterns and Metrics

\State \textbf{Step 1: Pattern Extraction}
\For{each iteration $i$}
    \State Sample an equal number of graphs from $G^1$ and $G^2$ to form a balanced dataset $G_i$
    \State Prompt LLMs to identify discriminative patterns from $G_i$
    \State Add the extracted patterns into the set $P$
\EndFor

\State \textbf{Step 2: Pattern Filtering}
\For{each pattern $p \in P$}
    \State Compute the occurrence of $p$ in $G^1$ and  $G^2$ 
    \If {\big($\mathrm{occurrence}(p, G^1) \geq 90\%$ \textbf{and} $\mathrm{occurrence}(p, G^2) < 10\%$\big) \textbf{OR} 
          \big($\mathrm{occurrence}(p, G^2) \geq 90\%$ \textbf{and} $\mathrm{occurrence}(p, G^1) < 10\%$\big)}
        \State Retain $p$ as a discriminative pattern
    \EndIf
\EndFor
\State Obtain final discriminative pattern set $P_{\mathrm{final}}$

\State \textbf{Step 3: D.P. Computation}
\State Compute the discriminative pattern ratio as:
\[
\mathrm{D.P. = \frac{\# \mathrm{Discriminative\ patterns\ in\ } P_{\mathrm{final}}}{\# \mathrm{Extracted\ patterns\ in\ } P}}
\]
\State \textbf{Step 4: Classification Accuracy Computation}
\For{each new graph $g$ in the test set $G_{\mathrm{test}}$}
    \State  Prompt LLMs to predict the label of $g$ based on $P_{\mathrm{final}}$ 
\EndFor    
\State Compute the prediction accuracy $\mathrm{Acc}$ as the proportion of correctly predicted labels

\State \textbf{Return:} $P_{\mathrm{final}}$, $\mathrm{Acc}$, and $\mathrm{D.P.}$
\end{algorithmic}
\end{algorithm}

\section{Bird-view}
To show the model's ability across various tasks, we provide a bird-view of the models.

For each LLM, we select the best performance from either edge list or adjacency list graph descriptions and then calculate the models' average scores across small, medium, and large-scale datasets. Furthermore, we average the scores across different graph patterns. Finally, we rank the models for each task and provide an overall ranking.

\begin{table}[ht!]
\centering
\setlength{\tabcolsep}{2pt}
\caption{Model rank across various tasks}
\label{tab:ranks}
\resizebox{\textwidth}{!}{\begin{tabular}{lrrrrrrrrrr}
\toprule
\multirow{2}{*}{} & \multicolumn{3}{p{15em}}{Terminology-based patterns}                                                                           & \multicolumn{3}{p{15em}}{Topology-based patterns}                                                                            & \multicolumn{3}{p{15em}}{Data-driven patterns}                                                                                            & \multicolumn{1}{l}{\multirow{2}{*}{AVG. rank}} \\
\midrule
                  & \multicolumn{1}{p{4em}}{Pattern translation} & \multicolumn{1}{p{4em}}{Graph modification} & \multicolumn{1}{p{4em}}{pattern detection} & \multicolumn{1}{p{4em}}{Isomophic mapping} & \multicolumn{1}{p{4em}}{Graph modification} & \multicolumn{1}{p{4em}}{pattern detection} & \multicolumn{1}{p{4em}}{K-core} & \multicolumn{1}{p{4em}}{Frequent subgraph extraction} & \multicolumn{1}{p{4em}}{Discriminative pattern learning} & \multicolumn{1}{p{4em}}{}                           \\
                  \midrule
GPT-4             & 3                                       & 6                                      & 7                                     & 6                                     & 5                                      & 5                                     & 7                          & 1                                                & 4                                                   & 4.9                                            \\
GPT-4o            & 2                                       & 3                                      & 2                                     & 2                                     & 2                                      & 3                                     & 1                          & 4                                                & 1                                                   & 2.2                                            \\
Mixtral           & 7                                       & 4                                      & 6                                     & 4                                     & 3                                      & 7                                     & 5                          & 1                                                & 7                                                   & 4.9                                            \\
Llama             & 5                                       & 2                                      & 5                                     & 3                                     & 4                                      & 6                                     & 4                          & 5                                                & 5                                                   & 4.3                                            \\
Gemini            & 4                                       & 7                                      & 3                                     & 7                                     & 7                                      & 4                                     & 6                          & 6                                                & 3                                                   & 5.2                                            \\
Claude            & 6                                       & 5                                      & 4                                     & 1                                     & 6                                      & 2                                     & 2                          & 7                                                & 1                                                   & 3.8                                            \\
O1-mini           & 1                                       & 1                                      & 1                                     & 5                                     & 1                                      & 1                                     & 2                          & 1                                                & 6                                                   & 2.1                                           \\
\bottomrule
\end{tabular}}
\end{table}

In Table~\ref{tab:ranks}, O1-mini achieves an average rank of 2.1, outperforming other models in most cases while still facing challenges in isomorphic mapping and discriminative pattern learning tasks. Interestingly, GPT-4o demonstrates balanced performance across all tasks. Overall, we recommend using O1-mini, GPT-4o, and Claude for solving graph pattern tasks.

\section{ALGORITHM SUMMARY}
\label{sec:alg_summary}
To ensure a comprehensive evaluation that encompasses both top-performing results and identified mistakes from under-performing models, we meticulously examine 10\% of the samples across all LLMs utilized in diverse tasks.
The descriptions of specific procedures and their respective occurrence rates in varied scenarios are detailed in Table~\ref{algorithm_summary_1} and Table~\ref{algorithm_summary_2}.

\begin{table}[htbp]
  \centering
  \small
  \setlength{\tabcolsep}{2pt}
  \caption{Overview of algorithms employed by LLMs for graph pattern comprehension tasks}
    \begin{tabular}{c|cc|p{30em}}
    \toprule
    Method & Task & Item  & \multicolumn{1}{c}{Description} \\
    \midrule
    \multicolumn{1}{c|}{\multirow{16}[6]{*}{\makecell{Terminology\\-based}}} & \multicolumn{1}{c}{\multirow{3}[2]{*}{\makecell{Pattern\\Translation}}} & A     & Use external tools e.g. NetworkX \\
          &       & B     & Directly give an answer \\
          &       & C     & 1. Construct a structure based on the given patterns.; 2. Add random paths or trees among other nodes. \\
\cmidrule{2-4}          & \multicolumn{1}{c}{\multirow{4}[2]{*}{\makecell{Pattern\\Modification}}} & A     & 1. Select a set of nodes.2. Modify the subset to match the target pattern. \\
          &       & B     & A special algorithm on house 1. Identify a triangle; 2. Modify a square based on the triangle. \\
          &       & C     & A special algorithm on house 1. Identify a square; 2. Modify a triangle based on the square. \\
          &       & D     & Assume the graph already meets the requirements and avoid making any modifications. \\
\cmidrule{2-4}          & \multicolumn{1}{c}{\multirow{9}[2]{*}{\makecell{Pattern\\Detection}}} & A     & Directly give an answer \\
          &       & B     & Use external tools e.g. NetworkX \\
          &       & C     & Draw a figure of graph and give answer \\
          &       & D     & Traverse every node, and check whether this node and its neighbors can form the  pattern. \\
          &       & E     & Generate all possible node combinations and verify one by one. \\
          &       & F     & Traverse all possible edge combinations, and verify if they form the pattern. \\
          &       & G     & A special algorithm on house: Identify triangles as the roof first and check if the triangle has a square as its base. \\
          &       & H     & A special algorithm on the house: Identify squares as the base first and check if the square has a triangle as its roof. \\
          &       & I     & Only give the process but no answers \\
    \midrule
    \multicolumn{1}{c|}{\multirow{11}[6]{*}{\makecell{Topology\\-based}}} & \multicolumn{1}{c}{\multirow{4}[2]{*}{\makecell{Isomorphic\\Mapping}}} & A     & 1. Count the degrees of nodes in both graphs. 2. Identify nodes with matching degrees and assign them as mapping pairs. \\
          &       & B     & Directly map the edge connections. \\
          &       & C     & Provide example code (networkx package) only, without displaying the results. \\
          &       & D     & Using the VF2 algorithm. \\
\cmidrule{2-4}          & \makecell{Pattern\\Modification} & A     & 1. Select a set of nodes. 2. Adjust the subset to match the target pattern. \\
\cmidrule{2-4}          & \multicolumn{1}{c}{\multirow{6}[2]{*}{\makecell{Pattern\\Detection}}} & A     & Directly give an answer \\
          &       & B     & Traverse every node, and check whether this node and its neighbors can form the target pattern. \\
          &       & C     & Generate all combinations with the specified number of nodes and select those that meet the pattern definition. \\
          &       & D     & Traverse all edges, and determine if they form a pattern based on their common nodes. \\
          &       & E     & Only give the process but no answers \\
          &       & F     & Using external tools e.g. networkx \\
    \midrule
    \multicolumn{1}{c|}{\multirow{11}[6]{*}{Data-driven}} & \multicolumn{1}{c}{\multirow{4}[2]{*}{\makecell{Dense\\Subgraph\\Mining (K-core)}}} & A     & Using external tools e.g. networkx \\
          &       & B     & 1. Count the degrees. 2. Modify the graph. 3. Repeat the steps iteratively. \\
          &       & C     & Assume every node meets the degree requirement and take no further action. \\
          &       & D     & Collect nodes with a degree greater than 3. \\
\cmidrule{2-4}          & \multicolumn{1}{c}{\multirow{5}[2]{*}{\makecell{Frequency\\Subgraph\\Extraction}}} & A     & Directly give an answer \\
          &       & B     & 1.List common graph patterns, such as triangles, stars, and loops; 2. Verify whether these patterns exist in the given graphs. \\
          &       & C     & 1. Check the nodes by their IDs; 2. Examine the neighbors of each node to determine if their combinations appear in other graphs. \\
          &       & D     & Check the patterns by node IDs. For example. 1.Nodes connected to others with smaller IDs. 2. Nodes connected to only one other node. \\
          &       & E     & No solution \\
\cmidrule{2-4}          & \multicolumn{1}{c}{\multirow{2}[2]{*}{\makecell{Discriminative\\Pattern Learning}}} & A     & 1. Assume the patterns are common structures; 2. Identify which nodes exhibit these patterns; 3. Compare the differences in their connections \\
          &       & B     & Check the edge connection patterns to determine if they are identical. \\
    \bottomrule
    \end{tabular}%
  \label{algorithm_summary_1}%
\end{table}%

\begin{table}[htbp]
  \centering
  \small
  \setlength{\tabcolsep}{2pt}
  \caption{Percentage of occurrence for each algorithm during LLM-based graph pattern comprehension}
    \begin{tabular}{c|cc|ccccccc}
    \toprule
    Method & Task  & Item & Llama & Gemini & Mixstral & GPT-4 & GPT-4o & Claude & O1-mini \\
    \midrule
    \multirow{16}[6]{*}{Terminology-based} & \multicolumn{1}{c}{\multirow{3}[2]{*}{\makecell{Pattern\\Translation}}} & A     & 100.00\% & 0.00\% & 0.00\% & 0.00\% & 0.00\% & 0.00\% & 0.00\% \\
          &       & B     & 0.00\% & 0.00\% & 0.00\% & 0.00\% & 0.00\% & 100.00\% & 0.00\% \\
          &       & C     & 0.00\% & 100.00\% & 100.00\% & 100.00\% & 100.00\% & 0.00\% & 100.00\% \\
\cmidrule{2-10}          & \multicolumn{1}{c}{\multirow{4}[2]{*}{\makecell{Pattern\\Modification}}} & A     & 100.00\% & 85.00\% & 80.00\% & 75.00\% & 95.00\% & 80.00\% & 100.00\% \\
          &       & B     & 0.00\% & 10.00\% & 20.00\% & 0.00\% & 5.00\% & 20.00\% & 0.00\% \\
          &       & C     & 0.00\% & 0.00\% & 0.00\% & 25.00\% & 0.00\% & 0.00\% & 0.00\% \\
          &       & D     & 0.00\% & 15.00\% & 0.00\% & 0.00\% & 0.00\% & 0.00\% & 0.00\% \\
\cmidrule{2-10}          & \multicolumn{1}{c}{\multirow{9}[2]{*}{\makecell{Pattern\\Detection}}} & A     & 0.00\% & 34.00\% & 24.00\% & 4.00\% & 0.00\% & 24.00\% & 18.00\% \\
          &       & B     & 6.00\% & 0.00\% & 4.00\% & 0.00\% & 6.00\% & 0.00\% & 20.00\% \\
          &       & C     & 0.00\% & 4.00\% & 0.00\% & 0.00\% & 4.00\% & 0.00\% & 0.00\% \\
          &       & D     & 0.00\% & 4.00\% & 24.00\% & 24.00\% & 38.00\% & 12.00\% & 4.00\% \\
          &       & E     & 54.00\% & 12.00\% & 20.00\% & 16.00\% & 38.00\% & 60.00\% & 28.00\% \\
          &       & F     & 40.00\% & 8.00\% & 14.00\% & 16.00\% & 0.00\% & 4.00\% & 18.00\% \\
          &       & G     & 0.00\% & 0.00\% & 8.00\% & 2.00\% & 12.00\% & 0.00\% & 2.00\% \\
          &       & H     & 0.00\% & 2.00\% & 2.00\% & 0.00\% & 0.00\% & 0.00\% & 10.00\% \\
          &       & I     & 0.00\% & 36.00\% & 4.00\% & 38.00\% & 2.00\% & 0.00\% & 0.00\% \\
    \midrule
    \multirow{11}[6]{*}{Topology-based} & \multicolumn{1}{c}{\multirow{4}[2]{*}{\makecell{Isomorphic\\Mapping}}} & A     & 0.00\% & 20.00\% & 60.00\% & 70.00\% & 50.00\% & 0.00\% & 100.00\% \\
          &       & B     & 100.00\% & 10.00\% & 20.00\% & 0.00\% & 50.00\% & 100.00\% & 0.00\% \\
          &       & C     & 0.00\% & 70.00\% & 0.00\% & 30.00\% & 0.00\% & 0.00\% & 0.00\% \\
          &       & D     & 0.00\% & 0.00\% & 20.00\% & 0.00\% & 0.00\% & 0.00\% & 0.00\% \\
\cmidrule{2-10}          & \makecell{Pattern\\Modification} & A     & 100.00\% & 100.00\% & 100.00\% & 100.00\% & 100.00\% & 100.00\% & 100.00\% \\
\cmidrule{2-10}          & \multicolumn{1}{c}{\multirow{6}[2]{*}{\makecell{Pattern\\Detection}}} & A     & 0.00\% & 43.33\% & 10.00\% & 0.00\% & 0.00\% & 53.33\% & 10.00\% \\
          &       & B     & 0.00\% & 36.67\% & 16.67\% & 33.33\% & 36.67\% & 36.67\% & 36.67\% \\
          &       & C     & 66.67\% & 16.67\% & 33.33\% & 33.33\% & 43.33\% & 10.00\% & 50.00\% \\
          &       & D     & 33.33\% & 3.33\% & 40.00\% & 33.33\% & 0.00\% & 0.00\% & 0.00\% \\
          &       & E     & 0.00\% & 0.00\% & 0.00\% & 0.00\% & 20.00\% & 0.00\% & 0.00\% \\
          &       & F     & 0.00\% & 0.00\% & 0.00\% & 0.00\% & 0.00\% & 0.00\% & 3.33\% \\
    \midrule
    \multirow{11}[6]{*}{Data-driven} & \multicolumn{1}{c}{\multirow{4}[2]{*}{\makecell{Dense\\Subgraph\\Mining (K-core)}}} & A     & 0.00\% & 0.00\% & 0.00\% & 20.00\% & 0.00\% & 0.00\% & 0.00\% \\
          &       & B     & 0.00\% & 100.00\% & 0.00\% & 80.00\% & 100.00\% & 80.00\% & 40.00\% \\
          &       & C     & 0.00\% & 0.00\% & 0.00\% & 0.00\% & 0.00\% & 20.00\% & 60.00\% \\
          &       & D     & 100.00\% & 0.00\% & 100.00\% & 0.00\% & 0.00\% & 0.00\% & 0.00\% \\
\cmidrule{2-10}          & \multicolumn{1}{c}{\multirow{5}[2]{*}{\makecell{Frequency\\Subgraph\\Extraction}}} & A     & 0.00\% & 0.00\% & 0.00\% & 0.00\% & 0.00\% & 100.00\% & 0.00\% \\
          &       & B     & 100.00\% & 30.00\% & 10.00\% & 0.00\% & 40.00\% & 0.00\% & 100.00\% \\
          &       & C     & 0.00\% & 0.00\% & 0.00\% & 100.00\% & 60.00\% & 0.00\% & 0.00\% \\
          &       & D     & 0.00\% & 0.00\% & 90.00\% & 0.00\% & 0.00\% & 0.00\% & 0.00\% \\
          &       & E     & 0.00\% & 70.00\% & 0.00\% & 0.00\% & 0.00\% & 0.00\% & 0.00\% \\
\cmidrule{2-10}          & \multicolumn{1}{c}{\multirow{2}[2]{*}{\makecell{Discriminative\\Pattern Learning}}} & A     & 0.00\% & 10.00\% & 0.00\% & 0.00\% & 0.00\% & 0.00\% & 20.00\% \\
          &       & B     & 100.00\% & 90.00\% & 100.00\% & 0.00\% & 0.00\% & 100.00\% & 80.00\% \\
    \bottomrule
    \end{tabular}%
  \label{algorithm_summary_2}%
\end{table}%

\section{Experiments Analysis}
\subsection{Unintended Modification in Pattern Translation}
\label{sec:translate_details}

LLMs generally follow the logic of first creating the target pattern and then adding extra edges to expand it into a larger graph, as shown in Appendix~\ref{sec:alg_summary}. To illustrate this process, we analyze output from GPT-4 as an example of how LLMs perform graph translation in tailed-triangle generation:

{\small \parbox{13cm}{

Here's how the graph can be formulated:
 - **Tailed Triangle Edges**: (1, 2), (2, 3), (3, 1), (1, 4) - This forms the tailed triangle.
 - **Connecting Remaining Nodes**: To ensure each of the remaining nodes (5 to 20) has at least one edge and to avoid forming any additional tailed triangles, we can simply connect each new node to the next one in sequence. This will form a chain that can be connected back to one of the nodes in the tailed triangle to ensure the graph is undirected and connected.
 Here's the complete edge list: 
 1. (1, 2) 2. (2, 3) 3. (3, 1) 4. (1, 4) 5. (4, 5) 6. (5, 6) 7. (6, 7) 8. (7, 8) 9. (8, 9) 10. (9, 10) 11. (10, 11)
 12. (11, 12) 13. (12, 13) 14. (13, 14) 15. (14, 15) 16. (15, 16) 17. (16, 17) 18. (17, 18) 19. (18, 19) 20. (19, 20) 21. (20, 3) - This last edge connects the chain back to the triangle, ensuring the graph is undirected and connected.
}}

The connection (20, 3) is an unintended modification, as the graph is already connected without it. Another example is from Gemini while generating the VS pattern. The example is as follows:

{\small \parbox{13cm}{
Here's a directed graph with 20 nodes and a single V-structure, along with the edge list representation:
**V-Structure:** * Node 1 \texttt{->} Node 20 * Node 2 \texttt{->} Node 20
**Remaining Graph Structure (ensuring each node has at least one edge):**
We'll create a cycle involving the remaining nodes and then add a few extra edges for variety:
* 3 \texttt{->} 4 \texttt{->} 5 \texttt{->} 6 \texttt{->} 7 \texttt{->} 8 \texttt{->} 9 \texttt{->} 10 \texttt{->} 11 \texttt{->} 12 \texttt{->} 13 \texttt{->} 14 \texttt{->} 15 \texttt{->} 16 \texttt{->} 17 \texttt{->} 18 \texttt{->} 19 \texttt{->} 3 (This forms the cycle) *  1 \texttt{->} 5 (Additional edge)
*  10 \texttt{->} 18 (Additional edge)
}}

Nodes 1, 2, and 20 already form a VS structure, but Gemini creates a cycle and adds two additional edges to connect all the nodes in the graph. This results in the formation of another VS structure involving nodes (1, 4, 5).

\subsection{LLMs' behaviors in the modification case}
\label{sec:modification_details}

According to the algorithms summarized in Appendix~\ref{sec:alg_summary}, we find that most responses follow a strategy where they first select a subset of nodes equal in number to the target pattern and then apply modifications to match the given pattern. Second, we calculated the average degree of the nodes selected by the LLMs and summarized this information in Table~\ref{tab:degree}.

\begin{table}[]
\centering
\caption{Degree analysis in pattern modification tasks}
\label{tab:degree}
\begin{tabular}{lrrrrrrrr}
\toprule
Scale  & \multicolumn{1}{l}{AVG. degree} & \multicolumn{1}{l}{Llama} & \multicolumn{1}{l}{Gemini} & \multicolumn{1}{l}{Mixtral} & \multicolumn{1}{l}{GPT-4} & \multicolumn{1}{l}{GPT-4o} & \multicolumn{1}{l}{Claude} & \multicolumn{1}{l}{O1-mini} \\
\midrule
Small  & 3.32                            & 3.41                      & 2.60                       & 2.64                        & 3.66                      & 3.61                       & 3.75                       & 3.65                        \\
Medium & 2.15                            & 2.30                      & 2.98                       & 2.69                        & 2.39                      & 2.78                       & 2.95                       & 2.95                        \\
Large  & 2.36                            & 2.80                      & 2.89                       & 3.10                        & 2.38                      & 3.03                       & 3.39                       & 3.15     

\\
\bottomrule
\end{tabular}
\end{table}

We find that the nodes selected by LLMs consistently have higher degrees than the average node degree of the graph, particularly in Medium and Large scales. This suggests that LLMs are more likely to select higher-degree nodes for editing.

\subsection{Hallucination in  pattern detection}
\label{sec:detection_details}
The hallucination happened in graph understanding, which means adding or ignoring edges on the graphs. We analyze the hallucinations that occurred during terminology-based and topology-based pattern detections. After analyzing in Appendix~\ref{sec:alg_summary}, we find the behavior of LLMs on pattern detections are: (1) LLMs often provide a solution without actually executing the algorithm. This leads to failures, such as Gemini and GPT-4 in terminology-based pattern detection and GPT-4o in topology-based pattern detection. 
(2) In terminology-based pattern detection tasks, LLMs are more flexible to utilize different algorithms. For instance, LLMs can decompose a house pattern into separate triangle and square detections, transferring the problem into simpler tasks. 
(3) We observe that most LLMs prefer to list all possible combinations first and then check whether they match the target pattern. However, their accuracy varies significantly. To explore underlying failure reasons, we further calculate the precision and recall of detected patterns. These two metrics provide insight into the type of hallucinations that occur when LLMs perform pattern detection. A low precision suggests that LLMs hallucinate extra edges in the extracted patterns, whereas a low recall indicates that some edges in the input graph were overlooked by LLMs.

As shown in the Table~\ref{tab:pre_rec}, we find that LLMs achieve higher precision than recall. This indicates that most errors come from the overlooked edges. Furthermore, most LLMs show performance drops when transitioning from terminology-based to topology-based detection. The terminology helps reduce the hallucination.

\begin{table}[ht!]
\centering
\caption{Precision and recall on the large scale of triangle dataset}
\label{tab:pre_rec}
\resizebox{\textwidth}{!}{\begin{tabular}{llrrrrrrr}
\toprule
\multicolumn{2}{l}{}                           & \multicolumn{1}{l}{Llama} & \multicolumn{1}{l}{Gemini} & \multicolumn{1}{l}{Mixstral} & \multicolumn{1}{l}{GPT-4} & \multicolumn{1}{l}{GPT-4o} & \multicolumn{1}{l}{Claude} & \multicolumn{1}{l}{O1-mini} \\
\midrule
\multirow{2}{*}{Terminology-based} & Precision & 0.599                     & 0.622                      & 0.543                        & 0.449                     & 0.609                      & 0.763                      & 0.776                       \\
                                   & Recall    & 0.294                     & 0.493                      & 0.224                        & 0.039                     & 0.256                      & 0.304                      & 0.416                       \\
\multirow{2}{*}{Topology-based}    & Precision & 0.190                     & 0.484                      & 0.507                        & 0.409                     & 0.586                      & 0.764                      & 0.765                       \\
                                   & Recall    & 0.052                     & 0.308                      & 0.142                        & 0.050                     & 0.195                      & 0.249                      & 0.387                       \\
\multirow{2}{*}{Decrease}          & Precision & -0.409                    & -0.138                     & -0.036                       & -0.039                    & -0.023                     & 0.001                      & -0.011                      \\
                                   & Recall    & -0.242                    & -0.185                     & -0.082                       & 0.012                     & -0.062                     & -0.056                     & -0.029                     
\\
\bottomrule
\end{tabular}}
\end{table}

\subsection{degrees analysis in k-core}
\label{sec:degree_details}

We provide an example to explain why the 3 is important for LLMs in k-core solution. Here is an output of O1-mini:

{\small \parbox{13cm}{
After analyzing the given graph, we aim to identify the **3-core** subgraphs. A **k-core** of a graph is a maximal subgraph in which every vertex has at least degree **k**.
\#\#\# Step-by-Step Analysis:
1. **Initial Degrees:**
   - **Nodes with degree $\geq$ 3:** 0, 2, 3, 8, 9, 12, 13, 15, 19
   - **Nodes with degree $<$ 3:** 1, 4, 5, 6, 7, 10, 11, 14, 16, 17, 18, 20, 21
   
2. **Iterative Pruning:**
   - **First Removal:** Eliminate nodes with degree 3. This affects the remaining nodes' degrees.
   - **Recalculation:** After removal, the degrees of $>$ the remaining nodes drop below 3:
     - For example, node 0 initially has degree 3 but loses connections as other nodes are pruned.
   - **Subsequent Removals:** Continually removing nodes with degrees dropping below 3 leads to the eventual elimination of all nodes.

3. **Final Outcome:**
   - No subset of nodes remains where each node has at least degree 3.
   - Therefore, there are **no 3-core subgraphs** in the given graph.
}
}

We observe that LLMs do not calculate the exact node degree but instead evaluate whether it is greater or less than 3. Therefore, the critical factor for LLMs to make accurate predictions occurs when the node's degree is around 3.

\section{Analysis on Chain-of-Thought Prompting}
 We have conducted several experiments to illustrate the effect of Chain-of-Thought prompting on both terminology-based and topology-based pattern detection tasks using edge list descriptions. Specifically, we utilize 3 cases with the reasoning process as demonstrations to require LLMs to detect triangle and house patterns in small-scale graphs and triangle patterns in medium-scale graphs. The results are summarized in Table~\ref{cot_prompting}.
\begin{table}[ht!]
  \centering
  \small
  \setlength{\tabcolsep}{2pt}
  \caption{Performance comparison between zero-shot and CoT promptings in pattern detection.}
    \resizebox{\textwidth}{!}{\begin{tabular}{cc|ccc|ccc|c}
    \toprule
    \multirow{2}[2]{*}{Method} & \multirow{2}[2]{*}{Model} & \multicolumn{3}{c}{Zero-shot}   & \multicolumn{3}{c}{CoT} &  \\
          &       &  triangle(S)   &  house(S)  &  triangle(M)  &  triangle(S)  &  house(S)  &  triangle(M)  &  Avg. Increase  \\
    \midrule
    \multirow{2}[2]{*}{Terminology-based} &  Gemini   & 0.725  & 0.225  & 0.218  & 0.822  & 0.103  & 0.513  & 0.090  \\
          &  O1-mini  & 0.832  & 0.066  & 0.409  & 0.811  & 0.011  & 0.727  & 0.081  \\
    \midrule
    \multirow{2}[2]{*}{Topology-based} & Gemini  & 0.651  & 0.122  & 0.484  & 0.767  & 0.263  & 0.596  & 0.123  \\
          &   O1-mini   & 0.832  & 0.000  & 0.833  & 0.736  & 0.075  & 0.756  & -0.033  \\
    \bottomrule
    \end{tabular}}%
  \label{cot_prompting}%
\end{table}%

Overall, these results indicate that CoT prompting generally enhances pattern detection performance, particularly in terminology-based tasks. However, the effect of CoT is limited when the models already acheive high scores in the zero-shot setting. This aligns with previous studies that in-context learning does not always enhance the ability of LLMs to understand graph structures\cite{NLGraph,talklikeagraph}.

\section{Split Tables}
Here we split the tables to make them clearer to read.

\begin{table}[ht!]
    \centering
    \caption{Terminology-based graph modification in A.L. description}
\resizebox{\textwidth}{!}{
    \setlength{\tabcolsep}{2pt}
    \begin{tabular}{c|ccc|ccc|ccc|ccc|ccc|ccc|ccc}
    \toprule
\multicolumn{1}{c}{\multirow{2}{*}{A.L.}} & \multicolumn{3}{c}{Gemini} & \multicolumn{3}{c}{Mixtral} & \multicolumn{3}{c}{Llama} & \multicolumn{3}{c}{Claude} & \multicolumn{3}{c}{GPT-4} & \multicolumn{3}{c}{GPT-4o} & \multicolumn{3}{c}{O1-mini} \\
\multicolumn{1}{c}{} & S. & M. & L. & S. & M. & L. & S. & M. & L. & S. & M. & L. & S. & M. & L. & S. & M. & L. & S. & M. & L. \\
\midrule
S $\rightarrow$ H  & 0.06 & 0.28 & 0.20 & 0.32 & 0.58 & \textbf{0.64} & 0.34 & 0.48 & 0.40 & 0.26 & 0.50 & 0.48 & 0.14 & 0.12 & 0.14 & 0.28 & 0.36 & 0.44 & \textbf{0.38} & \textbf{0.64} & 0.58 \\
S $\rightarrow$ D  & 0.53 & 0.48 & 0.32 & 0.77 & 0.64 & 0.62 & 0.50 & 0.32 & 0.50 & 0.74 & 0.50 & 0.26 & 0.18 & 0.16 & 0.20 & 0.71 & 0.68 & 0.54 & \textbf{0.95} & \textbf{0.98} & \textbf{0.94} \\
D $\rightarrow$ S  & 0.27 & 0.42 & 0.28 & 0.29 & 0.28 & 0.40 & 0.51 & 0.70 & 0.64 & 0.54 & 0.84 & 0.68 & 0.15 & 0.12 & 0.24 & 0.41 & 0.44 & 0.28 & \textbf{0.77} & \textbf{0.72} & \textbf{0.88} \\
F $\rightarrow$ B  & 0.30 & 0.58 & 0.52 & 0.29 & 0.56 & 0.48 & 0.29 & 0.60 & 0.50 & 0.36 & 0.56 & 0.62 & 0.26 & 0.42 & 0.46 & 0.18 & 0.24 & 0.20 & \textbf{0.40} & \textbf{0.76} & \textbf{0.74} \\
\bottomrule
\end{tabular}
}

    \label{text_edit_AL}
\end{table}

\begin{table}[ht!]
    \centering
    \caption{Terminology-based graph modification in E.L. description
    }
\resizebox{\textwidth}{!}{
    \setlength{\tabcolsep}{2pt}
    \begin{tabular}{c|ccc|ccc|ccc|ccc|ccc|ccc|ccc}
    \toprule
\multicolumn{1}{c}{\multirow{2}{*}{E.L.}} & \multicolumn{3}{c}{Gemini} & \multicolumn{3}{c}{Mixtral} & \multicolumn{3}{c}{Llama} & \multicolumn{3}{c}{Claude} & \multicolumn{3}{c}{GPT-4} & \multicolumn{3}{c}{GPT-4o} & \multicolumn{3}{c}{O1-mini} \\
\multicolumn{1}{c}{} & S. & M. & L. & S. & M. & L. & S. & M. & L. & S. & M. & L. & S. & M. & L. & S. & M. & L. & S. & M. & L. \\
\midrule
S $\rightarrow$ H
 & 0.13 & 0.36 & 0.40 & 0.31 & 0.48 & \textbf{0.72} & 0.26 & 0.48 & 0.54 & 0.28 & 0.48 & 0.14 & 0.09 & 0.18 & 0.14 & 0.34 & 0.46 & 0.58 & \textbf{0.37} & \textbf{0.74} & 0.64 \\
S $\rightarrow$ D 
 & 0.53 & 0.28 & 0.22 & 0.82 & 0.76 & 0.86 & 0.42 & 0.36 & 0.44 & 0.78 & 0.64 & 0.34 & 0.42 & 0.38 & 0.32 & 0.79 & 0.72 & 0.64 & \textbf{0.97} & \textbf{0.92} & \textbf{0.96} \\
D $\rightarrow$ S 
 & 0.15 & 0.52 & 0.48 & 0.29 & 0.26 & 0.32 & 0.43 & 0.56 & 0.40 & 0.38 & 0.64 & 0.52 & 0.14 & 0.24 & 0.32 & 0.36 & 0.28 & 0.20 & \textbf{0.78} & \textbf{0.82} & \textbf{0.88} \\
F $\rightarrow$ B
 & 0.22 & 0.42 & 0.44 & 0.28 & 0.52 & 0.36 & 0.28 & 0.58 & 0.50 & 0.24 & 0.56 & 0.34 & 0.15 & 0.40 & 0.16 & 0.18 & 0.18 & 0.22 & \textbf{0.34} & \textbf{0.76} & \textbf{0.64} \\
\bottomrule
\end{tabular}
}
\label{text_edit_EL}
\end{table}

\begin{table}[ht!]
    \centering
    \caption{Terminology-based graph detection in A.L. description}
\resizebox{\textwidth}{!}{
\begin{tabular}{c|c|ccccc|cccc}
    \toprule
    \multirow{2}[2]{*}{Scale} & \multirow{2}[2]{*}{Model} & \multicolumn{5}{c}{Undirected patterns } & \multicolumn{4}{c}{Dndirected patterns} \\
          &       & Triangle & T-triangle & Square & Diamond & House & V-S   & FFL   & FBL   & D-Diamond \\
    \midrule
    \multirow{7}[2]{*}{Small} & GPT-4 & .632  & .151  & .069  & .113  & .006  & .352  & .389  & .191  & .448 \\
          & GPT-4o & .748  & .210  & .149  & .317  & .008  & .477  & .478  & .410  & .411 \\
          & Mixtral & .694  & .224  & .128  & .232  & .121  & .117  & .191  & .110  & .238 \\
          & Llama & .681  & .189  & .118  & .185  & .042  & .486  & .358  & .373  & .425 \\
          & Gemini & .712  & .207  & .150  & .230  & .104  & .287  & .263  & .194  & .149 \\
          & Claude & .782  & .217  & .178  & .273  & .210  & .385  & .300  & .262  & .229 \\
          & O1-mini & .828  & .593  & .335  & .663  & .054  & .584  & .605  & .567  & .504 \\
    \midrule
    \multirow{7}[2]{*}{Medium} & GPT-4 & .356  & .061  & .010  & .083  & .017  & .101  & .085  & .054  & .035 \\
          & GPT-4o & .671  & .108  & .102  & .242  & .024  & .247  & .246  & .141  & .097 \\
          & Mixtral & .474  & .104  & .006  & .132  & .000  & .040  & .049  & .060  & .007 \\
          & Llama & .618  & .059  & .082  & .140  & .020  & .207  & .153  & .119  & .051 \\
          & Gemini & .678  & .197  & .091  & .247  & .056  & .125  & .188  & .148  & .012 \\
          & Claude & .725  & .144  & .130  & .205  & .038  & .130  & .181  & .104  & .006 \\
          & O1-mini & .848  & .409  & .335  & .535  & .009  & .527  & .611  & .603  & .318 \\
    \midrule
    \multirow{7}[2]{*}{Large} & GPT-4 & .057  & .003  & .003  & .001  & .000  & .067  & .066  & .031  & .029 \\
          & GPT-4o & .404  & .016  & .011  & .063  & .000  & .156  & .151  & .111  & .091 \\
          & Mixtral & .319  & .016  & .005  & .064  & .000  & .016  & .046  & .034  & .008 \\
          & Llama & .361  & .012  & .018  & .025  & .020  & .111  & .133  & .054  & .022 \\
          & Gemini & .600  & .035  & .043  & .099  & .038  & .101  & .174  & .110  & .001 \\
          & Claude & .320  & .013  & .020  & .051  & .020  & .074  & .147  & .066  & .004 \\
          & O1-mini & .636  & .065  & .039  & .103  & .000  & .533  & .635  & .568  & .363 \\
    \bottomrule
    \end{tabular}}%
    \vspace{-5cm}

    \label{text_detection_whole_AL}
\end{table}

\begin{table}[ht!]
    \centering
    \small
        \caption{Terminology-based graph detection in E.L. description}
    \resizebox{\textwidth}{!}{
    \begin{tabular}{c|c|ccccc|cccc}
    \toprule
    \multirow{2}[2]{*}{Scale} & \multirow{2}[2]{*}{Model} & \multicolumn{5}{c}{Undirected patterns } & \multicolumn{4}{c}{Dndirected patterns} \\
          &       & Triangle & T-triangle & Square & Diamond & House & V-S   & FFL   & FBL   & D-Diamond \\
    \midrule
    \multirow{7}[2]{*}{Small} & GPT-4 & .581  & .107  & .026  & .113  & .003  & .380  & .406  & .279  & .396 \\
          & GPT-4o & .702  & .250  & .132  & .309  & .053  & .490  & .450  & .357  & .360 \\
          & Mixtral & .622  & .211  & .102  & .181  & .118  & .110  & .241  & .145  & .140 \\
          & Llama & .693  & .193  & .106  & .195  & .021  & .527  & .388  & .349  & .367 \\
          & Gemini & .725  & .274  & .176  & .262  & .225  & .281  & .267  & .193  & .111 \\
          & Claude & .740  & .229  & .149  & .259  & .171  & .365  & .277  & .265  & .201 \\
          & O1-mini & .832  & .578  & .316  & .684  & .066  & .600  & .602  & .584  & .488 \\
    \midrule
    \multirow{7}[2]{*}{Medium} & GPT-4 & .345  & .031  & .004  & .057  & .000  & .130  & .117  & .108  & .023 \\
          & GPT-4o & .563  & .120  & .114  & .215  & .034  & .300  & .191  & .163  & .081 \\
          & Mixtral & .478  & .118  & .032  & .125  & .007  & .053  & .149  & .081  & .003 \\
          & Llama & .584  & .043  & .070  & .108  & .000  & .210  & .167  & .134  & .043 \\
          & Gemini & .658  & .218  & .107  & .299  & .000  & .133  & .167  & .111  & .018 \\
          & Claude & .673  & .157  & .161  & .210  & .016  & .141  & .173  & .052  & .009 \\
          & O1-mini & .777  & .453  & .274  & .567  & .000  & .516  & .637  & .594  & .327 \\
    \midrule
    \multirow{7}[2]{*}{Large} & GPT-4 & .086  & .005  & .000  & .002  & .000  & .061  & .064  & .059  & .016 \\
          & GPT-4o & .317  & .045  & .012  & .041  & .000  & .226  & .140  & .066  & .058 \\
          & Mixtral & .233  & .014  & .006  & .056  & .022  & .030  & .155  & .055  & .012 \\
          & Llama & .384  & .013  & .011  & .030  & .020  & .123  & .167  & .073  & .014 \\
          & Gemini & .496  & .014  & .011  & .063  & .047  & .103  & .166  & .137  & .000 \\
          & Claude & .278  & .020  & .023  & .044  & .020  & .054  & .138  & .078  & .007 \\
          & O1-mini & .428  & .072  & .026  & .025  & .000  & .560  & .637  & .592  & .367 \\
    \bottomrule
    \end{tabular}}%

    \label{text_detection_whole_EL}
\end{table}

\begin{table}[ht!]
    \centering
    \caption{Topology-based graph modification in A.L. description }
\resizebox{\textwidth}{!}{
    \setlength{\abovecaptionskip}{0.1cm}
    \setlength{\tabcolsep}{2pt}
\begin{tabular}{c|ccc|ccc|ccc|ccc|ccc|ccc|ccc}
\toprule
\multicolumn{1}{c}{\multirow{2}{*}{A.L.}} & \multicolumn{3}{c}{Gemini} & \multicolumn{3}{c}{Mixtral} & \multicolumn{3}{c}{Llama} & \multicolumn{3}{c}{Claude} & \multicolumn{3}{c}{GPT-4} & \multicolumn{3}{c}{GPT-4o} & \multicolumn{3}{c}{O1-mini} \\
\multicolumn{1}{c}{} & S. & M. & L. & S. & M. & L. & S. & M. & L. & S. & M. & L. & S. & M. & L. & S. & M. & L. & S. & M. & L. \\
\midrule
S $\rightarrow$ H  & 0.20 & 0.34 & 0.20 & \textbf{0.78} & 0.58 & 0.66 & 0.57 & 0.52 & 0.48 & 0.45 & 0.64 & 0.44 & 0.34 & 0.56 & 0.56 & 0.67 & 0.70 & 0.68 & 0.61 & \textbf{0.82} & \textbf{0.80} \\
S $\rightarrow$ D  & 0.45 & 0.34 & 0.32 & 0.78 & 0.66 & 0.60 & 0.65 & 0.50 & 0.74 & 0.69 & 0.62 & 0.60 & 0.48 & 0.56 & 0.40 & 0.75 & 0.68 & 0.66 & \textbf{0.82} & \textbf{0.76} & \textbf{0.86} \\
D $\rightarrow$ S  & 0.12 & 0.24 & 0.12 & \textbf{0.58} & 0.42 & 0.60 & 0.29 & 0.26 & 0.40 & 0.12 & 0.26 & 0.36 & 0.16 & 0.48 & 0.40 & 0.41 & 0.40 & 0.48 & 0.55 & \textbf{0.56} & \textbf{0.64} \\
F $\rightarrow$ B  & 0.28 & 0.38 & 0.50 & 0.52 & 0.40 & 0.64 & 0.55 & 0.68 & 0.70 & 0.37 & 0.54 & 0.40 & 0.48 & 0.68 & 0.76 & \textbf{0.57} & \textbf{0.80} & \textbf{0.80} & 0.41 & 0.76 & 0.68 \\

\bottomrule
\end{tabular}
}
\label{topo_graph_edit_AL}
\end{table}

\begin{table}[ht!]
    \centering

    \setlength{\abovecaptionskip}{0.1cm}
    \caption{Topology-based graph modification in E.L. description }
\resizebox{\textwidth}{!}{
    \setlength{\tabcolsep}{2pt}
\begin{tabular}{c|ccc|ccc|ccc|ccc|ccc|ccc|ccc}
\toprule
\multicolumn{1}{c}{\multirow{2}{*}{E.L.}} & \multicolumn{3}{c}{Gemini} & \multicolumn{3}{c}{Mixtral} & \multicolumn{3}{c}{Llama} & \multicolumn{3}{c}{Claude} & \multicolumn{3}{c}{GPT-4} & \multicolumn{3}{c}{GPT-4o} & \multicolumn{3}{c}{O1-mini} \\
\multicolumn{1}{c}{} & S. & M. & L. & S. & M. & L. & S. & M. & L. & S. & M. & L. & S. & M. & L. & S. & M. & L. & S. & M. & L. \\
\midrule
S $\rightarrow$ H 
 & 0.22 & 0.42 & 0.26 & \textbf{0.66} & \textbf{0.68} & 0.54 & 0.45 & 0.50 & 0.48 & 0.38 & 0.54 & 0.42 & 0.24 & 0.60 & 0.52 & 0.36 & 0.52 & 0.60 & 0.46 & \textbf{0.68} & \textbf{0.74} \\
S $\rightarrow$ D
 & 0.48 & 0.40 & 0.52 & 0.67 & 0.70 & 0.62 & 0.69 & 0.46 & 0.62 & 0.72 & 0.74 & 0.74 & 0.67 & 0.62 & 0.20 & 0.67 & 0.68 & 0.74 & \textbf{0.88} & \textbf{0.86} & \textbf{0.84} \\
D $\rightarrow$ S
 & 0.11 & 0.32 & 0.32 & \textbf{0.40} & 0.32 & 0.44 & 0.29 & 0.30 & 0.20 & 0.11 & 0.22 & 0.48 & 0.06 & 0.30 & 0.20 & 0.20 & 0.18 & \textbf{0.36} & 0.24 & \textbf{0.36} & 0.28 \\
F $\rightarrow$ B
 & 0.20 & 0.24 & 0.32 & 0.34 & 0.52 & 0.40 & 0.41 & 0.50 & 0.44 & 0.29 & 0.44 & 0.34 & 0.27 & 0.46 & 0.48 & \textbf{0.44} & 0.62 & 0.52 & 0.32 & \textbf{0.66} & \textbf{0.66} \\

\bottomrule
\end{tabular}
}
\label{topo_graph_edit_EL}
\end{table}

\begin{table}[ht!]
    \centering
        \caption{Topology-based graph pattern detection in A.L. description}
    \resizebox{\textwidth}{!}{\begin{tabular}{c|c|ccccc|cccc}
    \toprule
    \multirow{2}[2]{*}{Scale} & \multirow{2}[2]{*}{Model} & \multicolumn{5}{c}{Undirected patterns } & \multicolumn{4}{c}{Dndirected patterns} \\
          &       & Triangle & T-triangle & Square & Diamond & House & V-S   & FFL   & FBL   & D-Diamond \\
    \midrule
    \multirow{7}[2]{*}{Small} & GPT-4 & .653  & .051  & .041  & .162  & .002  & .415  & .346  & .423  & .477 \\
          & GPT-4o & .717  & .194  & .118  & .233  & .005  & .537  & .296  & .407  & .285 \\
          & Mixtral & .520  & .108  & .132  & .170  & .105  & .191  & .287  & .276  & .255 \\
          & Llama & .545  & .122  & .076  & .112  & .005  & .476  & .266  & .382  & .388 \\
          & Gemini & .651  & .166  & .133  & .177  & .122  & .304  & .266  & .194  & .203 \\
          & Claude & .730  & .186  & .122  & .263  & .241  & .370  & .302  & .322  & .206 \\
          & O1-mini & .832  & .617  & .365  & .633  & .107  & .588  & .678  & .572  & .492 \\
    \midrule
    \multirow{7}[2]{*}{Medium} & GPT-4 & .273  & .032  & .012  & .064  & .000  & .177  & .072  & .129  & .051 \\
          & GPT-4o & .657  & .183  & .086  & .143  & .000  & .298  & .072  & .146  & .046 \\
          & Mixtral & .414  & .057  & .039  & .051  & .000  & .040  & .188  & .089  & .021 \\
          & Llama & .299  & .029  & .050  & .081  & .000  & .193  & .094  & .059  & .061 \\
          & Gemini & .484  & .037  & .123  & .033  & .000  & .140  & .170  & .109  & .062 \\
          & Claude & .671  & .100  & .097  & .247  & .040  & .310  & .204  & .084  & .019 \\
          & O1-mini & .833  & .488  & .449  & .494  & .000  & .471  & .690  & .567  & .298 \\
    \midrule
    \multirow{7}[2]{*}{Large} & GPT-4 & .067  & .004  & .000  & .003  & .000  & .194  & .068  & .108  & .007 \\
          & GPT-4o & .327  & .018  & .003  & .027  & .000  & .240  & .065  & .119  & .055 \\
          & Mixtral & .177  & .006  & .006  & .046  & .000  & .043  & .144  & .021  & .006 \\
          & Llama & .085  & .012  & .012  & .047  & .020  & .042  & .106  & .020  & .003 \\
          & Gemini & .186  & .001  & .003  & .002  & .000  & .064  & .122  & .098  & .016 \\
          & Claude & .454  & .011  & .010  & .048  & .000  & .188  & .151  & .107  & .011 \\
          & O1-mini & .511  & .045  & .044  & .134  & .000  & .518  & .608  & .591  & .340 \\
    \bottomrule
    \end{tabular}}%

    \label{topology_detection_whole_AL}
\end{table}

\begin{table}[ht!]
    \centering
        \caption{Topology-based graph pattern detection in E.L. description}
    \resizebox{\textwidth}{!}{\begin{tabular}{c|c|ccccc|cccc}
    \toprule
    \multirow{2}[2]{*}{Scale} & \multirow{2}[2]{*}{Model} & \multicolumn{5}{c}{Undirected patterns } & \multicolumn{4}{c}{Dndirected patterns} \\
          &       & Triangle & T-triangle & Square & Diamond & House & V-S   & FFL   & FBL   & D-Diamond \\
    \midrule
    \multirow{7}[2]{*}{Small} & GPT-4 & .584  & .056  & .039  & .110  & .010  & .475  & .429  & .422  & .397 \\
          & GPT-4o & .706  & .143  & .111  & .227  & .015  & .489  & .383  & .287  & .269 \\
          & Mixtral & .550  & .134  & .107  & .157  & .070  & .227  & .364  & .352  & .257 \\
          & Llama & .678  & .145  & .074  & .157  & .097  & .488  & .361  & .386  & .283 \\
          & Gemini & .696  & .192  & .175  & .171  & .192  & .171  & .230  & .205  & .126 \\
          & Claude & .713  & .269  & .160  & .250  & .249  & .329  & .393  & .257  & .128 \\
          & O1-mini & .821  & .499  & .364  & .574  & .249  & .599  & .670  & .566  & .477 \\
    \midrule
    \multirow{7}[2]{*}{Medium} & GPT-4 & .329  & .047  & .045  & .068  & .000  & .256  & .210  & .099  & .041 \\
          & GPT-4o & .575  & .089  & .066  & .205  & .000  & .289  & .258  & .154  & .053 \\
          & Mixtral & .430  & .074  & .051  & .139  & .000  & .127  & .180  & .095  & .029 \\
          & Llama & .290  & .032  & .061  & .094  & .020  & .213  & .134  & .124  & .035 \\
          & Gemini & .632  & .075  & .088  & .124  & .030  & .037  & .162  & .098  & .000 \\
          & Claude & .651  & .159  & .127  & .253  & .040  & .166  & .233  & .092  & .035 \\
          & O1-mini & .749  & .367  & .417  & .557  & .024  & .496  & .625  & .591  & .301 \\
    \midrule
    \multirow{7}[2]{*}{Large} & GPT-4 & .113  & .002  & .002  & .005  & .000  & .230  & .133  & .085  & .010 \\
          & GPT-4o & .257  & .015  & .008  & .055  & .000  & .158  & .180  & .096  & .014 \\
          & Mixtral & .262  & .010  & .009  & .025  & .000  & .086  & .184  & .037  & .013 \\
          & Llama & .068  & .004  & .001  & .008  & .020  & .094  & .070  & .134  & .013 \\
          & Gemini & .535  & .000  & .061  & .005  & .027  & .025  & .152  & .074  & .000 \\
          & Claude & .286  & .015  & .015  & .072  & .040  & .162  & .170  & .080  & .040 \\
          & O1-mini & .514  & .049  & .050  & .114  & .000  & .488  & .625  & .595  & .343 \\
    \bottomrule
    \end{tabular}}%

    \label{topology_detection_whole_EL}
\end{table}

\begin{table}[ht!]
    \centering
     \caption{Frequency subgraph extraction in A.L. description}
    \resizebox{\textwidth}{!}{
    \setlength{\tabcolsep}{2pt}
    \begin{tabular}{l|ccc|ccc|ccc|ccc|ccc|ccc|ccc}
    \toprule
\multicolumn{1}{c}{\multirow{2}{*}{A.L.}} & \multicolumn{3}{c}{GPT-4} & \multicolumn{3}{c}{GPT-4o} & \multicolumn{3}{c}{Mixtral} & \multicolumn{3}{c}{Llama} & \multicolumn{3}{c}{Gemini} & \multicolumn{3}{c}{Claude} & \multicolumn{3}{c}{O1-mini} \\
\multicolumn{1}{c}{} & S. & M. & L. & S. & M. & L. & S. & M. & L. & S. & M. & L. & S. & M. & L. & S. & M. & L. & S. & M. & L. \\
\midrule
Triangle & 1.00 & 1.00 & 1.00 & 0.84 & 0.88 & 0.88 & 0.86 & 0.88 & 1.00 & 0.62 & 0.68 & 0.87 & 1.00 & 1.00 & 0.59 & 0.43 & 0.26 & 0.09 & 1.00 & 1.00 & 1.00 \\
Square & 1.00 & 1.00 & 1.00 & 1.00 & 0.86 & 1.00 & 0.69 & 0.95 & 1.00 & 0.61 & 0.94 & 1.00 & 0.75 & 0.47 & 0.40 & 0.30 & 0.37 & 0.07 & 1.00 & 1.00 & 1.00 \\
Diamond & 1.00 & 1.00 & 1.00 & 0.89 & 1.00 & 0.50 & 0.92 & 1.00 & 1.00 & 0.43 & 0.52 & 0.96 & 1.00 & 1.00 & 0.83 & 0.34 & 0.33 & 0.14 & 1.00 & 1.00 & 1.00 \\
House & 1.00 & 1.00 & 1.00 & 1.00 & 1.00 & 0.98 & 0.73 & 1.00 & 1.00 & 0.53 & 0.66 & 0.93 & 1.00 & 0.78 & 0.10 & 0.31 & 0.32 & 0.22 & 1.00 & 1.00 & 1.00 \\
\midrule
FFL & 1.00 & 1.00 & 1.00 & 0.91 & 0.64 & 1.00 & 0.89 & 1.00 & 0.86 & 0.67 & 0.68 & 0.93 & 0.89 & 0.00 & 1.00 & 0.33 & 0.32 & 0.26 & 1.00 & 1.00 & 1.00 \\
FBL & 1.00 & 1.00 & 1.00 & 0.78 & 0.89 & 0.90 & 1.00 & 1.00 & 0.59 & 0.77 & 0.67 & 0.87 & 1.00 & 1.00 & 1.00 & 0.46 & 0.22 & 0.40 & 1.00 & 1.00 & 0.85 \\
D-Diamond & 1.00 & 1.00 & 1.00 & 0.81 & 0.95 & 1.00 & 1.00 & 1.00 & 0.67 & 0.54 & 0.73 & 0.91 & 0.67 & 0.00 & 1.00 & 0.41 & 0.42 & 0.35 & 1.00 & 1.00 & 1.00 \\
\bottomrule
\end{tabular}
}
    \label{frequency_table_AL}
\end{table}

\begin{table}[ht!]
    \centering
     \caption{Frequency subgraph extraction in E.L. description}
    \resizebox{\textwidth}{!}{
    \setlength{\tabcolsep}{2pt}
    \begin{tabular}{l|ccc|ccc|ccc|ccc|ccc|ccc|ccc}
    \toprule
\multicolumn{1}{c}{\multirow{2}{*}{E.L.}} & \multicolumn{3}{c}{GPT-4} & \multicolumn{3}{c}{GPT-4o} & \multicolumn{3}{c}{Mixtral} & \multicolumn{3}{c}{Llama} & \multicolumn{3}{c}{Gemini} & \multicolumn{3}{c}{Claude} & \multicolumn{3}{c}{O1-mini} \\
\multicolumn{1}{c}{} & S. & M. & L. & S. & M. & L. & S. & M. & L. & S. & M. & L. & S. & M. & L. & S. & M. & L. & S. & M. & L. \\
\midrule
Triangle & 1.00 & 1.00 & 1.00 & 0.94 & 1.00 & 0.81 & 1.00 & 1.00 & 1.00 & 0.94 & 0.95 & 1.00 & 1.00 & 0.57 & 0.29 & 0.84 & 0.58 & 0.32 & 1.00 & 1.00 & 1.00 \\
Square & 1.00 & 1.00 & 1.00 & 0.97 & 0.71 & 0.98 & 1.00 & 1.00 & 1.00 & 0.94 & 0.93 & 0.91 & 0.80 & 1.00 & 0.17 & 0.93 & 0.67 & 0.21 & 1.00 & 1.00 & 1.00 \\
Diamond & 1.00 & 1.00 & 1.00 & 1.00 & 0.87 & 0.93 & 1.00 & 1.00 & 1.00 & 0.99 & 0.93 & 0.91 & 1.00 & 0.38 & 0.52 & 0.99 & 0.68 & 0.32 & 1.00 & 1.00 & 1.00 \\
House & 1.00 & 1.00 & 1.00 & 1.00 & 1.00 & 0.76 & 1.00 & 1.00 & 1.00 & 1.00 & 0.74 & 0.95 & 1.00 & 0.33 & 0.18 & 1.00 & 0.55 & 0.44 & 1.00 & 1.00 & 1.00 \\
\midrule
FFL & 1.00 & 1.00 & 1.00 & 0.96 & 0.67 & 0.92 & 1.00 & 1.00 & 1.00 & 0.92 & 1.00 & 1.00 & 1.00 & 1.00 & 1.00 & 0.68 & 0.52 & 0.34 & 1.00 & 1.00 & 1.00 \\
FBL & 1.00 & 1.00 & 1.00 & 0.98 & 0.75 & 0.83 & 1.00 & 1.00 & 1.00 & 0.96 & 1.00 & 1.00 & 1.00 & 1.00 & 1.00 & 0.84 & 0.45 & 0.31 & 1.00 & 1.00 & 1.00 \\
D-Diamond & 1.00 & 1.00 & 1.00 & 0.85 & 0.88 & 0.66 & 1.00 & 0.60 & 1.00 & 0.78 & 1.00 & 1.00 & 1.00 & 0.75 & 1.00 & 0.61 & 0.46 & 0.28 & 1.00 & 1.00 & 1.00 \\
\bottomrule
\end{tabular}
}
    \label{frequency_table_EL}
\end{table}

\end{document}